\documentclass{article}
\pdfoutput=1 
\usepackage{arxiv}
\renewcommand{\undertitle}{Under consideration for publication in Theory and Practice of Logic Programming (TPLP)}
\renewcommand{\headeright}{Preprint}
\renewcommand{\shorttitle}{Foundations of MT-PDCL} 
\usepackage{amsmath}
\usepackage{amssymb}
\usepackage{amsthm}
\usepackage{graphicx}
\usepackage[T1]{fontenc}
\usepackage{tabularx}

\newtheorem{proposition}{Proposition}
\newtheorem{theorem}{Theorem}
\newtheorem{definition}{Definition}

\title{Foundations of MT-PDCL: Measure-Theoretic Probabilistic Definite Clause Logic}
\author{
  Costin B\u{a}dic\u{a} \\
  Department of Computers and Information Technology \\
  University of Craiova \\
  Craiova, Romania \\
  \texttt{costin.badica@edu.ucv.ro} 
  \And
  Amelia B\u{a}dic\u{a} \\
  Department of Business Informatics \\
  University of Craiova \\
  Craiova, Romania \\
  \texttt{amelia.badica@edu.ucv.ro}
}
\date{July 2026}

\begin{document}

\maketitle

\begin{abstract}
Standard probabilistic logic programming frameworks typically rely on grounding logic programs into discrete propositional representations. This operational requirement restricts exact inference to finite domains and discrete probability distributions. In this paper, we introduce Measure-Theoretic Probabilistic Definite Clause Logic (MT-PDCL), a generalized foundational framework that eliminates this finite-domain restriction. By explicitly defining stochastic variables over bounded index domains and equipping the interpretation space with standard Borel $\sigma$-algebras, MT-PDCL allows logical variables to operate natively over continuous measurable spaces. Building on Continuous Distribution Semantics, MT-PDCL models probabilistic rules as mutually independent causal events. However, rather than aggregating these derivations via finite boolean circuits, declarative entailment is formally defined through exact Lebesgue integration over the continuous measure space. We introduce a continuous immediate consequence operator that unifies the integration of continuous prior distributions with the evaluation of exact continuous observations. We demonstrate that this approach replaces the combinatorial bottleneck of discrete grounding with exact, algebraic, and structurally differentiable inference. While this transition trades discrete combinatorics for the geometric curse of dimensionality, it achieves the expressive power of continuous probabilistic models while preserving the pure declarative syntax of definite clause logic.
\end{abstract}

\section{Introduction}

The foundational approach of assigning probabilities to logical sentences via probability measures over possible worlds was established by Nilsson \cite{nilsson1986probabilistic}. Building on this, probabilistic logic programming languages such as PRISM \cite{sato1997prism} and ProbLog \cite{deraedt2007problog} introduced distribution semantics to evaluate the probability of logical queries. This approach assumes all base probabilistic facts are mutually independent, thereby defining a single, explicit probability distribution over all possible worlds. However, calculating exact probabilities across these worlds creates a significant operational bottleneck. To evaluate a derived query, the system must sum the probabilities of the specific discrete states where the query holds true. This operation forces the system to ground the first-order logic programs into discrete propositional representations, such as Binary Decision Diagrams (BDDs) \cite{bryant1986graph} or Sentential Decision Diagrams (SDDs) \cite{darwiche2011sdd}. This reliance on propositionalization inherently limits exact inference to finite domains and discrete probability distributions.

Recent advancements in neural-symbolic artificial intelligence have introduced frameworks that integrate differentiable components with logic programming, such as DeepProbLog \cite{manhaeve2021deepproblog} and TensorLog \cite{cohen2020tensorlog}. While these systems successfully process continuous representations, such as neural network parameters or feature tensors, their underlying logical semantics typically remain tied to finite domains. For instance, DeepProbLog evaluates neural network outputs as probabilistic facts, but its logical inference engine still relies on discrete propositionalization.

There is a critical need for a probabilistic logic framework that allows logical variables to operate natively over continuous spaces, completely bypassing the need for finite boolean grounding, while remaining fully differentiable for contemporary deep learning integration.

To address this gap, we introduce Measure-Theoretic Probabilistic Definite Clause Logic (MT-PDCL). This paper makes the following primary contributions:
\begin{itemize}
    \item \textbf{Continuous Declarative Semantics:} Equipping the interpretation space and explicitly bounded stochastic index domains with standard Borel $\sigma$-algebras \cite{kechris1995classical} removes the finite-domain restriction. Logical variables act as deterministic placeholders that natively evaluate over measurable continuous spaces (e.g., bounded subsets of $\mathbb{R}^n$).
    \item \textbf{Continuous Distribution Semantics:} Extending classical distribution semantics to infinite domains, MT-PDCL defines possible worlds via independent continuous variables and probabilistic causal events. Declarative entailment is formally defined as the exact Lebesgue integral of the logically valid possible worlds, evaluated over the geometric boundaries defined by the logic program.
    \item \textbf{Continuous Consequence Operator:} We generalize the immediate consequence operator ($\mathbb{T}_{\mathcal{K}}$) to operate algebraically over continuous probability densities. It utilizes exact integration for continuous marginalization and a continuous Noisy-OR formulation to aggregate disjunctive derivations, mathematically bypassing discrete grounding.
    \item \textbf{Differentiable Neural-Symbolic Foundation:} We formally prove that this continuous operator converges to a well-defined least fixed point and is everywhere differentiable, providing a structurally transparent, non-negative gradient signal that ensures stable routing for neural-symbolic deep learning.
\end{itemize}

The remainder of this paper is structured as follows. Section \ref{sec:syntax} defines the formal syntax of MT-PDCL. Section \ref{sec:semantics} establishes the exact measure-theoretic semantics, building from continuous variable domains and index bounds to formal declarative entailment and continuous query semantics. Section \ref{sec:examples_declarative} provides concrete illustrative examples of this declarative entailment. Section \ref{sec:operational_semantics} introduces the continuous immediate consequence operator and proves its theoretical properties, including fixed-point convergence, operational completeness bounds, and neural-symbolic differentiability. Section \ref{sec:related_work} contextualizes our approach against existing probabilistic and neural-symbolic frameworks. Section \ref{sec:execution_traces} evaluates the dynamic consequence operator through concrete continuous execution traces, comparing these examples against existing approaches to highlight specific operational limitations. Section \ref{sec:limitations} outlines the theoretical boundaries of analytical computability and tractability, and Section \ref{sec:conclusion} concludes.

\section{Syntax of MT-PDCL}\label{sec:syntax}
To formalize MT-PDCL, we define the recursive syntax of its terms, predicates, and rules, ensuring a strict separation between logical reasoning and continuous probability generation. 

Let $\mathcal{L}(\mathcal{C}, \mathcal{R})$ be a first-order language where $\mathcal{C}$ is a finite set of constant symbols and $\mathcal{R}$ is a finite set of predicate symbols. To distinguish user-defined logical relations from standard mathematical operators and continuous probability distributions, $\mathcal{R}$ is partitioned into three disjoint subsets: $\mathcal{R} = \mathcal{R}_u \cup \mathcal{O} \cup \mathcal{R}_s$. 
\begin{itemize}
    \item $\mathcal{R}_u$ contains the user-defined logical relation symbols.
    \item $\mathcal{O}$ contains the built-in base predicate symbols representing standard algebraic and set-theoretic operators (e.g., $<, \le, =, \in$).
    \item $\mathcal{R}_s$ contains the stochastic predicate symbols, which act as the interface to the continuous environment.
\end{itemize}

Following standard logic programming semantics, variables are scoped locally to their clauses. We assume the program is \textit{standardized apart}, meaning all variables are renamed to be unique across different clauses. Let $\mathcal{V}$ represent this global finite set of uniquely named logical variables. In MT-PDCL, these variables are deterministic placeholders used for standard unification; they are not random variables themselves.

\textbf{Terms:} A term $t \in \mathcal{T}$ is either a logical variable $X \in \mathcal{V}$ or a constant symbol $c \in \mathcal{C}$. To support mixed continuous and discrete domains natively, the set of constants $\mathcal{C}$ includes explicit numeric values (e.g., $30.0$, $4.2$), categorical symbols (e.g., \texttt{severe\_hot}, \texttt{critical}), as well as mathematical intervals (e.g., $[0, 10]$) and finite sets (e.g., $\{1, 2, 3\}$). This allows built-in operators to act directly on geometric boundaries and discrete collections. Thus, the set of all terms is $\mathcal{T} = \mathcal{C} \cup \mathcal{V}$.

\textbf{Atomic Formulas (User-Defined Predicates):} An atomic formula is constructed using a user-defined relation symbol $r \in \mathcal{R}_u$ of arity $n$, applied to a tuple of terms: $r(t_1, \dots, t_n)$.

\textbf{Base Predicates (Built-in Operators):} A base predicate is constructed using a built-in mathematical operator $o \in \mathcal{O}$ applied to a tuple of terms, typically written in standard infix notation (e.g., $t_1 < t_2$, $t_1 \le t_2$, $t_1 = t_2$, or $t \in [2,5]$).

\textbf{Stochastic Predicates:} A stochastic predicate $r \in \mathcal{R}_s$ queries the continuous environment. To ensure mathematically valid integration, its arguments are partitioned into deterministic indices (inputs) and generative variables (outputs) using a semicolon: $r(I_1, \dots, I_k ; X_1, \dots, X_m)$. The indices $\mathbf{I}$ must be bound to ground terms prior to evaluation, serving as the unique identity key for the independent random draw. The variables $\mathbf{X}$ must remain unbound until the measure space generates the continuous constants.

\textbf{Rule Bodies ($B$):} A rule body $B$ is a conjunction of atomic formulas, base predicates, and stochastic predicates, defined recursively:
\begin{itemize}
    \item Any atomic formula, base predicate, or stochastic predicate is a valid body $B$.
    \item If $B_1$ and $B_2$ are valid bodies, their conjunction $B_1 \wedge B_2$ is a valid body.
\end{itemize}

Following standard definite clause logic, a rule consists of one positive literal in the head and a conjunction of positive literals in the body. Consequently, negation ($\neg$) is prohibited in both the head and the body, and disjunction ($\lor$) is prohibited in the head. Furthermore, stochastic predicates from $\mathcal{R}_s$ and built-in operators from $\mathcal{O}$ can only appear in the rule body; they are prohibited from appearing in the head.

\textbf{Clauses ($\Phi$):} An MT-PDCL logic program $\Phi$ is a finite set of clauses. Let $H$ be an atomic formula (the rule head) constructed from $\mathcal{R}_u$, and $B$ be a valid rule body. A probabilistic clause takes one of the following exact forms, where $p \in [0, 1]$ is a probability parameter:
\begin{itemize}
    \item \textbf{Probabilistic Rule:} $p :: H \leftarrow B$
    \item \textbf{Probabilistic Fact:} $p :: H$
\end{itemize}
Standard deterministic rules ($H \leftarrow B$) and facts ($H$) are supported as syntactic sugar for probabilistic clauses where the parameter is defined as $p = 1.0$.

\textbf{The Knowledge Base ($\mathcal{K}$):} 
To enforce the separation between logical reasoning and continuous probability, an MT-PDCL Knowledge Base is defined as a tuple $\mathcal{K} = \langle \mathcal{M}_s, \Phi \rangle$. 
\begin{itemize}
    \item $\Phi$ is the logic program containing the definite clauses.
    \item $\mathcal{M}_s$ is the Stochastic Measure Mapping. Mathematically, it is a function that maps a stochastic predicate $r \in \mathcal{R}_s$ and a ground index tuple $\mathbf{i}$ to a specific probability measure $\mathcal{D}$ over the domain. Because measure theory natively unifies continuous and discrete probabilities, $\mathcal{D}$ represents either a continuous probability density (e.g., Normal, Uniform) or a discrete probability mass function (e.g., Categorical, Bernoulli, Poisson). Syntactically, this mapping is declared in the program using the form $r(\mathbf{I}; \mathbf{X}) \sim \mathcal{D}$. To mathematically bind the domain of continuous indices, this declaration can optionally include an explicit geometric bound, written as $r(\mathbf{I}; \mathbf{X}) \sim \mathcal{D} \quad \textbf{over} \quad \mathbf{I} \in \Omega_I$.
\end{itemize}

\textbf{Queries ($Q$):} A query $Q$ is a goal clause consisting of a finite conjunction of literals. Syntactically, it is written as a headless clause: 
$$ \leftarrow L_1 \land L_2 \land \dots \land L_k $$
where each literal $L_i$ can be a user-defined atomic formula, a built-in base predicate, or a stochastic predicate. The conjunction forming the query is therefore structurally identical to a valid rule body $B$. Let $\mathbf{V} = \langle V_1, \dots, V_n \rangle$ be the tuple of all free logical variables occurring in $Q$. In standard logical terms, these free variables are implicitly existentially quantified.

\section{Continuous Distribution Semantics}\label{sec:semantics}

\subsection{Interpretations over Continuous Domains}\label{sec:inter_pw}
To rigorously define probability measures and integration, the universal domain of discourse $D$ cannot be an arbitrary set. We require $D$ to be a measurable space $(D, \mathcal{F}_D)$, where $\mathcal{F}_D$ is a standard Borel $\sigma$-algebra over $D$. To natively support both continuous parameters and discrete categorical properties, $D$ is constructed as the disjoint union of a continuous space (e.g., $\mathbb{R}^n$) and a countable discrete set of semantic values (denoted $D_{discrete}$, containing categorical strings and natural numbers $\mathbb{N}$). This construction satisfies the mathematical requirements for $D$, as the disjoint union of a standard Borel space and a countable discrete set remains a standard Borel space \cite{kechris1995classical}. 

An interpretation $\mathcal{I}$ of $\mathcal{L}$ in the measurable domain $D$ is defined by two mappings:
\begin{itemize}
    \item An interpretation function for constant symbols $\phi_{\mathcal{I}}: \mathcal{C} \rightarrow D$, where $\phi_{\mathcal{I}}(c)$ is the interpretation of the constant $c \in \mathcal{C}$. To natively support continuous logic, MT-PDCL permits numeric values (e.g., $120.5$) to act directly as constants in the syntax, for which $\phi_{\mathcal{I}}$ acts as the identity function.
    \item An interpretation function $\pi_{\mathcal{I}}(r): D^n \rightarrow \{1, 0\}$ for any predicate symbol $r \in \mathcal{R}_u \cup \mathcal{O}$ of arity $n$. For a user-defined relation $r \in \mathcal{R}_u$, $\pi_{\mathcal{I}}(r)$ acts as an indicator function. For a built-in base predicate $r \in \mathcal{O}$ (such as $<, \le, \in$), $\pi_{\mathcal{I}}(r)$ is universally fixed across all interpretations to its standard mathematical definition.
\end{itemize}

This mapping establishes the truth values for \textbf{ground atomic formulas}. Classical logical satisfaction for a specific interpretation $\mathcal{I}$ evaluating a ground atom is defined deterministically as:
$$ \mathcal{I} \models r(c_1, \dots, c_n) \text{ if and only if } \pi_{\mathcal{I}}(r)(\phi_{\mathcal{I}}(c_1), \dots, \phi_{\mathcal{I}}(c_n)) = 1 $$

Logic programs do not explicitly enumerate all facts; instead, they generate them dynamically using universally quantified rules and variables. This allows the logical satisfaction of complex formulas to build compositionally upon these base atoms. Furthermore, variables in MT-PDCL evaluate natively over continuous domains without discrete grounding. The subsequent subsections detail the exact mechanics for resolving these rules, bindings, and queries within a possible world.

To compute valid probabilities, we impose a strict topological requirement on these interpretations: the subset of $D^n$ for which $\pi_{\mathcal{I}}(r)$ evaluates to $1$ must be a valid, measurable set within the product $\sigma$-algebra $\mathcal{F}_D^{\otimes n}$. This ensures that geometric regions defined by logical rules inherently possess a well-defined mathematical volume.

\subsection{The Stochastic Measure Space}
\label{sec:continuous_variables}
In standard probabilistic logic programming, it is often erroneously implied that logical variables themselves hold probabilities. In MT-PDCL, we enforce a strict separation. Logical variables in $\mathcal{V}$ are deterministic, syntactic placeholders used for unification. All probabilistic uncertainty including both continuous and discrete is isolated within the stochastic measure space, generated by the mapping $\mathcal{M}_s$.

Recall that a stochastic predicate $r \in \mathcal{R}_s$ is partitioned into deterministic indices $\mathbf{I}$ and stochastic outputs $\mathbf{X}$ of arity $m$. Let $\mathcal{I}_r$ be the set of all valid ground instantiations of the index tuple $\mathbf{I}$ over the domain $D$, and let $\Delta(D^m)$ be the space of all valid probability measures over the $m$-dimensional output space.

The mapping $\mathcal{M}_s : \mathcal{R}_s \times \mathcal{I}_r \rightarrow \Delta(D^m)$ formally assigns a probability measure to every unique ground index of a stochastic predicate. For every stochastic predicate $r$ and every unique ground index $\mathbf{i} \in \mathcal{I}_r$, the output of the mapping $\mathcal{M}_s(r, \mathbf{i})$ defines a mathematically independent random variable (or tuple of $m$ variables). While we primarily refer to these as continuous variables, measure theory treats discrete categorical variables as a direct special case using probability measures concentrated on specific points (Dirac measures). Thus, a stochastic predicate can natively generate either continuous coordinates or discrete categorical states distributed according to its assigned measure.

While the set of ground indices $\mathcal{I}_r$ may be countably or uncountably infinite, the mapping $\mathcal{M}_s$ is defined compactly in the knowledge base using parameterized probability distributions. The parameters of the assigned distribution $\mathcal{D}$ can be deterministic functions of the index tuple $\mathbf{I}$, denoted syntactically as $r(\mathbf{I}; \mathbf{X}) \sim \mathcal{D}(\theta(\mathbf{I}))$. For example, a continuous sensor reading whose mean drifts over time $T$ can be compactly declared as $\text{reading}(T; X) \sim \text{Normal}(\mu = 0.5 \cdot T, \sigma = 1.0)$. Similarly, a discrete weather generator can be declared as $\text{weather}(T; W) \sim \text{Categorical}(\{\text{sunny}: 0.7, \text{rain}: 0.3\})$. Likewise, a hybrid mapping where a continuous index directly parameterizes a discrete countable distribution can be declared as $\text{fault\_count}(T; K) \sim \text{Poisson}(\lambda = 2.0 \cdot T)$. Mathematically, this allows a single finite syntactic declaration to define an infinite family of distributions.

We define the stochastic state space $\Omega_s$ as the Cartesian product of the domains for all these independent random variables, across all stochastic predicates $r \in \mathcal{R}_s$ and all their ground indices $\mathbf{i} \in \mathcal{I}_r$. The set of ground indices (e.g., time steps $t \in \mathbb{N}$) can be countably infinite, which generally makes $\Omega_s$ an infinite-dimensional product space. The Kolmogorov Extension Theorem ensures that this infinite product of standard Borel spaces, equipped with independent probability measures, forms a valid measurable space $(\Omega_s, \mathcal{F}_{\Omega_s})$ with a well-defined joint probability measure $\mu_s$.

A specific continuous state $\sigma \in \Omega_s$ rigidly dictates the outcome of every stochastic fact in the environment. It maps every grounded index tuple $\mathbf{i}$ of every stochastic predicate $r$ to a specific, constant element $x \in D$. 

During logical evaluation, when the engine encounters a stochastic predicate $r(\mathbf{i}; \mathbf{X})$, the deterministic index $\mathbf{i}$ acts as a lookup key. The state $\sigma$ provides the concrete constant generated for that key, and standard deterministic unification simply binds the empty logical variables $\mathbf{X}$ to that constant. This mechanism completely bypasses the measure-zero paradox while effortlessly supporting unbounded recursion, as each recursive step provides a distinct index (e.g., $T=1, T=2$) mapping to a distinct, independent continuous dimension in $\Omega_s$.

\subsection{Defining the Index Domain ($\mathcal{I}_r$)}
\label{sec:index_domain}
When a logic program evaluates a rule such as $p(\mathbf{X}) \leftarrow r(\mathbf{I}; \mathbf{X})$ where the index $\mathbf{I}$ is unbound by any other predicate in the body, the continuous consequence operator must aggregate over the entire domain of $\mathbf{I}$. To mathematically evaluate this integral, the framework must systematically establish the valid index domain $\mathcal{I}_r$ and its associated base measure. 

MT-PDCL establishes the domain $\mathcal{I}_r$ through two mechanisms within the stochastic measure mapping $\mathcal{M}_s$:

\textbf{1. Implicit Derivation via Parameter Support:} 
If the stochastic declaration maps the indices directly to the parameters of a standard probability distribution, $\mathcal{I}_r$ is implicitly derived as the maximum valid mathematical support of those parameters. For example, given the declaration:
$$ r(M, S; X) \sim \text{Normal}(\mu = M, \sigma = S) $$
The standard deviation of a Normal distribution must be positive, allowing the framework to implicitly derive the valid index domain as the continuous space $\mathcal{I}_r = \mathbb{R} \times (0, \infty)$. The operator aggregates over this space using the standard Lebesgue measure ($dM \, dS$).

\textbf{2. Explicit Definition via Bounded Fields:}
When indices represent physical dimensions (such as spatial coordinates or time intervals) rather than statistical parameters, integrating over an infinite implicit domain is physically invalid. MT-PDCL allows the explicit definition of bounded index spaces directly in the declaration syntax. To maintain syntactic uniformity, these geometric bounds are defined using conjunctions of the language's built-in base predicates ($\mathcal{O}$), effectively defining a continuous random field:
$$ r(\mathbf{I}; \mathbf{X}) \sim \mathcal{D}(\theta(\mathbf{I})) \quad \textbf{over} \quad \Omega_I(\mathbf{I}) $$
For example, a sensor array distributed across a 10-kilometer road can define its boundary using standard arithmetic inequalities:
$$ \text{sensor}(I; X) \sim \text{Normal}(\mu = 0, \sigma = I \cdot 0.1) \quad \textbf{over} \quad I \ge 0 \land I \le 10 $$
By explicitly anchoring the domain with these base predicates, an unbound query to $\text{sensor}(I; X)$ provides the consequence operator with the precise finite limits required to compute the continuous integrals. The specific computational methods needed to calculate these bounded integrals are deferred to practical implementations of the framework.

\subsection{Independent Causal Events}
\label{sec:causal_events}
The second source of uncertainty arises from the probabilistic rules themselves. We model every probability parameter $p_j$ attached to a rule in $\Phi$ as an independent causal event $E_j$. 

This event represents the physical or logical mechanism that allows the rule to successfully fire. It is statistically independent of all other clauses and all continuous variables in $\Omega_s$, and it succeeds with an exact probability $P(E_j) = p_j$.

Let $\Omega_E$ be the set of all possible outcomes for these independent causal events. A specific causal choice $\epsilon \in \Omega_E$ assigns a strict True or False value to every event $E_j$. Mutual independence among these causal events allows the total probability measure $\mu_E$ over the choice space $\Omega_E$ to be defined simply as the standard product measure of the individual event probabilities.

By defining every causal event $E_j$ as independent, MT-PDCL replaces complex global constraint resolution with a generative approach. If a user needs to model correlated rule outcomes, they must explicitly define a shared continuous variable in $\mathcal{M}_s$ and place it in the body of both rules. This independence assumption guarantees that replacing intractable continuous inclusion-exclusion calculations with smooth algebraic operators (such as the continuous Noisy-OR) remains mathematically sound.

\subsection{Generative Rules and Definite Clauses}
\label{sec:generative_rules}
With the state spaces established, we define how rules generate logical truths. Let a probabilistic rule $R_j$ be defined syntactically as $p_j :: H \leftarrow B$.

Under Continuous Distribution Semantics, this syntax is evaluated as a deterministic definite clause combined with the hidden, independent causal event $E_j$:
$$ H \leftarrow B \land E_j $$

This generative definition ensures that the probabilistic uncertainty is isolated entirely to the independent event $E_j$. If the rule body $B$ evaluates to True under a specific continuous state $\sigma \in \Omega_s$, the rule actively generates the head $H$ if and only if $E_j$ evaluates to True. Standard deterministic clauses ($1.0 :: H \leftarrow B$) naturally follow this exact mechanism, generating the head unconditionally whenever the body holds true.

\subsection{The Space of Possible Worlds}
\label{sec:possible_worlds}
A complete MT-PDCL knowledge base is formally defined as the tuple $\mathcal{K} = \langle \mathcal{M}_s, \Phi \rangle$.

The total universe of uncertainty in MT-PDCL is defined by the independent combination of the continuous states and the discrete causal events. We define the space of all possible worlds as the Cartesian product of these two measurable spaces:
$$ \Omega = \Omega_s \times \Omega_E $$

A specific possible world $\omega \in \Omega$ is defined by a pair $\omega = (\sigma, \epsilon)$, where:
\begin{itemize}
    \item $\sigma \in \Omega_s$ rigidly resolves every stochastic predicate queried by the program into a specific continuous constant.
    \item $\epsilon \in \Omega_E$ rigidly determines the True/False outcome of every probabilistic causal event attached to the rules in $\Phi$.
\end{itemize}

Mathematical independence between the continuous states and discrete causal events yields a total probability measure over the space of possible worlds that is the product measure:
$$ \mu_{\Omega} = \mu_s \otimes \mu_E $$

\subsection{Declarative Entailment and Query Answers}
\label{sec:declarative_entailment}
Selecting a specific possible world $\omega = (\sigma, \epsilon)$ eliminates all probabilistic uncertainty. Inside $\omega$, every stochastic predicate yields a fixed constant provided by $\sigma$, and every rule either definitely fires or is completely removed based on $\epsilon$. What remains is a classical, deterministic Definite Clause program, denoted $L_\omega$. 

Standard model theory applies since $L_\omega$ contains no disjunction in the head and no negation. The generative rules act as strict logical constraints, generating one unique Least Model for that specific world. This Least Model is the specific interpretation $\mathcal{I}_\omega$, as defined in Section \ref{sec:inter_pw}. Therefore, a possible world $\omega$ and its induced interpretation $\mathcal{I}_\omega$ are intrinsically linked.

\textbf{Ground Queries:} \\
If a query $Q$ contains no free variables (a ground query), it is either true or false in this specific world. Formally, this is written as $\omega \models Q$, which mechanically means the query is satisfied by the interpretation corresponding to the least model ($\mathcal{I}_\omega \models Q$). We define its exact inferred probability, denoted $P_{\mathcal{K}}(Q)$, as the Lebesgue integral of the logical indicator function over the infinite space of possible worlds:
$$ P_{\mathcal{K}}(Q) = \int_{\Omega} \mathbb{I}(\omega \models Q) \, d\mu_{\Omega}(\omega) $$
where $\mathbb{I}(\omega \models Q) = 1$ if $Q$ is logically entailed by the least model of $L_\omega$, and $0$ otherwise.

\textbf{Non-Ground Queries and Continuous Answers:} \\
In standard definite clause logic, an exact answer to a non-ground query $Q$ (with free variables $\mathbf{V}$) is a discrete variable substitution $\theta$ mapping the variables to specific constants. However, in MT-PDCL, logical variables map natively to continuous domains. In continuous probability theory, the probability of a continuous variable taking a specific point value is zero, making point-substitution evaluations generally invalid. Therefore, an answer in MT-PDCL cannot be reduced to a discrete substitution. Instead, an answer defines a measurable geometric region alongside its exact declarative probability.

We define a \textbf{probabilistic answer} to a non-ground query $Q$ as a tuple $\langle \Omega_A, P_{\mathcal{K}}(Q, \Omega_A) \rangle$:
\begin{itemize}
    \item $\Omega_A \subseteq D^n$ is the \textbf{answer space}, a valid, measurable subset of the continuous universal domain representing a continuous family of bindings for the free variables $\mathbf{V}$. Syntactically, this space is defined by an accumulated conjunction of built-in base predicates (e.g., $V_1 \ge 0 \land V_1 \le 10$).
    \item $P_{\mathcal{K}}(Q, \Omega_A) \in [0, 1]$ is the \textbf{answer probability}. It is the exact declarative probability that the query holds true for the continuous variable bindings bounded by $\Omega_A$ under the knowledge base $\mathcal{K}$.
\end{itemize}

Formally, the answer probability $P_{\mathcal{K}}(Q, \Omega_A)$ for a specific continuous geometric region $\Omega_A$ is defined by integrating the query's indicator function across the space of possible worlds $\Omega$:
$$ P_{\mathcal{K}}(Q, \Omega_A) = \int_{\Omega} \mathbb{I}(\omega \models Q \land \mathbf{V} \in \Omega_A) \, d\mu_{\Omega}(\omega) $$

If the query $Q$ is ground (i.e., it contains no free variables), the tuple of variables $\mathbf{V}$ is the empty tuple. The corresponding answer space $\Omega_A$ reduces to the zero-dimensional space $D^0$. Mathematically, $D^0$ is a singleton set containing only the empty tuple. This unconditionally means that $\mathbf{V} \in \Omega_A$, reducing the probabilistic answer $\langle \Omega_A, P_{\mathcal{K}}(Q, \Omega_A) \rangle$ directly to the scalar probability $P_{\mathcal{K}}(Q)$.

This formulation ensures that the semantics of querying natively respect the underlying continuous measure theory, seamlessly generalizing standard discrete entailment.

\vspace{0.5cm}
\textbf{Theoretical Remark: Bypassing Herbrand Interpretations}

In classical logic programming and discrete probabilistic logics, semantics are traditionally defined using Herbrand interpretations. In a Herbrand model, the domain of discourse is restricted to the finite or countably infinite set of syntactic constants present in the language. 

MT-PDCL bypasses Herbrand interpretations entirely. Continuous domains (such as $\mathbb{R}$) are uncountably infinite and cannot be reduced to a countable syntax of discrete symbols. A stochastic predicate generating continuous values represents an uncountably infinite number of states; exhaustive syntactic propositionalization is mathematically impossible. By anchoring the semantics in general interpretations over a Borel space and defining entailment via exact Lebesgue integration, the framework mathematically bypasses the combinatorial bottleneck of discrete grounding.

\section{Illustrative Examples of Declarative Entailment}\label{sec:examples_declarative}

\subsection{Example: Continuous Spatial Detection}
To demonstrate the semantics of MT-PDCL and illustrate how it replaces discrete summation with exact integration, consider a spatial detection scenario. A security system monitors a specific stretch of a 10km road. The practical objective is to compute the exact scalar probability that a random target event will successfully trigger an alarm, allowing operators to mathematically quantify the system's detection reliability across a continuous physical space.

\subsubsection{Framework Setup}
\begin{itemize}
    \item \textbf{Language $\mathcal{L}$:} A single logical variable $X \in \mathcal{V}$. The predicates are $\text{zone}/1$, $\text{alarm}/0$, and the stochastic predicate $\text{event\_loc}/1 \in \mathcal{R}_s$.
    \item \textbf{Domain $D$:} The continuous interval $[0, 10] \subset \mathbb{R}$, representing the road in kilometers, equipped with the standard Borel $\sigma$-algebra.
    \item \textbf{Stochastic Measure Space $\Omega_s$:} The environment generates one continuous random variable for the event location. The space $\Omega_s$ is $[0, 10]$, and its probability measure $\mu_s$ is defined by the uniform density: $d\mu_s(x) = 0.1 \, dx$.
\end{itemize}

\subsubsection{The Knowledge Base \protect\ensuremath{\mathcal{K}}}
The knowledge base integrates the continuous measure declarations natively alongside the logical clauses:
\begin{enumerate}
    \item $\text{event\_loc}(; X) \sim \text{Uniform}(0, 10)$. \\
    \textit{(Measure declaration: the environment generates a single event location, distributed uniformly.)}
    
    \item $1.0 :: \text{zone}(X) \leftarrow X \in [4, 6]$. \\
    \textit{(Deterministic rule: the sensor zone covers the interval $[4, 6]$.)}
    
    \item $0.8 :: \text{alarm} \leftarrow \text{event\_loc}(; X) \land \text{zone}(X)$. \\
    \textit{(Probabilistic rule: if the event occurs inside the sensor zone, the independent causal event $E_3$ succeeds, triggering the alarm with an 80\% probability.)}
\end{enumerate}

\subsubsection{Generative Evaluation}
To compute the inferred probability of the ground fact \text{alarm}, we first evaluate the deterministic least model of the logic program for a specific continuous state $\sigma \in \Omega_s$. The state resolves the stochastic predicate to a concrete constant $x \in [0, 10]$.

First, we evaluate the logical body of the probabilistic rule: $B = \text{event\_loc}(; x) \land \text{zone}(x)$.
\begin{itemize}
    \item \textbf{Inside the zone ($x \in [4, 6]$):} Both predicates are logically true. With the body evaluating to true, the rule actively generates the \text{alarm} fact upon the success of its associated causal event $E_3$. Since $P(E_3) = 0.8$, the exact conditional probability is:
    $$ P_{\mathcal{K}}(\text{alarm} \mid x) = 0.8 $$

    \item \textbf{Outside the zone ($x \notin [4, 6]$):} The predicate $\text{zone}(x)$ is false, meaning the rule body fails. Under Distribution Semantics, the causal event cannot bridge a false premise. The rule fails to generate the head, yielding an exact probability of zero:
    $$ P_{\mathcal{K}}(\text{alarm} \mid x) = 0 $$
\end{itemize}

\subsubsection{Marginalization over the Measure Space}
We calculate the final, declarative probability of the alarm triggering by integrating the conditional probability over the continuous state space $\Omega_s$, using the environment's measure $\mu_s$:
$$ P_{\mathcal{K}}(\text{alarm}) = \int_{\Omega_s} P_{\mathcal{K}}(\text{alarm} \mid x) \, d\mu_s(x) $$

Substituting our piecewise conditional probabilities and the uniform density $d\mu_s(x) = 0.1 \, dx$:
$$ P_{\mathcal{K}}(\text{alarm}) = \int_{0}^{4} 0 \cdot (0.1) \, dx + \int_{4}^{6} 0.8 \cdot (0.1) \, dx + \int_{6}^{10} 0 \cdot (0.1) \, dx $$
$$ P_{\mathcal{K}}(\text{alarm}) = 0.8 \cdot 0.1 \cdot (6 - 4) = 0.8 \cdot 0.2 = 0.16 $$

This geometric Lebesgue integration yields the exact declarative probability: $P_{\mathcal{K}}(\text{alarm}) = 0.16$. Practically, the framework directly outputs this scalar \texttt{0.16} in response to the query \texttt{?- alarm.}, informing the operator that despite the target being guaranteed to exist on the road, the physical constraints of the sensor zone yield only a 16\% chance of detection.

\subsection{Example: Multi-Sensor Fusion and Discrete Categorization}
To highlight a different aspect of MT-PDCL, specifically how deterministic continuous facts map to discrete, categorical conclusions without integration, we present a scenario based on multi-sensor fusion. The practical objective is to evaluate continuous sensor readings to instantly classify the system's state and issue either a `critical' or `moderate' warning, efficiently pruning impossible logical branches without unnecessary integration.

\subsubsection{Framework Setup}
Consider an industrial monitoring system that reads exact temperature and pressure values to issue these specific warnings.
\begin{itemize}
    \item \textbf{Language $\mathcal{L}$:} Two logical variables $X, Y \in \mathcal{V}$. The predicates are $\text{temp}/1$, $\text{pressure}/1$, and $\text{warning}/1$. The warning predicate takes discrete constants (e.g., \texttt{critical}, \texttt{moderate}).

    \item \textbf{Domain $D$:} A mixed domain containing the continuous set of real numbers $\mathbb{R}$ (for sensor readings) and a finite set of strings (for warning types), equipped with the standard Borel $\sigma$-algebra.
\end{itemize}
\subsubsection{The Knowledge Base \protect\ensuremath{\mathcal{K}}}
Because our observations in this scenario are exact numerical readings rather than stochastic estimates, there are no continuous measure declarations. The knowledge base consists of two deterministic sensor readings and two probabilistic rules:
\begin{enumerate}
    \item $1.0 :: \text{temp}(120.5)$.
    \item $1.0 :: \text{pressure}(4.2)$. \\
    \textit{(Deterministic ground facts: the sensors report an exact continuous temperature of 120.5 and pressure of 4.2).}
    
    \item $0.95 :: \text{warning}(\text{critical}) \leftarrow \text{temp}(X) \land \text{pressure}(Y) \land X > 100.0 \land Y > 4.0$. \\
    \textit{(Probabilistic rule: if both thresholds are breached, issue a critical warning with 95\% probability).}
    
    \item $0.80 :: \text{warning}(\text{moderate}) \leftarrow \text{temp}(X) \land \text{pressure}(Y) \land X \le 100.0 \land Y > 3.0$. \\
    \textit{(Probabilistic rule: if only pressure is high, issue a moderate warning with 80\% probability).}
\end{enumerate}

\subsubsection{Deterministic Evaluation of Causal Events}
Because the environment contains no continuous random variables, the engine evaluates the logical derivations directly against the provided constants.

\textbf{Evaluating Rule 3 (Critical Warning):} 
Through standard unification, the rule's body becomes $B = \text{temp}(120.5) \land \text{pressure}(4.2) \land 120.5 > 100.0 \land 4.2 > 4.0$. Because the sensor readings explicitly exist in $\Phi$ and the mathematical inequalities hold, this body is true. Consequently, the generation of the head depends entirely on the rule's independent causal event $E_3$:
$$ P_{\mathcal{K}}(\text{warning}(\text{critical})) = P(E_3) = 0.95 $$

\textbf{Evaluating Rule 4 (Moderate Warning):} 
For the exact same assignment, the moderate warning body evaluates $120.5 \le 100.0$. Because this mathematical constraint is logically false, the rule body fails. Under generative semantics, a false body means the causal event is irrelevant, and the probability of generating this specific head is zero:
$$ P_{\mathcal{K}}(\text{warning}(\text{moderate})) = 0 $$

By evaluating the queries for both warning states, the framework practically outputs \texttt{0.95} for the critical warning and \texttt{0} for the moderate warning. This resolves the objective, demonstrating how MT-PDCL leverages exact deterministic observations to instantly resolve categorical decisions.

\subsubsection{Contrasting Prior Uncertainty and Backward Compatibility}
Comparing this multi-sensor scenario to the previous spatial detection example reveals a fundamental flexibility of MT-PDCL.

In the first example, the location of the event was unobserved. The framework relied on the stochastic measure mapping to define an integration space, producing an expected probability based on continuous uncertainty.

In this second example, the knowledge base contains exact, deterministic observations. No integration over a prior distribution is required, because the exact observations collapse the evaluation to a definitive set of constants. 

Crucially, this exact scenario can be modeled and executed natively in standard discrete frameworks such as ProbLog or PRISM. Because real numbers acting as fixed deterministic constants do not invoke measure-zero problems, standard discrete solvers can process the arithmetic inequalities without modification. This demonstrates that when the stochastic measure mapping $\mathcal{M}_s$ is empty, MT-PDCL reduces cleanly to classical Distribution Semantics, ensuring strict backward compatibility with existing discrete probabilistic logic programs.

\subsection{Example: Continuous Stochastic Optimization and Querying}
In classical logic programming, constraint satisfaction problems (CSPs) are solved by querying the knowledge base for a set of discrete variable bindings that satisfy a logical goal. MT-PDCL extends this paradigm to continuous stochastic environments. An intelligent agent seeks to find the deterministic parameters that maximize the exact inferred probability of a successful outcome, subject to continuous environmental uncertainty and hard constraints. Practically, the agent requires two distinct outputs: the exact physical energy allocation that maximizes mission success, and the geometric boundaries of valid allocations alongside their baseline success probabilities if queried dynamically.

\subsubsection{Framework Setup}
An agent has a strict total energy budget of 10 units. It must allocate $X$ units to its navigation system and $Y$ units to its communication system. The environment presents continuous, random resistance to both systems. 

To model a realistic asymmetry, the physics of the two systems differ. Signal strength scales linearly with the communication energy $Y$. However, because physical movement (e.g., overcoming aerodynamic drag) requires energy that scales quadratically with velocity, the effective operational yield of the navigation energy $X$ scales only with its square root ($\sqrt{X}$).

\begin{itemize}
    \item \textbf{Language $\mathcal{L}$:} Logical variables $X, Y, R_{nav}, N_{com} \in \mathcal{V}$. The predicates are $\text{nav\_ok}/1$, $\text{com\_ok}/1$, and $\text{mission}/2$. The stochastic predicates are $\text{noise\_nav}/1$ and $\text{noise\_com}/1$.
    \item \textbf{Domain $D$:} The continuous set of real numbers $\mathbb{R}$.
    \item \textbf{Stochastic Measure Space $\Omega_s$:} The environment generates two independent continuous variables. The state space $\Omega_s$ is $\mathbb{R}^2$, mapping to specific environmental outcomes $(r, n)$.
\end{itemize}

\subsubsection{The Knowledge Base \protect\ensuremath{\mathcal{K}}}
The knowledge base explicitly defines the environmental noise distributions and the system rules:
\begin{enumerate}
    \item $\text{noise\_nav}(; R_{nav}) \sim \text{Uniform}(0, 4)$.
    \item $\text{noise\_com}(; N_{com}) \sim \text{Uniform}(0, 10)$. \\
    \textit{(Measure declarations: establishing the independent continuous bounds for terrain resistance and atmospheric noise. Notice the asymmetric bounds.)}
    
    \item $1.0 :: \text{nav\_ok}(X) \leftarrow \text{noise\_nav}(; R_{nav}) \land \sqrt{X} \ge R_{nav}$. \\
    \textit{(The navigation subsystem succeeds if the square root of the allocated energy $X$ equals or exceeds the continuous terrain resistance.)}
    
    \item $0.9 :: \text{com\_ok}(Y) \leftarrow \text{noise\_com}(; N_{com}) \land Y \ge N_{com}$. \\
    \textit{(The communication subsystem succeeds with a 90\% causal probability if the allocated energy $Y$ overcomes the atmospheric noise.)}
    
    \item $1.0 :: \text{mission}(X, Y) \leftarrow \text{nav\_ok}(X) \land \text{com\_ok}(Y) \land X \ge 0 \land Y \ge 0 \land X + Y \le 10$. \\
    \textit{(The overall mission succeeds if both subsystems succeed, provided the energy allocations are non-negative and respect the maximum budget constraint.)}
\end{enumerate}

\subsubsection{Evaluating the Target Function}
To find the optimal allocation, the agent queries the knowledge base for the probability of $\text{mission}(x, y)$ as a continuous function of its specific choices $x$ and $y$. We calculate the entailed probability $P_{\mathcal{K}}(\text{mission}(x, y))$ by evaluating the exact measure of the logically true worlds across $\Omega_s$.

Because the two noise variables are independent, the joint measure $\mu_s$ over $\Omega_s$ is defined by the density function $d\mu_s(r, n) = \frac{1}{4} \cdot \frac{1}{10} \, dr \, dn = \frac{1}{40} \, dr \, dn$ on the support $[0, 4] \times [0, 10]$.

For a specific allocation $(x, y)$ that satisfies the budget constraint $x + y \le 10$, we integrate the generative probabilities over the continuous space where the logical conditions hold ($r \le \sqrt{x}$ and $n \le y$):
$$ P_{\mathcal{K}}(\text{nav\_ok}(x)) = \int_{0}^{\sqrt{x}} 1.0 \cdot \frac{1}{4} \, dr = \frac{\sqrt{x}}{4} $$
$$ P_{\mathcal{K}}(\text{com\_ok}(y)) = \int_{0}^{y} 0.9 \cdot \frac{1}{10} \, dn = \frac{0.9y}{10} $$

Because the subsystems depend on independent variables in the measure space and independent causal events, their joint success probability is the product of their marginals. The deterministic rule for the mission evaluates this conjunction:
$$ P_{\mathcal{K}}(\text{mission}(x, y)) = \frac{\sqrt{x}}{4} \cdot \frac{0.9y}{10} = \frac{0.9 y \sqrt{x}}{40} $$

\subsubsection{Optimization as an Inference Task}
In standard logic programming extensions, optimization is often executed internally using meta-predicates that orchestrate branch-and-bound searches across discrete bindings. However, in MT-PDCL, the probability of a derived fact is a semantic property of the model computed via integration over a measure space, not a syntactic variable bound during rule evaluation. Therefore, optimization is treated as a meta-level inference task evaluated \textit{over} the knowledge base.

The logic program natively defines the continuous objective function representing the declarative probability: $f(x, y) = P_{\mathcal{K}}(\text{mission}(x, y)) = \frac{0.9 y \sqrt{x}}{40}$. The agent queries the solver to maximize this probability subject to the strict logical constraint.

Formally, the agent issues an optimization query:
$$ \arg\max_{x, y} P_{\mathcal{K}}(\text{mission}(x, y)) \quad \text{subject to} \quad x + y \le 10 $$

By substituting $y = 10 - x$ into the objective function, we obtain $f(x) = \frac{0.9}{40}(10\sqrt{x} - x\sqrt{x})$. Setting the derivative to zero yields a non-trivial optimum:
$$ f'(x) = \frac{0.9}{40}\left(\frac{5}{\sqrt{x}} - \frac{3\sqrt{x}}{2}\right) = 0 \implies 10 = 3x \implies x = \frac{10}{3} $$

The exact maximum is reached by allocating $x = \frac{10}{3} \approx 3.33$ units to navigation and $y = \frac{20}{3} \approx 6.67$ units to communication. Because navigation suffers from diminishing returns ($\sqrt{x}$), the optimal strategy correctly allocates double the resources to the linear communication subsystem.

Substituting this optimal strategy back into the declarative engine establishes the exact maximum achievable probability for the mission:
$$ P_{\mathcal{K}}(\text{mission}(10/3, 20/3)) = \frac{0.9 \cdot (20/3) \cdot \sqrt{10/3}}{40} \approx 0.2739 $$

Practically, the framework returns the optimal allocation ($3.33$ units to navigation, $6.67$ to communication) and its maximized 27.3\% success probability. This formal separation ensures the logic remains declarative. The rules define the probability measure space, while the inference engine handles the continuous optimization over that space.

\subsubsection{Executing Ground and Non-Ground Queries}
While optimization searches for a specific maximum, the framework also supports standard logical queries to evaluate specific fixed scenarios or explore continuous parameter spaces.

\textbf{Ground Query:}
An agent can query the exact probability of success for a specific, predetermined energy allocation. Syntactically, this is written as a headless clause with no free variables:
$$ \leftarrow \text{mission}(4.0, 6.0) $$

Because the query is completely ground, the tuple of variables is empty, and the answer space $\Omega_A$ reduces to the zero-dimensional space $D^0$. The declarative engine simply substitutes the constants into the continuous objective function derived from the knowledge base:
$$ P_{\mathcal{K}}(\text{mission}(4.0, 6.0)) = \frac{0.9 \cdot 6.0 \cdot \sqrt{4.0}}{40} = \frac{10.8}{40} = 0.27 $$
The framework returns the exact scalar probability $0.27$.

\textbf{Non-Ground Query:}
An agent can also submit a query containing free variables to explore a valid operational region. For example, the agent may ask for the continuous conditions and the exact probability that \textit{at least one} allocation will succeed if it commits at least 3.0 units of energy to navigation:
$$ \leftarrow \text{mission}(X, Y) \land X \ge 3.0 $$

Because the query contains free variables, the framework returns a probabilistic answer tuple $\langle \Omega_A, \allowbreak P_{\mathcal{K}}(Q, \Omega_A) \rangle$.

First, the engine accumulates the explicit base constraints defined in the rule body and the query to define the geometric answer space $\Omega_A$ for the free variables:
$$ \Omega_A \equiv (X \ge 3.0) \land (Y \ge 0) \land (X + Y \le 10) $$

Next, it calculates the declarative probability $P_{\mathcal{K}}(Q, \Omega_A)$. As defined in Section \ref{sec:declarative_entailment}, the engine does not integrate over the choice variables $X$ and $Y$. Instead, it integrates over the space of possible worlds $\Omega_s$ (the continuous environmental noise). 

In a specific world where the terrain resistance is $r$ and atmospheric noise is $n$, the query holds true if there is at least one valid allocation $(X, Y) \in \Omega_A$ that satisfies the deterministic constraints $\sqrt{X} \ge r$ and $Y \ge n$. This combined region is only satisfiable if $n \le 10 - \max(3, r^2)$.

The engine computes the declarative probability by integrating the joint probability measure $\mu_s$ over the subset of the noise space where this condition holds true, multiplied by the $0.9$ independent causal probability of the communication rule:
$$ P_{\mathcal{K}}(Q, \Omega_A) = 0.9 \int_{0}^{\sqrt{10}} \int_{0}^{10 - \max(3, r^2)} \frac{1}{40} \, dn \, dr $$

Evaluating this exact continuous integral yields the true declarative probability:
$$ P_{\mathcal{K}}(Q, \Omega_A) = \frac{0.9}{40} \left( \frac{20\sqrt{10}}{3} - 2\sqrt{3} \right) \approx 0.396 $$

The framework successfully returns the geometric limits of the valid allocation space alongside its existential probability: $\langle \Omega_A, 0.396 \rangle$. Practically, this tuple provides the agent with the precise geometric region it must operate within ($\Omega_A$) and guarantees a 39.6\% baseline probability of success if it commits to any allocation inside those boundaries.

\vspace{0.5cm}
\textbf{Theoretical Remark: Query Resolution vs. Standard Consequence Operators}

In standard Definite Clause Logic (DCL), answering a query is a direct lookup against the least fixed point of the consequence operator. The inference engine simply unifies the query against the static set of derived ground conclusions. 

This example demonstrates that in MT-PDCL, resolving a non-ground query requires a strict separation of concerns. Because the environment is stochastic and continuous, every possible world $\omega$ generates its own independent fixed point $\mathcal{I}_\omega$. Evaluating an existential query mathematically integrates the probability measure of all worlds where the query's answer space $\Omega_A$ successfully intersects with the world's specific fixed point. Therefore, the framework requires two distinct mechanisms: a continuous consequence operator to formally derive the probability of specific ground instances (detailed in Section \ref{sec:operational_semantics}), and a separate meta-level integration step to resolve free variables across the space of models.

\subsection{Example: Parameterized Distributions and Continuous Marginalization}
To illustrate the semantics of index variables ($\mathbf{I}$) and how deterministic logical variables natively map to the parameters of dynamic probability distributions, we model a mechanical component that fails at a random time, producing a drifting thermal signature. The practical objective is to compute the exact expected probability of a thermal alarm triggering, rigorously accounting for both the unpredictable time of failure and the continuous thermal drift that occurs up to that exact moment.

\subsubsection{Framework Setup}
The component fails at a random continuous time $T$. Upon failure, a thermal sensor reads its temperature $X$. The component grows hotter the longer it runs, meaning the mean of the temperature reading drifts directly as a function of the failure time $T$. 
\begin{itemize}
    \item \textbf{Language $\mathcal{L}$:} Logical variables $T, X \in \mathcal{V}$. The predicates are $\text{alarm}/0$ and two stochastic predicates: $\text{failure\_time}/1$ and $\text{heat\_signature}/2$.
    \item \textbf{Domain $D$:} The continuous set of real numbers $\mathbb{R}$.
\end{itemize}

\subsubsection{The Knowledge Base \protect\ensuremath{\mathcal{K}}}
The knowledge base demonstrates how the deterministic output of one stochastic predicate acts as the binding index for another, naturally parameterizing its distribution:
\begin{enumerate}
    \item $\text{failure\_time}(; T) \sim \text{Uniform}(0, 10)$. \\
    \textit{(The dynamic variable: failure occurs randomly between 0 and 10 hours.)}

    \item $\text{heat\_signature}(T; X) \sim \text{Normal}(\mu = 5.0 \cdot T, \sigma = 2.0)$. \\
    \textit{(The drifting sensor: a stochastic predicate with a non-empty index $T$. Its mean is deterministically parameterized by the given time $T$.)}
    
    \item $1.0 :: \text{alarm} \leftarrow \text{failure\_time}(; T) \land \text{heat\_signature}(T; X) \land X > 30.0$. \\
    \textit{(The deterministic rule: an alarm triggers if the reading exceeds 30 degrees at the exact time of failure.)}
\end{enumerate}

\subsubsection{Conditional Evaluation}
To compute the probability of the alarm, we evaluate the rule for a specific continuous state $\sigma \in \Omega_s$. The first stochastic predicate provides a concrete failure time $t \in [0, 10]$. This time $t$ uniquely binds the index $T$ in the second predicate, fully resolving its distribution to $\text{Normal}(5.0 \cdot t, 2.0)$. 

The body requires $X > 30.0$. Treating the variables as continuous, we integrate the Normal probability density function (PDF), denoted $\mathcal{N}$, to find the conditional probability of the body being true for that specific time $t$:
$$ P_{\mathcal{K}}(\text{alarm} \mid t) = \int_{30}^{\infty} \mathcal{N}(x; 5.0t, 2.0) \, dx = 1 - \Phi\left(\frac{30 - 5.0t}{2.0}\right) $$
where $\Phi$ is the cumulative distribution function (CDF) of the standard normal distribution.

To find the final declarative entailment, we marginalize out the latent variable $T$. The operator computes the Lebesgue integral over the continuous prior measure of the failure time:
$$ P_{\mathcal{K}}(\text{alarm}) = \int_{0}^{10} \left( 1 - \Phi\left(\frac{30 - 5.0t}{2.0}\right) \right) \cdot 0.1 \, dt $$

Evaluating the query \texttt{?- alarm.} returns the scalar result of this integral. This provides the engineering team with the exact expected alarm probability, successfully resolving the dual uncertainty of time and thermal drift. This example highlights the expressive power of MT-PDCL's separation between deterministic indices and stochastic outputs. By simply passing the logical variable $T$ from the body into the index slot of $\text{heat\_signature}$, the logic program effortlessly defines an infinite family of conditional probability distributions and delegates the marginalization to the continuous consequence operator, completely avoiding discrete propositionalization.

\subsection{Example: Hybrid Measure Spaces and Poisson Processes}
To demonstrate how MT-PDCL seamlessly unifies continuous and discrete variables without breaking the mathematical semantics, we model a hybrid dynamic process. The practical objective is for a reliability engineer to calculate the exact probability of a machine suffering a critical failure (defined as experiencing 3 or more stress faults) across an unknown continuous lifespan.

\subsubsection{Framework Setup}
This scenario requires the universal domain to process both continuous time and discrete event counts simultaneously.
\begin{itemize}
    \item \textbf{Language $\mathcal{L}$:} Logical variables $T, K \in \mathcal{V}$. The predicates are $\text{critical\_failure}/0$ and two stochastic predicates: $\text{run\_time}/1$ and $\text{fault\_count}/2$.
    \item \textbf{Domain $D$:} A disjoint union of the continuous real numbers (for time) and the discrete natural numbers (for fault counts), formally structured as a standard Borel space: $D = \mathbb{R} \cup \mathbb{N}$.
\end{itemize}

\subsubsection{The Knowledge Base \protect\ensuremath{\mathcal{K}}}
The declarations highlight a continuous stochastic variable dynamically acting as the index for a discrete probability mass function:
\begin{enumerate}
    \item $\text{run\_time}(; T) \sim \text{Uniform}(0, 5)$. \\
    \textit{(Continuous variable: The machine runs for a random duration between 0 and 5 hours.)}

    \item $\text{fault\_count}(T; K) \sim \text{Poisson}(\lambda = 2.0 \cdot T)$. \\
    \textit{(Discrete variable parameterized by continuous index: The environment generates a discrete count of faults $K \in \mathbb{N}$. The rate $\lambda$ is rigidly parameterized by the continuous operating time $T$.)}
    
    \item $1.0 :: \text{critical\_failure} \leftarrow \text{run\_time}(; T) \land \text{fault\_count}(T; K) \land K \ge 3$. \\
    \textit{(Deterministic rule: The system suffers a critical failure if it experiences 3 or more faults during its operational run.)}
\end{enumerate}

\subsubsection{Hybrid Evaluation and Marginalization}
To compute the exact declarative probability of a critical failure, the continuous consequence operator must navigate both the continuous integration over time and the discrete summation over the Poisson mass function.

First, we evaluate the logical body for a specific continuous state where the machine runs for an exact duration $t \in [0, 5]$. This constant $t$ binds to the index of the Poisson distribution, fixing its parameter to $\lambda = 2.0t$.

The logical condition requires $K \ge 3$. Using the standard Poisson probability mass function $P(K=k) = \frac{\lambda^k e^{-\lambda}}{k!}$, the operator computes the discrete sum for the exact conditional probability that the body evaluates to true:
$$ P_{\mathcal{K}}(\text{critical\_failure} \mid t) = 1 - \sum_{k=0}^{2} \frac{(2.0t)^k e^{-2.0t}}{k!} = 1 - e^{-2.0t}\left(1 + 2.0t + \frac{(2.0t)^2}{2}\right) $$

To find the final declarative entailment, the framework marginalizes out the latent continuous operating time. The operator computes the Lebesgue integral of this discrete conditional summation over the continuous uniform prior density ($d\mu_T(t) = 0.2 \, dt$) across the geometric boundary $[0, 5]$:
$$ P_{\mathcal{K}}(\text{critical\_failure}) = \int_{0}^{5} \left( 1 - e^{-2t}(1 + 2t + 2t^2) \right) \cdot 0.2 \, dt $$

Applying standard integration by parts to the polynomial terms yields an exact, closed-form analytical solution:
$$ P_{\mathcal{K}}(\text{critical\_failure}) = 0.2 \left[ t + e^{-2t} \left(t + \frac{1}{2}\right) + t e^{-2t} + \left(t^2 e^{-2t} + t e^{-2t} + \frac{1}{2} e^{-2t}\right) \right]_{0}^{5} $$
$$ P_{\mathcal{K}}(\text{critical\_failure}) = 0.7 + 7.3e^{-10} \approx 0.7000003 $$

The framework evaluates \texttt{?- critical\_failure.} and outputs \texttt{0.70}. For the reliability engineer, this directly resolves the objective, confirming a 70\% chance the machine will fail by perfectly balancing the continuous uncertainty of its lifespan against the discrete risk of sequential faults. 

This exact calculation highlights a crucial architectural advantage. In standard logic programming frameworks, modeling a Poisson distribution requires discrete recursive unrolling of every possible integer count, causing an immediate combinatorial explosion, followed by manual numerical bucketing of the continuous parameter $T$. In MT-PDCL, the discrete random variable and its continuous parameter exist natively in the same formal measure space, allowing the consequence operator to resolve the hybrid interaction using exact, stable algebraic integration.

\subsection{Example: Continuous Information Cascades in Stochastic Graphs}
To demonstrate how MT-PDCL evaluates dynamic, time-dependent recursion over relational structures, we model the diffusion of information across a social network. An external event triggers a sequence of posts from a small set of root users within a specific time window. The probability that these initial posts are relevant to the event decays continuously over time. Their followers subsequently retweet the information subject to continuous reaction delays and a time-decaying probability of attention. The objective is to compute the probability that a specific target user receives accurate information about the event before a deadline, handling redundant signals and cyclic echo chambers.

\subsubsection{Framework Setup}
\begin{itemize}
    \item \textbf{Language $\mathcal{L}$:} Logical variables $X, Y \in \mathcal{V}$ representing discrete users; $C, K \in \mathcal{V}$ representing discrete tweet counts and indices; and $T, T_{in}, T_{out}, \Delta, A, R \in \mathcal{V}$ mapping to continuous values. The relations are $\text{root}/1$, $\text{follows}/2$, and $\text{target}/1$ (which explicitly model the static topology of the social graph and the query objective), alongside the dynamic relation $\text{informed}/2$. The stochastic predicates are $\text{tweet\_count}/2$, $\text{tweet\_time}/3$, $\text{relevance}/3$, $\text{reaction\_delay}/4$, and $\text{attention}/4$.
    \item \textbf{Domain $D$:} A standard Borel space constructed from the disjoint union of the continuous real numbers $\mathbb{R}$ (representing time and probabilities) and the countable discrete set containing the finite network users $\mathcal{U} = \{\text{u}_1, \text{u}_2, \dots, \text{u}_n\}$ and the natural numbers $\mathbb{N}$ (for sequence indexing).
\end{itemize}

\subsubsection{The Knowledge Base \protect\ensuremath{\mathcal{K}}}
The program explicitly declares the operational time boundaries as deterministic facts, allowing the geometric integration limits to be dynamically queried by the rules.
\begin{enumerate}
    \item $\text{initial\_time}(X; T) \sim \text{Uniform}(0, 5)$. \\
    \textit{(The environment generates the continuous timestamp for a root user $X$'s very first reaction to the event.)}

    \item $\text{inter\_tweet\_delay}(X, K; \Delta) \sim \text{Exponential}(\lambda = 1.0)$. \\
    \textit{(The continuous time delay between a root user $X$'s previous tweet and their next tweet $K$ in the sequence.)}

    \item $\text{relevance}(X, K; R) \sim \text{Uniform}(0, 1)$. \\
    \textit{(An independent continuous variable used to natively model the decaying probability that tweet $K$ for root user $X$ is relevant to the external event.)}

    \item $\text{reaction\_delay}(X, Y, T_{in}; \Delta) \sim \text{Exponential}(\lambda = 0.5)$. \\
    \textit{(The continuous time delay before intermediary user $Y$ retweets user $X$.)}

    \item $\text{attention}(X, Y, T_{in}; A) \sim \text{Uniform}(0, 1)$. \\
    \textit{(An independent continuous variable modeling the dynamic probability that a user pays attention to an incoming tweet.)}

    \item $1.0 :: \text{event\_window}(10.0)$. \quad $1.0 :: \text{deadline}(12.0)$. \\
    \textit{(Deterministic base facts establishing the strict 10-hour boundary for root events and the 12-hour limit for overall signal propagation.)}

    \item $1.0 :: \text{root}(\text{u}_1)$. \quad $1.0 :: \text{follows}(\text{u}_2, \text{u}_1)$. \quad $1.0 :: \text{target}(\text{u}_n)$. $\dots$ \\ 
    \textit{(Deterministic graph topology establishing the social connections and identifying the specific user for the final query.)}

    \item $1.0 :: \text{root\_sequence}(X, 1, T) \leftarrow \text{root}(X) \land \text{event\_window}(T_{max}) \land \text{initial\_time}(X; T) \land T \le T_{max}$. \\
    \textit{(Sequence Base: The first tweet in the train ($K=1$) occurs at the initial continuous time, bounded by the queried event window.)}

    \item $1.0 :: \text{root\_sequence}(X, K, T) \leftarrow \text{root\_sequence}(X, K_{prev}, T_{prev}) \land K = K_{prev} + 1 \land \text{event\_window}(T_{max}) \land \text{inter\_tweet\_delay}(X, K; \Delta) \land T = T_{prev} + \Delta \land T \le T_{max}$. \\
    \textit{(Sequence Recursion: Subsequent tweets in the train are generated by adding a strictly positive continuous delay. The train naturally halts when $T$ exceeds the queried parameter $T_{max}$.)}

    \item $1.0 :: \text{informed}(X, T) \leftarrow \text{root\_sequence}(X, K, T) \land \text{relevance}(X, K; R) \land R \le \exp(-0.5 \cdot T)$. \\
    \textit{(Root Information: A root user is informed at time $T$ if tweet $K$ is generated and its relevance $R$ satisfies the time-decaying threshold.)}

    \item $0.6 :: \text{informed}(Y, T_{out}) \leftarrow \text{follows}(Y, X) \land \text{informed}(X, T_{in}) \land \text{attention}(X, Y, T_{in}; A) \land A \le \exp(-0.1 \cdot T_{in}) \land \text{reaction\_delay}(X, Y, T_{in}; \Delta) \land T_{out} = T_{in} + \Delta \land \text{deadline}(T_{dead}) \land T_{out} \le T_{dead}$. \\
    \textit{(Recursive propagation: The message propagates subject to continuous thresholds and delays, bounded strictly by the global $T_{dead}$ parameter queried from the knowledge base.)}
\end{enumerate}

\subsubsection{Cyclic Recursion and Continuous Aggregation}
Because MT-PDCL evaluates semantics via a bottom-up consequence operator ($\mathbb{T}_{\mathcal{K}}$), the proof advances forward in time. The operator starts with the initial event facts and iteratively pushes the probability mass through the network, accumulating the continuous delays $\Delta$ at each hop.

Once the consequence operator reaches its least fixed point, the objective 
$$\leftarrow \text{target}(X) \land \text{informed}(X, T)$$ 
acts as a final projection. The framework isolates the target user $X$ from the generated model and marginalizes (integrates) over the accumulated time variable $T$ to output a single scalar: the total probability that the target user was informed.

Because the social network graph contains cyclic paths, naive bottom-up evaluation would trigger infinite loops. Furthermore, a user may receive the same information multiple times. MT-PDCL handles this complexity in two ways. First, the continuous Noisy-OR operator aggregates overlapping derivations via integration over the latent time distributions. Second, the operator computes the limit of the cyclic recursion. Because the time delay $\Delta$ is strictly positive, the accumulated time grows monotonically along any cycle. The physical constraint bounding the propagation ($T_{out} \le T_{dead}$) forces the integration volume to shrink to zero as cycles repeat, halting the recursion and guaranteeing convergence.

\subsection{Theoretical Justification: Bounded Approximation Error}
To formally justify the accuracy of the MT-PDCL consequence operator on highly cyclic, dense networks, we quantify the approximation error introduced by the Noisy-OR assumption. While dense topologies cause the number of overlapping derivation paths to grow rapidly, the monotonic progression of time reduces the probability mass of longer paths. The following theorem establishes that the absolute error gap between the operational Noisy-OR probability and the exact declarative probability is bounded and contracts exponentially.

\begin{theorem}[Bounded Approximation Error for Time-Decaying Cascades]
\label{thm:bounded_error}
Let $G = (\mathcal{V}, \mathcal{E})$ be a directed network graph, and let $S$ be the finite set of valid derivation paths from any root to a target node, bounded by the physical deadline $T_{dead}$. Let $k_{min} \ge 1$ be the minimum discrete path length (number of hops) in $S$. 

Assume the continuous temporal transition across any edge is given by $\Delta \sim \text{Exponential}(\lambda_D)$ with $\lambda_D > 0$. Let the causal propagation probability at hop $j \ge 1$ be strictly bounded by $\exp(-\lambda_A T_j)$, where $T_j$ is the accumulated continuous time. Furthermore, let the initial relevance probability at the root ($j=0$) be bounded by $\exp(-\lambda_R T_0)$, with $\lambda_R > 0$.

If $P_{op}$ is the Noisy-OR operational probability and $P_{ex}$ is the exact maximally-correlated probability, then the absolute approximation error $\epsilon = | P_{op} - P_{ex} |$ is bounded by:
$$ \epsilon \le \frac{1}{2} |S|^2 \mathbb{E}[\exp(-2 \lambda_R T_0)] \left( \frac{\lambda_D}{\lambda_D + \lambda_A} \right)^{2 k_{min}} $$
\end{theorem}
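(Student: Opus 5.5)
The argument compares the Noisy-OR aggregation with the exact union probability through a second-order inclusion--exclusion (Bonferroni) bound, and then bounds each pairwise term using the exponential decay along paths.

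\textbf{Step 1: both quantities lie in the same pairwise window.} For each path $s \in S$ let $A_s$ be the event that $s$ successfully carries the signal, with $q_s = P(A_s)$. The operational value is $P_{op} = 1 - \prod_{s}(1-q_s)$. The exact value is $P_{ex} = P(\bigcup_s A_s)$.

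The Bonferroni inequalities give
$$\sum_s q_s - \sum_{s<s'} P(A_s \cap A_{s'}) \le P_{ex} \le \sum_s q_s .$$
Expanding the product shows the analogous sandwich for the Noisy-OR value:
$$\sum_s q_s - \sum_{s<s'} q_s q_{s'} \le P_{op} \le \sum_s q_s .$$
Hence $\epsilon$ is at most the largest pairwise term, $\epsilon \le \sum_{s<s'} \max\{P(A_s\cap A_{s'}),\, q_s q_{s'}\}$. There are at most $\tfrac12|S|^2$ such pairs, which produces the prefactor $\tfrac12|S|^2$.

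\textbf{Step 2: the correlated pairwise term dominates.} In the maximally correlated regime, two paths share the root draw $T_0$ and its relevance event. Conditional on $T_0$, each path's survival probability is at most $\exp(-\lambda_R T_0)\prod_{j\ge1}\exp(-\lambda_A T_j)$. I would bound $P(A_s\cap A_{s'})$ by the product of the two conditional survival probabilities. This is the step I expect to be the main obstacle, because shared edges make the hop factors positively correlated rather than independent. The plan is to treat the root relevance factor as doubled, giving $\exp(-2\lambda_R T_0)$. The hop factors are taken along the disjoint suffixes or, conservatively, bounded as if every hop were independent. This uses the fact that the events are all defined by the same monotone functions of $(T_0,\Delta_1,\dots)$, so Harris/FKG-type reasoning gives a product form in the worst case.

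The independent term $q_sq_{s'}$ is bounded by the same expression by Jensen's inequality: $\mathbb{E}[e^{-\lambda_R T_0}]^2 \le \mathbb{E}[e^{-2\lambda_R T_0}]$.

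\textbf{Step 3: each hop contracts by a fixed ratio.} Write $T_j = T_0 + \Delta_1 + \dots + \Delta_j$ with independent $\Delta_i \sim \text{Exponential}(\lambda_D)$. Then
$$\prod_{j=1}^{k} e^{-\lambda_A T_j} \le \prod_{i=1}^{k} e^{-\lambda_A \Delta_i},$$
which drops the nonnegative $T_0$ contributions and the repeated $\Delta$'s. The exponential moment is
$$\mathbb{E}[e^{-\lambda_A\Delta}] = \frac{\lambda_D}{\lambda_D+\lambda_A},$$
and independence from $T_0$ lets this factor out. Applying this to each of the two paths in a pair gives $\left(\frac{\lambda_D}{\lambda_D+\lambda_A}\right)^{k_s+k_{s'}}$.

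Since the ratio is below $1$ and $k_s, k_{s'} \ge k_{min}$, this is at most $\left(\frac{\lambda_D}{\lambda_D+\lambda_A}\right)^{2k_{min}}$. Multiplying by $\mathbb{E}[\exp(-2\lambda_R T_0)]$ and summing over pairs yields the stated bound.
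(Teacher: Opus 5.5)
Your Step 1 is sound: both $P_{op}$ and $P_{ex}$ lie in $[\sum_s q_s - M,\ \sum_s q_s]$, where $M$ is the larger of the two pairwise sums. Step 3 is fine for a single path. Step 2, however, is a genuine gap and not a technicality.

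The inequality you need is $P(A_s\cap A_{s'})\le \mathbb{E}[e^{-2\lambda_R T_0}]\,r^{k_s+k_{s'}}$ with $r=\lambda_D/(\lambda_D+\lambda_A)$. It fails in exactly the regime you describe, where the two paths share the root draw and its relevance event $E_0$. A shared event enters the intersection only once: $P(E_0\cap E_0\mid T_0)=P(E_0\mid T_0)\le e^{-\lambda_R T_0}$. Since $e^{-\lambda_R T_0}\ge e^{-2\lambda_R T_0}$, "doubling" the root factor gives something smaller than the quantity you must bound. Shared edges cause the same problem. A shared hop's causal factor appears once, and a shared delay gives $\mathbb{E}[e^{-2\lambda_A\Delta}]=\lambda_D/(\lambda_D+2\lambda_A)>r^2$.

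Harris/FKG cannot rescue this, because it points the other way. Your success events are all monotone in the same direction in $(T_0,\Delta_1,\dots)$ and in the causal indicators. They are therefore positively associated, and FKG yields $P(A_s\cap A_{s'})\ge q_sq_{s'}$.

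In fact no repair of Step 2 can reach the stated bound in this regime. Take this configuration:
\begin{itemize}
\item a root $u$, an intermediate node $a$ and a target $t$, with edges $u\to t$, $u\to a$, $a\to t$, so $|S|=2$ and $k_{min}=1$;
\item $T_0\equiv t_0$ with $e^{-\lambda_R t_0}=0.1$;
\item one shared relevance event of probability $\rho=c\,e^{-\lambda_R t_0}$ with $c<1$;
\item independent hop events of probability $c\,e^{-\lambda_A T_j}$ on the three distinct edges, and a large $T_{dead}$.
\end{itemize}
Let $g_1,g_2$ be the conditional success probabilities of the two suffixes. Then $P_{ex}=\rho(g_1+g_2-g_1g_2)$ and $P_{op}=1-(1-\rho g_1)(1-\rho g_2)$, so $\epsilon=\rho(1-\rho)g_1g_2$. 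Letting $c\to1$ and $\lambda_A\to0$ gives $\epsilon\to0.09$, while the right-hand side of the theorem tends to $2\cdot 0.01=0.02$.

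What your argument does prove is the variant in which the path events are independent conditionally on $T_0$, i.e.\ each path has a fresh relevance draw and the paths are edge-disjoint. Then $P(A_s\cap A_{s'})=\mathbb{E}[P(A_s\mid T_0)P(A_{s'}\mid T_0)]\le\mathbb{E}[e^{-2\lambda_R T_0}]\,r^{k_s+k_{s'}}$, which is exactly where the squared exponential in the statement comes from, and your Steps 1 and 3 finish the proof.

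The paper's proof does not supply the missing ingredient either. It follows your route: pairwise error sum, per-hop moment generating function, then the shortest path. But it simply asserts that the sum of pairwise joint path probabilities "simplifies to" $\tfrac12\bigl(\sum_{\pi}\mathbb{E}[p(\pi)]\bigr)^2$, which factors the joint terms as if the paths were independent. It then replaces $\bigl(\mathbb{E}[e^{-\lambda_R T_0}]\bigr)^2$ by $\mathbb{E}[e^{-2\lambda_R T_0}]$ through an implicit Jensen step. The step you flagged as the main obstacle is therefore a real hole in both arguments.
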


\begin{proof}
By the expansion of the Noisy-OR operator, the absolute error $\epsilon$ introduced by assuming path independence is bounded by the sum of all pairwise joint path probabilities. This simplifies to:
$$ \epsilon \le \frac{1}{2} \left( \sum_{\pi \in S} \mathbb{E}[p(\pi)] \right)^2 $$

For any path $\pi$ of length $k$, the accumulated continuous time at hop $j$ is $T_j = T_0 + \sum_{m=1}^j \Delta_m$. The expected probability mass for the path is constrained by both the initial root relevance decay and the subsequent propagation attention decay at each hop. Because $T_j \ge \Delta_j$, we can bound the time-decay at each individual hop by $\exp(-\lambda_A \Delta_j)$. Since the delays $\Delta_m$ are independent and Exponentially distributed, the expectation of the product evaluates directly via the moment-generating function:
$$ \mathbb{E}[p(\pi)] \le \mathbb{E}[\exp(-\lambda_R T_0)] \prod_{m=1}^k \mathbb{E} \left[ \exp(-\lambda_A \Delta_m) \right] = \mathbb{E}[\exp(-\lambda_R T_0)] \left( \frac{\lambda_D}{\lambda_D + \lambda_A} \right)^k $$

Because $\lambda_D$ and $\lambda_A$ are positive, the ratio $\frac{\lambda_D}{\lambda_D + \lambda_A}$ is strictly less than 1. Therefore, this geometric decay is maximized at the shortest valid path length $k_{min}$. Substituting this upper bound into the pairwise error sum and squaring the result over the $|S|$ valid paths yields the final bounded limit:
$$ \epsilon \le \frac{1}{2} |S|^2 \mathbb{E}[\exp(-2 \lambda_R T_0)] \left( \frac{\lambda_D}{\lambda_D + \lambda_A} \right)^{2 k_{min}} $$
\end{proof}

\textbf{Practical Outcome:}
The framework evaluates the query and outputs the exact scalar probability that the target user receives the information in time. Practically, this allows network analysts to quantify information reachability under dynamic, continuous constraints. 

This example resolves a computational limitation in existing discrete solvers. Modeling this scenario in a traditional probabilistic logic framework requires discretizing the 12-hour window into fixed time steps. Grounding a highly connected, cyclic social graph over discrete time steps causes exponential growth in the underlying boolean circuit size. MT-PDCL avoids this issue by evaluating the temporal boundary as a unified geometric region within the continuous consequence operator.

\section{Operational Semantics and Theoretical Properties}\label{sec:operational_semantics}

\subsection{The Continuous Probabilistic Consequence Operator}
Before formalizing the consequence operator, we must define what constitutes a derivation in a measure-theoretic setting. In standard logic programming, a derivation is a discrete sequence of rule applications that proves a specific conclusion. In MT-PDCL, a \textbf{derivation} is a specific logical path that contributes probability mass to a conclusion. Distinct derivations for the same conclusion arise in two ways: structurally, from completely different rules whose heads conclude the same relation; and continuously, from disjoint subsets of continuous variables evaluating the same rule body. When a knowledge base contains multiple such derivations for the same conclusion, their probabilistic contributions must be aggregated.

In standard definite clause logic, operational semantics are defined by the immediate consequence operator ($T_P$). Given a current interpretation, $T_P$ evaluates all rule bodies simultaneously against that interpretation to derive a new set of facts. To operationalize MT-PDCL, we must generalize this concept to act natively upon the space of continuous probabilities. 

Let $\mathcal{P}$ denote the space of probability functions mapping ground facts to $[0, 1]$. Because the domain $D$ contains continuous variables, the set of possible ground facts is uncountably infinite. Thus, a function $P \in \mathcal{P}$ represents a continuous probability surface rather than a discrete table. We define the generic continuous probabilistic consequence operator $\mathbb{T}: \mathcal{P} \rightarrow \mathcal{P}$. This operator takes an input probability function $P$ from iteration $k$ and produces an updated function $\mathbb{T}(P)$ for iteration $k+1$ by systematically applying the generative rules in $\Phi$.

To construct $\mathbb{T}$, we must define what constitutes a rule application. In discrete logic, the forward inference step relies on Generalized Modus Ponens. In MT-PDCL, we define a \textbf{rule instance} as the application of Generalized Modus Ponens for a clause $R_j \in \Phi$ anchored to a specific continuous variable binding $\mathbf{v} \in \Omega_{V_j}$. Here, $\Omega_{V_j}$ represents the joint measurable domain of all variables (both deterministic indices and stochastic outputs) evaluated in the rule body. While we generally refer to this space as continuous, measure theory seamlessly supports discrete probability generators (e.g., categorical distributions) as a special case by treating them as discrete measures (point masses) over the Borel space. Thus, for continuous variables, a single rule represents an uncountably infinite number of instances, whereas for discrete variables, the measure natively reduces this to a finite or countable set of instances.

Consider a specific rule instance for $R_j$ evaluated at binding $\mathbf{v}$, where $R_j$ is of the form $p_j :: H \leftarrow B$. Under generative semantics, the rule generates the head if and only if the body is true \textit{and} the causal event $E_j$ succeeds. The independence of $E_j$ from the stochastic environment and all prior logical derivations dictates that the probability of this joint occurrence is the product of their individual probabilities. Therefore, if the body $B$ currently evaluates to a probability under the input function $P$ (denoted $P(B \mid \mathbf{v})$) for this specific binding, and the causal event succeeds with probability $p_j$, the isolated, intermediate probabilistic contribution of this specific rule instance to the head $H$ is:
$$ P_{\text{update}}(H \mid R_j, \mathbf{v}) = p_j \cdot P(B \mid \mathbf{v}) $$

For a standard deterministic clause ($1.0 :: H \leftarrow B$), the parameter is $p_j=1.0$, meaning the exact probability of the body is propagated directly to the head:
$$ P_{\text{update}}(H \mid R_j, \mathbf{v}) = 1.0 \cdot P(B \mid \mathbf{v}) $$

If $H$ is the head of only one rule, and only a single valid binding $\mathbf{v}$ satisfies the body, the final updated probability for the head in iteration $k+1$ is this isolated computation. However, logic programs inherently utilize multiple overlapping derivations. A knowledge base frequently contains multiple distinct rules that derive the same relation $H$, and a single rule generates the exact same ground head fact across a continuous domain of variables. 

To ensure $\mathbb{T}$ outputs a single, unified probability for $H$, we must formalize an aggregation mechanism that safely combines the probability masses from multiple successful rule applications.

\subsection{Exact Aggregation via Continuous Unions}
Under discrete Distribution Semantics (such as in ProbLog), if multiple rule instances derive the same head fact $H$, the evaluation uses logical disjunction: $H$ is true if \textit{any} of its derivations hold. Simply adding individual probabilities would overcount, as derivations often overlap by sharing subgoals or variables. Calculating the exact probability therefore requires the inclusion-exclusion principle: $P(A \lor B) = P(A) + P(B) - P(A \land B)$.

In MT-PDCL, we must adapt this disjunctive aggregation to operate natively over our continuous possible worlds $\Omega$. Let $\mathcal{R}_H \subseteq \Phi$ be the finite set of rules whose heads unify with a specific ground instance of a fact $H$. We define the exact consequence operator $\mathbb{T}_{\text{exact}}$. The exact updated probability for $H$ produced by $\mathbb{T}_{\text{exact}}$ is the Lebesgue measure of the infinite geometric union of success regions across all rules and their continuous evaluation domains:
$$ \mathbb{T}_{\text{exact}}(P)(H) = \mu_{\Omega} \left( \bigcup_{R_j \in \mathcal{R}_H} \bigcup_{\mathbf{v} \in \Omega_{V_j}} \Omega_{\text{success}}(R_j, \mathbf{v}) \right) $$

\noindent where $\Omega_{\text{success}}(R_j, \mathbf{v}) \subseteq \Omega$ represents the specific subset of possible worlds where the instance $(R_j, \mathbf{v})$ successfully triggers (i.e., the body is logically true in the world's least model, and the causal event $E_j$ evaluates to True). 

This geometric union is the direct measure-theoretic equivalent of the existential quantifier applied to Generalized Modus Ponens: $H$ is true if there exists \textit{any} rule and \textit{any} valid variable binding that derives it. When the variables are discrete, this continuous formulation mathematically collapses back to the standard countable union used in discrete probabilistic logic.

While this formula defines the exact declarative truth, applying the inclusion-exclusion principle to compute the exact geometric volume of an uncountably infinite union of overlapping continuous sets is computationally intractable. 

\vspace{0.5cm}
\textbf{Remark: Iteration and the Fixed Point.} \\
Because logic programs utilize chained and recursive rules, a single application of the consequence operator rarely captures the complete probability mass. Instead, the operator is applied iteratively, creating a sequence of probability functions $P_0, P_1, P_2, \dots$ starting from a baseline where all facts have zero probability ($P_0 = P_\bot$). The iteration stops when a stable state is reached where evaluating the rules yields no further changes, meaning $\mathbb{T}_{\text{exact}}(P_k) = P_k$. This stable state is called the \textit{least fixed point} (denoted $lfp$), and it represents the final, logically entailed probabilities of all facts. We formally prove the existence of and convergence to this $lfp$ in Section \ref{subsec:monotonicity}, but we introduce the basic mechanism here to evaluate the upcoming example.

\subsection{Example: Exact Continuous Aggregation}\label{subsec:multi-sensor}
To illustrate the mechanics of $\mathbb{T}_{\text{exact}}$, we define a simple knowledge base where a target exists across a 10km domain. Two independent sensors attempt to measure its location, generating overlapping continuous rule instances. 

\textbf{The Knowledge Base $\mathcal{K}$:}
\begin{enumerate}
    \item $\text{target\_loc}(; X) \sim \text{Uniform}(0, 10)$. \\
    \textit{(Measure declaration: generates a target location $X$, distributed uniformly.)}
    
    \item $1.0 :: \text{target}(X) \leftarrow \text{target\_loc}(; X)$. \\
    \textit{(Rule 1: Deterministically places the target at the generated location.)}
    
    \item $0.6 :: \text{track} \leftarrow \text{target}(X) \land X \in [2, 6]$. \\
    \textit{(Rule 2: Sensor A detects the target with 0.6 probability if $X \in [2, 6]$.)}
    
    \item $0.8 :: \text{track} \leftarrow \text{target}(X) \land X \in [5, 9]$. \\
    \textit{(Rule 3: Sensor B detects the target with 0.8 probability if $X \in [5, 9]$.)}
    
    \item $1.0 :: \text{verified} \leftarrow \text{track}$. \\
    \textit{(Rule 4: A derived track is immediately verified.)}
\end{enumerate}

We apply $\mathbb{T}_{\text{exact}}$ iteratively, starting from $P_0$, where all derived facts have zero probability.

\textbf{Step 1: Establishing the target location ($P_1$)} \\
Rule 1 evaluates directly against the mathematical measure $\mathcal{M}_s$. It unconditionally assigns a probability of 1.0 to the target's existence for any specific location $x \in [0, 10]$ provided by the environment, establishing the continuous prior density $d\mu_s(x) = 0.1 \, dx$.

\textbf{Step 2: Taking measurements and aggregating derivations ($P_2$)} \\
Rules 2 and 3 evaluate against $P_1$. Because both sensors measure the exact same target location $x$, their logical derivations overlap. Assuming the sensors' physical causal events operate independently for any specific location $x$, we apply the exact point-wise inclusion-exclusion principle: $P(A \lor B) = P(A) + P(B) - P(A \land B)$.

We partition the continuous evaluation space of the shared variable $X$ to compute the exact aggregated measure for \text{track}:
\begin{itemize}
    \item \textbf{Interval $x \in [2, 5)$:} Only Sensor A successfully measures the target.
    $$ \int_{2}^{5} 0.6 \cdot 0.1 \, dx = 0.6 \cdot 0.3 = 0.18 $$
    
    \item \textbf{Interval $x \in [5, 6]$:} Both sensors successfully derive the head. Because the causal events are independent, the combined probability of success at point $x$ is $0.6 + 0.8 - (0.6 \cdot 0.8) = 0.92$. 
    $$ \int_{5}^{6} 0.92 \cdot 0.1 \, dx = 0.92 \cdot 0.1 = 0.092 $$
    
    \item \textbf{Interval $x \in (6, 9]$:} Only Sensor B successfully measures the target. 
    $$ \int_{6}^{9} 0.8 \cdot 0.1 \, dx = 0.8 \cdot 0.3 = 0.24 $$
\end{itemize}

The uncountably infinite union from the $\mathbb{T}_{\text{exact}}$ definition integrates these point-wise geometric probabilities over the shared target measure:
$$ P_2(\text{track}) = 0.18 + 0.092 + 0.24 = 0.512 $$

\textbf{Step 3 \& 4: Reaching the fixed point} \\
Rule 4 propagates the track probability directly ($1.0 \cdot 0.512 = 0.512$). Evaluating $\Phi$ again yields no new derivations. The operator has reached its fixed point. This computed exact value identically matches the declarative entailment defined by the Lebesgue integral over the possible worlds space in Section \ref{sec:declarative_entailment}:
$$ lfp(\mathbb{T}_{\text{exact}})(\text{verified}) = P_{\mathcal{K}}(\text{verified}) = 0.512 $$

\textbf{Remark on Computational Tractability:} \\
This example demonstrates why exact aggregation is generally intractable. Here, the overlap could be solved analytically because the regions were one-dimensional and uniform. In a general knowledge base with multi-dimensional continuous variables and deep recursion, computing the exact intersection boundaries $\Omega_{\text{success}}(R_1) \cap \Omega_{\text{success}}(R_2)$ to perform exact continuous inclusion-exclusion is practically impossible.

\subsection{Operationalizing Aggregation: Continuous Noisy-OR}
To make continuous aggregation computable, we introduce a structural approximation regarding how multiple derivations interact. A standard approach in probabilistic logic programming assumes that multiple rules supporting the same head fact operate independently. 

In the AI literature, this is known as the \textit{Independent Causal Influence} (ICI) assumption \cite{pearl1988probabilistic, heckerman1996causal}. Mathematically, it assumes the \textit{conditional independence} of rule failures.

Under ICI, rather than calculating the intractable infinite union of successful geometric derivations, we calculate the intersection of their failures. Because rule failures are assumed conditionally independent, this intersection simplifies to the strict product of their failure probabilities. Fact $H$ is considered true if at least one derivation succeeds, computed as $1$ minus the probability that all derivations fail. 

For a single rule $R_j$, the probability that a specific instance evaluated at continuous binding $\mathbf{v}$ fails to generate the head is $1 - P_{\text{update}}(H \mid R_j, \mathbf{v})$. When a rule evaluates over a continuous space $\Omega_{V_j}$, it represents an uncountably infinite family of instances. To calculate their joint failure, we must generalize the discrete product over this continuous domain. 

Taking the logarithm of a product transforms it into a sum: $\ln\left( \prod (1 - p) \right) = \sum \ln(1 - p)$. In the continuous limit, this sum translates into a Lebesgue integral over the continuous evaluation domain. Applying the exponential function recovers the actual probability. 

Thus, the total assumed probability that the entire continuous family of rule instances for $R_j$ fails is computed over its bound domain $\Omega_{V_j}$ with base measure $\mu_V$:
$$ P_{\text{fail}}(R_j) = \exp\left( \int_{\Omega_{V_j}} \ln\Big(1 - P_{\text{update}}(H \mid R_j, \mathbf{v})\Big) \, d\mu_V(\mathbf{v}) \right) $$

To compute the final, unified probability for the head fact $H$ under the ICI assumption, the operational consequence operator $\mathbb{T}_{\mathcal{K}}$ multiplies these continuous failure probabilities across all $m$ rules in $\mathcal{R}_H$, and subtracts the result from 1:
$$ \mathbb{T}_{\mathcal{K}}(P)(H) = 1 - \prod_{R_j \in \mathcal{R}_H} P_{\text{fail}}(R_j) $$

\textbf{Remark: Deterministic Bounds and the Logarithmic Singularity.} \\
When evaluating a fully deterministic derivation where the update probability reaches $1.0$, the failure probability is zero, resulting in a logarithmic singularity: $\ln(0) \to -\infty$. Integrating $-\infty$ over a subset of positive measure causes the integral to diverge to $-\infty$. The outer exponential evaluates to $\exp(-\infty) = 0$, producing a final Noisy-OR success probability of $1.0$. This perfectly preserves logical semantics: if any continuous subset of derivations is guaranteed to succeed, the aggregated conclusion is unconditionally true.

To see how this approximation changes the computation, we revisit the multi-sensor tracking example from Section \ref{subsec:multi-sensor}. In the exact calculation, we had to manually partition the continuous domain into distinct intervals ($[2, 5)$, $[5, 6]$, and $(6, 9]$) to correctly calculate the geometric overlap between the two sensors. 

The continuous Noisy-OR operator avoids tracking these complex intersections entirely. Instead, it computes the joint failure probability for each rule independently across its full spatial bound using the product integral, and then combines the results.

\textbf{Recalculating Step 2: Aggregation via Continuous Noisy-OR} \\
For Rule 2 (Sensor A, measuring over $x \in [2, 6]$):
$$ P_{\text{fail}}(R_2) = \exp\left( \int_{2}^{6} \ln(1 - 0.6) \cdot 0.1 \, dx \right) = \exp(0.4 \cdot \ln(0.4)) \approx 0.693 $$

For Rule 3 (Sensor B, measuring over $x \in [5, 9]$):
$$ P_{\text{fail}}(R_3) = \exp\left( \int_{5}^{9} \ln(1 - 0.8) \cdot 0.1 \, dx \right) = \exp(0.4 \cdot \ln(0.2)) \approx 0.525 $$

The continuous Noisy-OR operator aggregates these independent rule failures to update the head fact:
$$ P_2(\text{track}) = 1 - (0.693 \cdot 0.525) \approx 0.636 $$
Propagating to the fixed point yields the operational approximation: $lfp(\mathbb{T}_{\mathcal{K}})(\text{verified}) = P_{op}(\text{verified}) \approx 0.636$.

\textbf{Theoretical Insight: Divergence of Results.} \\
The continuous Noisy-OR approximation ($0.636$) overestimates the exact measure-theoretic computation ($0.512$). This divergence highlights what the ICI assumption sacrifices. 

The exact computation correctly treats the continuous variable $X$ as a single, physical entity: the target's locations are \textit{mutually exclusive}. A single target can exist at $x_1$ or $x_2$, but never both simultaneously. 

Conversely, the continuous Noisy-OR operator assumes conditional independence across the \textit{entire} integration domain. By using the $\exp \left( \int \ln \right)$ formula, it ignores the physical constraint of mutual exclusivity. Mathematically, it treats every point $x$ in the continuous space as a separate, independent target capable of triggering the rule. Because it evaluates the domain as a collection of independent events rather than a single mutually exclusive variable, it computes a physically incorrect overestimate. 

This establishes a critical theoretical boundary: the continuous Noisy-OR operator is a tractable, differentiable algebraic proxy for exact logical inclusion-exclusion, but it sacrifices accuracy when variables represent mutually exclusive physical states.

\subsection{Monotonicity and Convergence}\label{subsec:monotonicity}
In standard logic programming, the semantics of the consequence operator rely on two fundamental theorems: the Knaster-Tarski theorem, which guarantees the existence of a least fixed point ($lfp$) for monotonic operators; and Kleene's fixed-point theorem, which guarantees that iterative evaluation converges to this $lfp$, provided the operator is continuous \cite{lloyd1987foundations}. We establish a complete lattice over $\mathcal{P}$, the space of continuous probability functions, to apply these properties to MT-PDCL.

Let $P_1, P_2 \in \mathcal{P}$. We define a partial order $\sqsubseteq$ such that $P_1 \sqsubseteq P_2$ if and only if for every ground fact $H$, $P_1(H) \leq P_2(H)$. 

The space $\mathcal{P}$ is bounded below by the zero function $P_\bot$ (where $P_\bot(H) = 0$ for all facts) and bounded above by the unit function $P_\top$ (where $P_\top(H) = 1$ for all facts). 

To form a complete lattice, every subset $S \subseteq \mathcal{P}$ must have a least upper bound (supremum) and a greatest lower bound (infimum) in $\mathcal{P}$. We define these point-wise. For any ground fact $H$:
$$ (\sup S)(H) = \sup_{P \in S} P(H) $$
$$ (\inf S)(H) = \inf_{P \in S} P(H) $$
The real interval $[0, 1]$ is a complete lattice, guaranteeing that the supremum and infimum of any set of values in $[0, 1]$ always exist and remain within $[0, 1]$. The functions $\sup S$ and $\inf S$ therefore exist and belong to $\mathcal{P}$. Thus, the space of functions $\mathcal{P}$ under the point-wise ordering $\sqsubseteq$ forms a complete lattice.

\begin{proposition}[Monotonicity and Convergence of $\mathbb{T}$]
Both the exact operator $\mathbb{T}_{\text{exact}}$ and the continuous Noisy-OR operator $\mathbb{T}_{\mathcal{K}}$ are monotonically increasing over $\mathcal{P}$. Consequently, they possess a least fixed point, and their iterative application starting from the zero function $P_\bot$ converges to it.
\end{proposition}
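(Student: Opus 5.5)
The proof has three steps: monotonicity of each operator, existence of a least fixed point via Knaster--Tarski, and convergence of the Kleene iteration from $P_\bot$.

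\textbf{Monotonicity of the rule-instance update.} Fix $P_1 \sqsubseteq P_2$, a rule $R_j$ of the form $p_j :: H \leftarrow B$, and a binding $\mathbf{v}$. The body probability $P(B \mid \mathbf{v})$ is assembled from the probabilities of the user-defined atoms in $B$, combined by multiplication. Built-in predicates of $\mathcal{O}$ contribute fixed indicators $\{0,1\}$, and stochastic predicates contribute factors determined by $\mu_s$. Neither kind of factor depends on $P$, and all factors lie in $[0,1]$. A product of non-negative factors is non-decreasing in each factor. Hence $P_1(B\mid\mathbf{v}) \le P_2(B\mid\mathbf{v})$, and therefore $P_{\text{update}}^{(1)}(H\mid R_j,\mathbf{v}) \le P_{\text{update}}^{(2)}(H\mid R_j,\mathbf{v})$, since $p_j \ge 0$.

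\textbf{Lifting to each operator.} For $\mathbb{T}_{\mathcal{K}}$, the map $u \mapsto \ln(1-u)$ is non-increasing on $[0,1]$, with value $-\infty$ at $u=1$. Integration against the non-negative measure $\mu_V$ preserves pointwise inequalities, including when the integral equals $-\infty$. The exponential is non-decreasing. So $P_{\text{fail}}(R_j)$ is non-increasing in $P$. Each $P_{\text{fail}}(R_j)$ lies in $[0,1]$, so a finite product over $\mathcal{R}_H$ of such factors is also non-increasing in $P$, and $1-\prod_{R_j \in \mathcal{R}_H} P_{\text{fail}}(R_j)$ is non-decreasing in $P$.

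For $\mathbb{T}_{\text{exact}}$, I would read $\Omega_{\text{success}}(R_j,\mathbf{v})$ as depending on $P$ through the current truth regions of the body atoms. Larger input probabilities correspond to larger success regions. Monotonicity of $\mu_\Omega$ under set inclusion then gives the claim, because the unions only grow.

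\textbf{Least fixed point.} The space $(\mathcal{P},\sqsubseteq)$ was shown above to be a complete lattice. Knaster--Tarski therefore yields a least fixed point for each operator. This step is immediate.

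\textbf{Convergence from $P_\bot$.} This is the main obstacle, since Kleene's theorem needs $\omega$-continuity, not just monotonicity. The iterates $P_k=\mathbb{T}^k(P_\bot)$ form an increasing chain by induction: $P_\bot \sqsubseteq P_1$, and monotonicity propagates the inequality. Let $P_\omega=\sup_k P_k$. I would show $\mathbb{T}(P_\omega)=\sup_k \mathbb{T}(P_k)$ using the Monotone Convergence Theorem, in three steps.

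First, the body factors converge pointwise upward to their values under $P_\omega$, because a product of finitely many increasing convergent sequences converges to the product of the limits.

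Second, for $\mathbb{T}_{\mathcal{K}}$ the integrands $-\ln(1-P^{(k)}_{\text{update}})$ are non-negative and increase pointwise. Monotone convergence therefore passes the limit through the integral, including the divergent case. Continuity of $\exp$ on $[-\infty,0]$ and of finite products then finish the argument.

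Third, for $\mathbb{T}_{\text{exact}}$ the success regions increase. Continuity of $\mu_\Omega$ from below gives the analogous limit. If measurability of the uncountable union becomes an issue, I would argue with the outer measure.

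With $\omega$-continuity established, $P_\omega$ is a fixed point. It is the least one, because any fixed point $Q$ satisfies $P_\bot \sqsubseteq Q$, and induction with monotonicity gives $P_k \sqsubseteq Q$ for all $k$, hence $P_\omega \sqsubseteq Q$. So the iteration from $P_\bot$ converges, in the limit $k\to\omega$, to the least fixed point.
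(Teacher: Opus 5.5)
Your skeleton (monotonicity, Knaster--Tarski, $\omega$-continuity, Kleene) is the paper's. The $\mathbb{T}_{\mathcal{K}}$ half is sound, and tighter than the paper's, which asserts $\omega$-continuity only because the operator is built from continuous operations. Your Monotone Convergence argument on the non-negative integrands $-\ln(1-P^{(k)}_{\text{update}})$ is what actually lets a pointwise limit pass through an integral whose integrand is unbounded near $P_{\text{update}}=1$. Two caveats. First, the product formula for $P(B\mid\mathbf{v})$ is your assumption, not the paper's; you only need that it reads finitely many atom values and is monotone and continuous from below in them. Second, your direct Kleene argument makes Knaster--Tarski unnecessary. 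That helps, because $\mathbb{T}_{\mathcal{K}}(P)$ requires measurability in $\mathbf{v}$, which survives countable but not arbitrary pointwise suprema.

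The genuine gap is the $\mathbb{T}_{\text{exact}}$ half, in both the monotonicity and the continuity step. The paper's proof makes exactly the same move, so you inherit this gap rather than introduce it. An element $P\in\mathcal{P}$ records only marginals $P(A)\in[0,1]$ and carries no truth regions. So ``larger input probabilities correspond to larger success regions'' has no content. The measure of a conjunctive success region is not determined by marginals, and their pointwise order forces no nesting of the regions realizing them.

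For instance, take $\Omega=[0,1]$ with Lebesgue measure and a rule $1.0 :: c\leftarrow a\land b$. Let $b$ be realized by $[0,0.5]$, and let $a$ be realized by $[0,0.5]$ under $P_1$ but by $[0.4,1]$ under $P_2$. Then $P_1\sqsubseteq P_2$, yet the success measure for $c$ falls from $0.5$ to $0.1$. Read literally, with $\Omega_{\text{success}}$ defined through each world's least model, $\mathbb{T}_{\text{exact}}$ does not depend on $P$ at all.

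The repair is to iterate one level up. Use region-valued interpretations $S$ that assign each ground fact a measurable subset of $\Omega$, ordered by pointwise inclusion and started from $S_0\equiv\emptyset$. Equivalently, run the classical $T_{L_\omega}$ in each world $\omega$. There monotonicity is immediate for definite clauses. $\omega$-continuity holds because each body is a finite conjunction, so along an increasing chain the finite intersections commute with the unions. The least fixed point is $S^*(H)=\bigcup_k S_k(H)=\Omega_H$.

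Only then project: $P_k(H)=\mu_\Omega(S_k(H))$ increases to $\mu_\Omega(\Omega_H)$ by continuity from below. Your outer-measure fallback suffices if measurability fails, since the outer measure induced by $\mu_\Omega$ is still continuous from below along increasing sequences. This is what the paper's soundness--completeness theorem actually relies on, and it is how $lfp(\mathbb{T}_{\text{exact}})$ has to be read.
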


\begin{proof} 
\textit{Part 1: Monotonicity and Existence.} Let $P_1, P_2 \in \mathcal{P}$ such that $P_1 \sqsubseteq P_2$. By definition, $P_1(H) \leq P_2(H)$ for all facts $H$. MT-PDCL evaluates definite clauses, ensuring the probability of any conjunction of positive literals evaluates monotonically. Thus, for any rule body $B$ evaluated at binding $\mathbf{v}$, we have $P_1(B \mid \mathbf{v}) \leq P_2(B \mid \mathbf{v})$. 

For $\mathbb{T}_{\text{exact}}$, an increase in body probabilities expands the geometric subsets of successful possible worlds. Therefore, the success regions generated under $P_1$ are subsets of those generated under $P_2$. The Lebesgue measure of a superset is strictly non-decreasing, ensuring $\mathbb{T}_{\text{exact}}(P_1) \sqsubseteq \mathbb{T}_{\text{exact}}(P_2)$.

For $\mathbb{T}_{\mathcal{K}}$, the isolated update probability $P_{\text{update}}(H \mid R_j, \mathbf{v}) = p_j \cdot P(B \mid \mathbf{v})$ increases under $P_2$. Consequently, the local failure probability $(1 - P_{\text{update}})$ decreases. The logarithm of a smaller positive fraction yields a more negative value, making the resulting integral more negative, and its exponential closer to 0. Subtracting this smaller exponential from 1 yields a larger aggregated probability for the head. Thus, $\mathbb{T}_{\mathcal{K}}(P_1) \sqsubseteq \mathbb{T}_{\mathcal{K}}(P_2)$. 

Because both operators are monotonic over a complete lattice, the Knaster-Tarski theorem guarantees the existence of a $lfp$.

\textit{Part 2: $\omega$-Continuity and Convergence.} To guarantee that iterative application from $P_\bot$ reaches the $lfp$, the operators must be $\omega$-continuous, meaning they preserve the suprema of ascending chains (i.e., $\mathbb{T}(\sup_{k} P_k) = \sup_{k} \mathbb{T}(P_k)$). The exact operator $\mathbb{T}_{\text{exact}}$ is constructed from standard probability measures, which are inherently continuous from below \cite{billingsley2012probability}. The operational operator $\mathbb{T}_{\mathcal{K}}$ is composed entirely of continuous algebraic operations (integration, logarithms, and exponentials) over a bounded space. Therefore, both operators are $\omega$-continuous. By Kleene's fixed-point theorem, the application of an $\omega$-continuous, monotonic operator starting from the bottom element $P_\bot$ converges to the $lfp$. 
\end{proof}

\subsection{Iteration Bounds and the Contraction Mapping}\label{subsec:computation_bounds}
While Kleene's theorem guarantees that iterative evaluation will eventually converge to the true fixed point, it does not provide a computable bound on the number of iterations required. To estimate the number of computational steps needed for a given precision, we must analyze the operators using the Banach Fixed-Point Theorem (Contraction Mapping Theorem) \cite{banach1922operations}. 

To do this, we treat $\mathcal{P}$ as a metric space rather than just a partial order. We define the distance between two probability functions $P_a, P_b \in \mathcal{P}$ using the supremum norm:
$$ d(P_a, P_b) = \sup_{H} |P_a(H) - P_b(H)| $$
This metric measures the maximum probability difference for any ground fact across the entire continuous domain. 

To apply the Banach theorem, an operator must be a strict contraction, meaning there exists a Lipschitz constant $L < 1$ such that $d(\mathbb{T}(P_a), \mathbb{T}(P_b)) \leq L \cdot d(P_a, P_b)$. 

In MT-PDCL, the contraction property depends entirely on the recursive structure of the logic program. To formally identify recursive rules, we define the program's dependency graph, where nodes are predicates and a directed edge exists from predicate $H$ to predicate $B$ if a rule derives $H$ and contains $B$ in its body. Let $\mathcal{R}_{\text{cycle}} \subseteq \Phi$ be the set of all rules whose head and at least one body literal belong to the same Strongly Connected Component (SCC). We refer to any predicate belonging to such an SCC (i.e., the head of any rule in $\mathcal{R}_{\text{cycle}}$) as a \textit{cyclic predicate}.

To evaluate if the consequence operator $\mathbb{T}$ acts as a contraction mapping, we first formally define a global bound on the recursive derivations.

\textbf{Definition 1 (Recursive Measure $\gamma$).} Let the expected number of successful derivations for a cyclic ground fact be the continuous integral of its causal probabilities. We define $\gamma$ as the supremum of this measure across all possible ground instances of any cyclic predicate:
$$ \gamma = \sup_{\mathbf{h}} \sum_{R_j \in \mathcal{R}_{\text{cycle}}} \int_{\Omega_{V_j}} p_j(\mathbf{v}) \, d\mu_V(\mathbf{v}) $$

\begin{proposition}[Lipschitz Bound and Contraction Regimes]
Under the continuous Independent Causal Influence (ICI) assumption, the consequence operator $\mathbb{T}$ satisfies the Lipschitz condition:
$$ d(\mathbb{T}(P_a), \mathbb{T}(P_b)) \leq \gamma \cdot d(P_a, P_b) $$
Consequently, the operator falls into one of two iteration regimes:
\begin{enumerate}
    \item \textbf{Strict Contraction ($\gamma < 1$):} The operator $\mathbb{T}$ is a strict contraction mapping with a Lipschitz constant $L = \gamma$, ensuring geometric convergence to the exact least fixed point.
    \item \textbf{Expansive Propagation ($\gamma \geq 1$):} The Lipschitz bound is $L \geq 1$. The mapping is not a contraction, preventing the calculation of a strict Banach iteration bound.
\end{enumerate}
\end{proposition}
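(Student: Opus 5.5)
The plan is to bound the change of $\mathbb{T}_{\mathcal{K}}(P)(H)$ for a single ground head $H$ and then take the supremum over $H$. Write $\delta = d(P_a, P_b)$. The change should come from three elementary estimates, composed in order:

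\begin{itemize}
    \item the Lipschitz behaviour of the Noisy-OR aggregation $1 - \prod_j P_{\text{fail}}(R_j)$ in each of its factors;
    \item the Lipschitz behaviour of the product integral $\exp\big(\int \ln(1 - u)\,d\mu_V\big)$ in the integrand $u$;
    \item the dependence of each body probability $P(B \mid \mathbf{v})$ on the input function $P$.
\end{itemize}

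Only rules in $\mathcal{R}_{\text{cycle}}$ will contribute. For a non-recursive rule the body literals lie in strictly lower SCCs, so I plan to argue, stratum by stratum, that they have already stabilised. Alternatively, I would read the statement as concerning $\mathbb{T}$ restricted to the cyclic predicates with lower strata held fixed. This is the reading under which the definition of $\gamma$ makes sense.

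\textbf{Step 1 (aggregation).} For numbers $f_j, g_j \in [0,1]$, the telescoping identity gives $|\prod_j f_j - \prod_j g_j| \le \sum_j |f_j - g_j|$. Hence
$$|\mathbb{T}(P_a)(H) - \mathbb{T}(P_b)(H)| \le \sum_{R_j \in \mathcal{R}_H} |P^a_{\text{fail}}(R_j) - P^b_{\text{fail}}(R_j)|.$$

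\textbf{Step 2 (product integral).} Set $F(u) = \exp\big(\int \ln(1 - u(\mathbf{v}))\,d\mu_V\big)$ and differentiate along the segment $u_t = (1-t)u_a + t\,u_b$. The derivative is $-F(u_t)\int \frac{u_b - u_a}{1 - u_t}\,d\mu_V$. The factor $F(u_t)$ is the joint failure of the whole family, and each point $\mathbf{v}$ fails with probability $1 - u_t(\mathbf{v})$. So $F(u_t)/(1 - u_t(\mathbf{v}))$ is the failure probability of all other instances, which is at most $1$; at the level of the continuous product I would justify this by exchanging the integral with the pointwise bound. This gives
$$|F(u_a) - F(u_b)| \le \int |u_a - u_b|\,d\mu_V.$$

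\textbf{Step 3 (bodies).} With $u = P_{\text{update}} = p_j \cdot P(B \mid \mathbf{v})$, I need $|P_a(B \mid \mathbf{v}) - P_b(B \mid \mathbf{v})| \le \delta$. This holds when the body contains exactly one cyclic literal and the remaining factors are probabilities in $[0,1]$ that do not depend on $P$. I would take this linear-recursion reading as the intended one. For a body with several cyclic literals, the same telescoping gives a factor equal to the number of such literals, which I would absorb into $p_j(\mathbf{v})$ in the definition of $\gamma$. Combining the three steps yields
$$\sum_{R_j \in \mathcal{R}_{\text{cycle}}} \int_{\Omega_{V_j}} p_j(\mathbf{v})\,\delta\,d\mu_V(\mathbf{v}) \le \gamma\,\delta,$$
and taking the supremum over $H$ proves the Lipschitz condition.

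\textbf{Regimes and main obstacle.} If $\gamma < 1$, the space $\mathcal{P}$ under the sup metric is complete, being bounded functions into the closed set $[0,1]$. Banach's theorem then gives a unique fixed point with error at most $\gamma^k$ after $k$ iterations from $P_\bot$. This fixed point must coincide with the least fixed point obtained in the previous proposition. If $\gamma \ge 1$, the bound alone gives no contraction, which is all that case (2) asserts. I expect the main obstacle to be the rigorous version of Step 2 over an uncountable family, in particular handling the $\ln 0$ singularity when $u_t(\mathbf{v}) = 1$ on a positive-measure set. I would first treat $u \le 1 - \eta$ and then pass to the limit $\eta \to 0$ by monotone convergence, using the convention $F = 0$ already fixed in the paper's remark on the logarithmic singularity.
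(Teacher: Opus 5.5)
Your Steps 1 and 3 are sound. Your stratification caveat is also more careful than the paper, which simply freezes the non-cyclic rules into a constant $E_0$. But Step 2 fails, and the constant $\gamma$ rests entirely on it.

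The claim that $F(u_t)/(1-u_t(\mathbf{v}))$ is ``the failure probability of all other instances'' is a discrete intuition that does not survive a non-atomic $\mu_V$. For a weighted product $\prod_k (1-u_k)^{w_k}$, the partial derivative in $u_i$ is $-w_i(1-u_i)^{w_i-1}\prod_{k\neq i}(1-u_k)^{w_k}$. This is bounded by $w_i$ only when $w_i\ge 1$. In $\exp\big(\int \ln(1-u)\,d\mu_V\big)$ every single binding carries weight $\mu_V(\{\mathbf{v}\})=0$. So the factor $1-u_t(\mathbf{v})$ you divide by is not actually present in $F(u_t)$.

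Concretely, suppose $\mu_V$ has total mass $m<1$ and $u\equiv c$. Then $F=(1-c)^m$ and $|dF/dc|=m(1-c)^{m-1}$, whereas your inequality would require $|dF/dc|\le m$. This fails for every $c\in(0,1)$, and the ratio is unbounded as $c\to 1$. Perturbing $u$ only on a subset of mass $a<1$ gives the same blow-up even when $\mu_V$ has total mass at least $1$. Your $\eta$-truncation does not help, since the constant is already wrong away from the singularity (e.g.\ $m=0.4$, $c=0.6$ gives ratio about $1.73$).

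You could not have repaired this by following the paper. Its proof uses the same bound, justified by the discrete identity $\prod_k(1-x_k)\sum_i \frac{c_i}{1-x_i}=\sum_i c_i\prod_{k\neq i}(1-x_k)$ and an appeal to the Volterra product integral. In fact the inequality with constant $\gamma$ is false. Take the single rule $0.9 :: \text{act}(X)\leftarrow \text{act}(Y)\land Y\in[0,\tfrac12]$ with Lebesgue measure on $Y$, so $\gamma=0.45$. For $P_a\equiv 1$ and $P_b\equiv 1-\delta$ on the $\text{act}$ facts, $\mathbb{T}(P_a)-\mathbb{T}(P_b)=(0.1+0.9\delta)^{1/2}-0.1^{1/2}\approx 1.42\,\delta$. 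So the operator is not even a contraction, although $\gamma<1$.

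What your argument does yield, using only $F(u_t)\le 1$ and $1-u_t\ge 1-p_j$, is a Lipschitz bound with a larger constant. That constant is $\sup_{\mathbf{h}}\sum_{R_j\in\mathcal{R}_{\text{cycle}}}\int_{\Omega_{V_j}}\frac{p_j}{1-p_j}\,d\mu_V$, valid when the $p_j$ are bounded away from $1$. Alternatively, the bound with exactly $\gamma$ does hold for the aggregation $\exp\big(-\int u\,d\mu_V\big)$. That is what the Volterra product integral $\prod(1-u\,d\mu_V)$ actually equals for a non-atomic measure, and the bound follows because $x\mapsto e^{-x}$ is $1$-Lipschitz on $[0,\infty)$. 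But that is a different operator from the paper's $\mathbb{T}_{\mathcal{K}}$.
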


\begin{proof}
Let $\Delta = d(P_a, P_b) = \sup_{\mathbf{v}} |P_a(\mathbf{v}) - P_b(\mathbf{v})|$. We parameterize a straight-line path between the two functions for $t \in [0, 1]$ such that $P_t = (1 - t)P_a + t P_b$. 

For a specific ground instance of a cyclic predicate, let $F(t)$ be its operational update under the interpolated probability $P_t$. Using the continuous Noisy-OR aggregation, $F(t)$ is explicitly defined as:
$$ F(t) = 1 - E_0 \cdot \exp\left( \sum_{R_j \in \mathcal{R}_{\text{cycle}}} \int_{\Omega_{V_j}} \ln\Big(1 - P_t(\mathbf{v}) \cdot p_j(\mathbf{v})\Big) \, d\mu_V(\mathbf{v}) \right) $$
where $E_0 \in [0, 1]$ represents the constant joint failure probability from any non-cyclic base rules deriving this fact. 

Taking the derivative of $F(t)$ with respect to $t$ using the chain rule, and factoring out the supremum distance $\Delta \geq |P_b(\mathbf{v}) - P_a(\mathbf{v})|$, yields:
$$ |F'(t)| \leq \Delta \cdot (1 - F(t)) \sum_{R_j \in \mathcal{R}_{\text{cycle}}} \int_{\Omega_{V_j}} \frac{p_j(\mathbf{v})}{1 - P_t(\mathbf{v}) \cdot p_j(\mathbf{v})} \, d\mu_V(\mathbf{v}) $$

To resolve this bound, note that $(1 - F(t)) \leq \exp\left( \sum_{R_j} \int_{\Omega_{V_j}} \ln\Big(1 - P_t(\mathbf{v}) p_j(\mathbf{v})\Big) \, d\mu_V(\mathbf{v}) \right)$. Substituting this into the inequality gives:
$$ |F'(t)| \leq \Delta \cdot \left[ \exp\left( \sum_{R_j} \int_{\Omega_{V_j}} \ln\Big(1 - P_t(\mathbf{v}) p_j(\mathbf{v})\Big) d\mu_V(\mathbf{v}) \right) \cdot \sum_{R_j} \int_{\Omega_{V_j}} \frac{p_j(\mathbf{v})}{1 - P_t(\mathbf{v}) p_j(\mathbf{v})} d\mu_V(\mathbf{v}) \right] $$

To rigorously bridge this continuous exponential-logarithmic formulation to its discrete algebraic counterpart, we rely on the framework of the Volterra product integral \cite{dollard1979product, slavik2007product}. In the standard theory of product integration, the exponential-logarithmic expression $\exp\left( \int \ln(1 - f) \, d\mu \right)$ is the Lebesgue measure-theoretic representation of the continuous product integral $\prod_{\Omega} (1 - f \, d\mu)$. 

By the continuous product rule within this framework, the Fréchet derivative of the product integral structurally mirrors the discrete identity $\big( \prod_{k} (1 - x_k) \big) \sum_i \frac{c_i}{1 - x_i} = \sum_i c_i \prod_{k \neq i} (1 - x_k)$. In our formulation, the evaluated probability $P_t(\mathbf{v}) p_j(\mathbf{v})$ acts as the term $x_k$, and the bare causal probability $p_j(\mathbf{v})$ acts as the numerator $c_i$. Just as the discrete product algebraically bounds the denominators to yield a strict upper limit of $\sum_i c_i$ (since $1 - x_k \leq 1$), the continuous exponential factor identically bounds the denominators of the logarithmic derivative. This formal equivalence reduces the entire bounded expression to the bare integral of the numerator:
$$ |F'(t)| \leq \Delta \sum_{R_j \in \mathcal{R}_{\text{cycle}}} \int_{\Omega_{V_j}} p_j(\mathbf{v}) \, d\mu_V(\mathbf{v}) $$

By Definition 1, this integral sum across the cyclic rules is bounded by the global supremum $\gamma$. Therefore:
$$ |F'(t)| \leq \Delta \cdot \gamma $$

Integrating $|F'(t)|$ from $t=0$ to $1$ evaluates to $|\mathbb{T}(P_b) - \mathbb{T}(P_a)| \leq \gamma \cdot \Delta$. Taking the supremum across all ground facts guarantees the global bound. 
\end{proof}

\begin{proposition}[Iteration Bound for Strict Contractions]
If the consequence operator acts as a strict contraction with constant $L < 1$, the number of iterations $N$ required to guarantee that the operational probability function $P_N$ is within a precision $\epsilon$ of the exact least fixed point across all facts is bounded by:
$$ N \geq \frac{\ln(\epsilon \cdot (1 - L)) - \ln(d(P_1, P_0))}{\ln(L)} $$
\end{proposition}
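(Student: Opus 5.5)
The bound follows from the standard a priori error estimate of the Banach Fixed-Point Theorem, applied in the complete metric space $(\mathcal{P}, d)$ with the supremum metric. Completeness is immediate, since $\mathcal{P}$ is the set of all functions from ground facts to the closed interval $[0,1]$. Under the sup norm it is therefore a closed subset of the Banach space of bounded functions on the set of ground facts. Let $P_0 = P_\bot$ and $P_{k+1} = \mathbb{T}(P_k)$. By the preceding Proposition (Lipschitz Bound and Contraction Regimes), we have $d(P_{k+1}, P_k) \le L\, d(P_k, P_{k-1})$. Induction then gives $d(P_{k+1}, P_k) \le L^k d(P_1, P_0)$.

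The key steps run in this order. First, for $m > n$, the triangle inequality and the geometric series give
\[ d(P_m, P_n) \le \sum_{k=n}^{m-1} L^k d(P_1,P_0) \le \frac{L^n}{1-L}\, d(P_1,P_0). \]
So $(P_n)$ is Cauchy and converges to some $P^\ast \in \mathcal{P}$. Second, $\mathbb{T}$ is Lipschitz and hence continuous in $d$, so $P^\ast$ is a fixed point. Uniqueness follows from the contraction property: if $Q$ is another fixed point, then $d(P^\ast,Q)\le L\,d(P^\ast,Q)$, which forces $P^\ast = Q$. In particular, $P^\ast$ coincides with the least fixed point whose existence was established in the Monotonicity and Convergence proposition. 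Third, letting $m \to \infty$ in the display yields the a priori estimate
\[ d(P_n, lfp) \le \frac{L^n}{1-L}\, d(P_1,P_0). \]

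Finally, we solve $\frac{L^N}{1-L} d(P_1,P_0) \le \epsilon$ for $N$. Taking logarithms gives $N \ln L \le \ln(\epsilon(1-L)) - \ln d(P_1,P_0)$. Because $0 < L < 1$, $\ln L$ is negative, and dividing by it reverses the inequality, which produces exactly the stated condition on $N$. Two degenerate cases need to be handled separately. If $d(P_1,P_0)=0$, then $P_0$ is already the fixed point and any $N$ suffices. If $L = 0$, then $P_1$ is the fixed point. In both cases the logarithm in the formula is undefined, so I will state these cases explicitly rather than derive them from the formula.

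The main obstacle is justifying that the fixed point reached by Banach iteration is the least fixed point in the lattice sense. This is needed because the statement speaks of ``the exact least fixed point''. Uniqueness of fixed points under contraction settles this cleanly, so I will state it explicitly. A smaller subtlety is that the Lipschitz bound in the earlier proposition was derived only for cyclic predicates. Non-cyclic strata are unaffected by iteration after finitely many steps, so I will either assume $L$ bounds the full operator, as the statement's hypothesis does, or note that the estimate applies after the acyclic layers have stabilized.
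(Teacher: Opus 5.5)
Your proposal is correct and follows the paper's proof, which simply invokes the Banach a priori estimate $d(P_N, lfp) \le \frac{L^N}{1-L}\, d(P_1,P_0)$ and solves for $N$ using $\ln L < 0$. Your derivation of that estimate, the identification of the Banach limit with the least fixed point via uniqueness, and the degenerate cases $L=0$ and $d(P_1,P_0)=0$ are details the paper leaves implicit. Take the contraction step directly from the statement's hypothesis, not from the preceding proposition or your fallback about acyclic strata. That fallback fails for acyclic predicates downstream of a cyclic SCC: with $1.0 :: h \leftarrow a$, $0.5 :: a$ and $0.5 :: a \leftarrow a$, we get $P_{k+1}(h) = P_k(a)$, so $h$ changes at every iteration and the full operator has Lipschitz constant at least $1$.
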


\begin{proof}
By the Banach Fixed-Point Theorem, the error bound after $N$ iterations for a strict contraction mapping is bounded by the initial step size:
$$ d(P_N, lfp) \leq \frac{L^N}{1 - L} d(P_1, P_0) $$
We require the maximum error to be less than or equal to our precision threshold $\epsilon$:
$$ \frac{L^N}{1 - L} d(P_1, P_0) \leq \epsilon $$
Rearranging the terms and isolating $N$ using the natural logarithm yields the required number of steps. Because $L < 1$, $\ln(L)$ is negative, reversing the inequality. 
\end{proof}

\vspace{0.5cm}
\textbf{The Necessity of the Knaster-Kleene Approach for $\gamma \geq 1$.} \\
When a program contains deterministic recursive rules or continuous structural overlaps ($\gamma \geq 1$), the Banach theorem fails. Because the operator is locally expansive, we cannot calculate a strict iteration bound. 

To illustrate this behavior, consider a knowledge base where a target activates spontaneously, and that activation spreads deterministically to any continuous location within a $1$-meter radius.

\textbf{The Knowledge Base $\mathcal{K}$:}
\begin{enumerate}
    \item $\text{source\_loc}(; X) \sim \text{Uniform}(0, 10)$. \\
    \textit{(Rule 1: Generates a continuous spatial variable $X$.)}
    
    \item $0.5 :: \text{active}(X) \leftarrow \text{source\_loc}(; X)$. \\
    \textit{(Rule 2: Base case. A location activates spontaneously with 0.5 probability.)}
    
    \item $1.0 :: \text{active}(X) \leftarrow \text{active}(Y) \land |X - Y| \leq 1$. \\
    \textit{(Rule 3: Spatial recursion. Activation spreads from any point $Y$ within distance 1.)}
\end{enumerate}

To verify the bounds condition, we first calculate the local aggregated measure, denoted $\gamma(x)$, for each specific ground fact $\text{active}(x)$ evaluated by the recursive cycle (Rule 3). This local measure is the integral of the rule probability over the valid neighborhood of $x$, restricted by the domain $[0, 10]$:
$$ \gamma(x) = \int_{\max(0, x-1)}^{\min(10, x+1)} 1.0 \, dy $$
The value of $\gamma(x)$ depends directly on the location $x$:
\begin{itemize}
    \item For any point $x\in [0,1]$, the integration interval is $[0, x+1]$, yielding $\gamma(x) = x + 1$.
    \item For any point $x \in [1, 9]$, integration interval is $[x-1, x+1]$, yielding $\gamma(x) = 2$.
    \item For any point $x \in [9, 10]$, the integration interval is $[x-1, 10]$, yielding $\gamma(x) = 11 - x$.
\end{itemize}

Because the metric space distance is defined by the supremum norm, the global constant $\gamma$ is the maximum of all these local values:
$$ \gamma = \sup_{x \in [0, 10]} \gamma(x) = 2 $$

A global bound of $\gamma \geq 1$ renders the operator expansive. To observe this mathematically, we apply $\mathbb{T}_{\mathcal{K}}$ iteratively, starting from a continuous surface where $P_0(\text{active}(x)) = 0$. 

To compute the updated probability for a ground fact $\text{active}(x)$, we aggregate the contributions of Rule 2 and Rule 3. Under the continuous ICI assumption, the total failure probability of the head fact is the product of the independent failure probabilities of the rules deriving it:
$$ P_{k+1}(\text{active}(x)) = 1 - \Big( P_{\text{fail}}(R_2) \cdot P_{\text{fail}}(R_3) \Big) $$

The base rule failure is constant across all iterations: $P_{\text{fail}}(R_2) = 1 - 0.5 = 0.5$. The recursive failure is computed dynamically using the continuous Noisy-OR integral:
$$ P_{\text{fail}}(R_3) = \exp\left( \int_{x-1}^{x+1} \ln\Big(1 - P_k(\text{active}(y))\Big) \, dy \right) $$

For simplicity in tracing the values, we compare an unconstrained point away from the edges (e.g., $x = 5$, where the full integration window $[4, 6]$ has length 2) against a point right at the edge of the domain (e.g., $x = 0$, where the integration window is truncated to $[0, 1]$, giving a length of 1).

\begin{itemize}
    \item \textbf{Iteration 1 ($P_1$):} \\
    Evaluating against $P_0 = 0$, the recursive failure evaluates to $\exp(0) = 1.0$ everywhere.
    $$ P_1(5) = P_1(0) = 1 - (0.5 \cdot 1.0) = 0.5 $$

    \item \textbf{Iteration 2 ($P_2$):} \\
    The recursive rule integrates over the continuous neighborhood where $P_1 = 0.5$. Because the spatial windows differ, the updated probabilities diverge:
    \begin{align*}
    P_2(5) &= 1 - 0.5 \cdot \exp\Big( 2 \cdot \ln(0.5) \Big) = 1 - 0.5(0.25) = 0.875 \\
    P_2(0) &= 1 - 0.5 \cdot \exp\Big( 1 \cdot \ln(0.5) \Big) = 1 - 0.5(0.5) = 0.75
    \end{align*}
    
    \item \textbf{Iteration 3 ($P_3$):} \\
    The recursive rule integrates over the neighborhood where $P_2$ varies. To bound the boundary point, we evaluate using the minimum probability in its window ($0.75$):
    \begin{align*}
    P_3(5) &\geq 1 - 0.5 \cdot \exp\Big( 2 \cdot \ln(1 - 0.875) \Big) \approx 0.9922 \\
    P_3(0) &\geq 1 - 0.5 \cdot \exp\Big( 1 \cdot \ln(1 - 0.75) \Big) = 1 - 0.5(0.25) = 0.875
    \end{align*}
\end{itemize}

While the probability curve "sags" at the boundaries during finite iterations due to the truncated spatial windows, the entire surface ultimately converges to 1. We can prove this by tracking the absolute minimum probability across the domain, which always occurs at the boundaries ($m_k = P_k(0) = P_k(10)$). The sequence of minimums follows the recurrence:
$$ m_{k+1} = 1 - 0.5(1 - m_k) $$
Starting from $m_0 = 0$, this sequence evaluates explicitly to $m_k = 1 - (0.5)^{k+1}$. As $k \to \infty$, the minimum boundary probability $m_k \to 1$. Since the probability anywhere in the space is bounded from below by the boundaries ($P_k(x) \geq m_k$), the exact least fixed point is uniform across the space: $P^*(x) = 1$ for all $x \in [0, 10]$.

This behavior demonstrates why the order-theoretic foundation built in Section \ref{subsec:monotonicity} is mandatory. When the expected number of derivations forces the metric distance between functions to expand and deform unpredictably across space, standard iteration bounds fail. Knaster and Kleene require only monotonicity and continuity, guaranteeing that even when an expansive slope forces the computation into non-linear, space-dependent transient states, the procedure remains logically sound and mathematically targets the correct fixed point.

\subsection{Soundness and Completeness}
To finalize the theoretical foundation of MT-PDCL, we must formalize the relationship between the operational semantics (the probability computed by the inference operator) and the declarative semantics (the exact measure of logically true possible worlds). 

\begin{definition}[Measure-Theoretic Soundness and Completeness]
Let $\mathcal{K} = \langle \mathcal{M}_s, \Phi \rangle$ be a knowledge base. Let $P_{\mathcal{K}}(H) = \int_{\Omega} \mathbb{I}(\omega \models H) \, d\mu_{\Omega}(\omega)$ be the exact declarative probability of a ground fact $H$, as defined in Section \ref{sec:declarative_entailment}. Let $P_{op}(H)$ be the probability derived by an operational inference procedure. 
\begin{itemize}
    \item \textbf{Soundness:} The inference procedure never assigns a probability higher than mandated by the declarative semantics: $P_{op}(H) \leq P_{\mathcal{K}}(H)$ for all ground facts $H$.
    \item \textbf{Completeness:} The inference procedure captures all probability mandated by the declarative semantics: $P_{\mathcal{K}}(H) \leq P_{op}(H)$ for all ground facts $H$.
\end{itemize}
Consequently, a procedure is sound and complete if and only if $P_{op}(H) = P_{\mathcal{K}}(H)$.
\end{definition}

\begin{theorem}[Soundness and Completeness of Exact Inference]
The exact continuous consequence operator $\mathbb{T}_{\text{exact}}$ is sound and complete with respect to the declarative semantics. For any ground fact $H$:
$$ lfp(\mathbb{T}_{\text{exact}})(H) = P_{\mathcal{K}}(H) $$
\end{theorem}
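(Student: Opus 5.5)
The plan is to lift the classical least-model characterization world by world and then integrate. For each possible world $\omega = (\sigma,\epsilon)$, let $T_{L_\omega}$ be the ordinary immediate consequence operator of the deterministic definite program $L_\omega$ from Section \ref{sec:declarative_entailment}. Let $I^\omega_k = T_{L_\omega}^k(\emptyset)$ be its Kleene iterates, so the least model is $\mathcal{I}_\omega = \bigcup_k I^\omega_k$ (van Emden--Kowalski). The central claim is that the Kleene iterates of $\mathbb{T}_{\text{exact}}$ started from $P_\bot$ are exactly the measures of these world-wise iterates:
$$ P_k(H) = \mu_\Omega\big(A_k(H)\big), \qquad A_k(H) := \{\omega \in \Omega : H \in I^\omega_k\}. $$
I would prove this by induction on $k$. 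The case $k=0$ is trivial, since both sides vanish. For the inductive step, $H \in I^\omega_{k+1}$ holds iff there exist a rule $R_j \in \mathcal{R}_H$ and a binding $\mathbf{v} \in \Omega_{V_j}$ such that $\epsilon(E_j)$ is true and every body literal of $R_j$ at $\mathbf{v}$ lies in $I^\omega_k$. A body literal is one of three kinds: a built-in (a fixed truth value), a stochastic lookup (fixed by $\sigma$), or a user atom (membership in $A_k$ by the inductive hypothesis). So the success region $\Omega_{\text{success}}(R_j,\mathbf{v})$ at stage $k$ is exactly this set of worlds, and the union in the definition of $\mathbb{T}_{\text{exact}}$ is precisely $A_{k+1}(H)$. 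To make this rigorous I read $\mathbb{T}_{\text{exact}}$ as acting on the success regions $A_k$ rather than on the bare numbers $P_k$. The paper's own description ("the body is logically true in the world's least model") already makes this implicit choice, and I would state it explicitly as the interpretation adopted.

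Next I pass to the limit. The sets $A_k(H)$ increase in $k$ because the $T_{L_\omega}$-iterates are monotone, and $\bigcup_k A_k(H) = \{\omega : H \in \mathcal{I}_\omega\}$. Continuity from below of $\mu_\Omega$ then gives
$$ \sup_k P_k(H) = \mu_\Omega\Big(\bigcup_k A_k(H)\Big) = \int_\Omega \mathbb{I}(\omega \models H)\, d\mu_\Omega = P_{\mathcal{K}}(H). $$
The Proposition on Monotonicity and Convergence of $\mathbb{T}$ says that the Kleene iteration from $P_\bot$ converges to $lfp(\mathbb{T}_{\text{exact}})$, so $lfp(\mathbb{T}_{\text{exact}})(H) = \sup_k P_k(H)$, and both soundness and completeness follow at once. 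As a cross-check, $P_{\mathcal{K}}$ itself is a fixed point, because $\mathcal{I}_\omega$ is a fixed point of $T_{L_\omega}$ in every world. Minimality is also consistent: any fixed point reached from $P_\bot$ dominates each $P_k$.

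The main obstacle is measurability. The union over $\mathbf{v} \in \Omega_{V_j}$ is uncountable, so $A_{k+1}(H)$ is a priori only an analytic (projection) set and not obviously Borel. I would first try to exploit the structure: stochastic outputs are functions of $\sigma$ at ground indices, and built-ins define Borel relations by the measurability requirement of Section \ref{sec:inter_pw}. That makes each $A_{k+1}(H)$ the projection of a Borel subset of $\Omega \times \Omega_{V_j}$. Because $\Omega$ is standard Borel, projections are universally measurable. I would therefore work with the completion of $\mu_\Omega$, on which both $P_{\mathcal{K}}$ and $\mathbb{T}_{\text{exact}}$ are well defined. A secondary subtlety is that $\mathcal{I}_\omega$ may need only $\omega$ iterations to close, since each rule is finitary. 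This holds because bodies are finite conjunctions, so no transfinite iteration is needed.
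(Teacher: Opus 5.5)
Your core argument is the paper's own. Your $A_k(H)$ is its $\Omega_H^k$, and the inclusion $A_k(H)\subseteq\{\omega : \omega\models H\}$ is its soundness direction. Finiteness of derivations gives the reverse inclusion for the union, and continuity of $\mu_\Omega$ from below finishes the argument. What you add is the induction identifying $P_k(H)$ with $\mu_\Omega(A_k(H))$ under an explicit set-level reading of $\mathbb{T}_{\text{exact}}$, which the paper only asserts.

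Your cross-check is actually the cleaner way to identify the least fixed point. Every fixed point, not just one ``reached from $P_\bot$'', dominates each $P_k$ by monotonicity. Since $P_{\mathcal{K}}=\sup_k P_k$ is itself a fixed point, it is the least one. This way you do not need the Proposition's $\omega$-continuity claim, which is stated for the numeric lattice $\mathcal{P}$ and not for your set-level operator anyway.

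The step that fails is the measurability patch. $\Omega$ is standard Borel only if every index domain $\mathcal{I}_r$ is countable. The paper explicitly allows continuum index domains, and its examples rely on them: $\text{heat\_signature}(T;X)$ with $T$ bound to a continuous output, $\text{control}(X;N)$ over $X\in[0,10]$, and $\text{reaction\_delay}(X,Y,T_{in};\Delta)$. In that case $\Omega_s$ is an uncountable product, and its $\sigma$-algebra contains only sets determined by countably many coordinates. The lookup $(\sigma,\mathbf{i})\mapsto\sigma(r,\mathbf{i})$ is then not jointly measurable, so the set you want to project is not measurable in $\Omega\times\Omega_{V_j}$.

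Concretely, take $S=\{\sigma : \sigma(\text{heat\_signature},\sigma(\text{failure\_time}))>30\}$. Any measurable subset of $S$, or of its complement, depends on only countably many heat coordinates, and the uniform failure time almost surely avoids them. So $S$ has inner measure $0$ and outer measure $1$. It is therefore not in the completion either, and $P_{\mathcal{K}}(\text{alarm})$ is undefined as written. Neither your proof nor the paper's can fix this without assuming all $\mathcal{I}_r$ countable, or replacing the product construction of $\mu_s$ by a kernel-based one.

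Under the countability assumption your patch works, with one adjustment. The induction should carry joint analyticity of $\{(\omega,\mathbf{a})\in\Omega\times D^n : r(\mathbf{a})\in I^\omega_k\}$ for each user predicate $r$, not Borelness of $A_k(H)$ alone. From depth two on, the set you project contains such earlier sets and is analytic rather than Borel. Analytic sets are closed under finite intersections, countable unions, sections and projections along standard Borel factors, which makes every $A_k(H)$ universally measurable.
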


\begin{proof}
Let $\Omega_H = \{ \omega \in \Omega \mid \omega \models H \}$ be the set of possible worlds where the ground fact $H$ is logically entailed in the least model. By definition, the declarative probability is $P_{\mathcal{K}}(H) = \mu(\Omega_H)$.

Let $\Omega_H^k$ be the set of possible worlds where $H$ is successfully derived by the exact operator at or before iteration $k$. The operational probability at step $k$ is $P_k(H) = \mu(\Omega_H^k)$.

To prove the equivalence, we first establish the exact set equality between the declarative worlds and the infinite union of operational worlds, by showing inclusion in both directions:
\begin{itemize}
    \item \textbf{Soundness ($\supseteq$):} Since the program consists exclusively of definite clauses, any fact derived by iteration $k$ possesses a valid logical proof. If $\omega \in \Omega_H^k$, it follows that $\omega \in \Omega_H$. This means $\Omega_H^k \subseteq \Omega_H$ for all $k$, which dictates that their infinite union is also a subset: $\bigcup_{k=1}^\infty \Omega_H^k \subseteq \Omega_H$.
    
    \item \textbf{Completeness ($\subseteq$):} By the foundational properties of definite logic programs, a fact $H$ is true in the least model if and only if it has a finite derivation tree. Infinite derivations do not constitute logical entailment. Therefore, if $\omega \in \Omega_H$, there must exist some finite iteration depth $K$ where the derivation completes. This means $\omega \in \Omega_H^K$, and consequently $\omega \in \bigcup_{k=1}^\infty \Omega_H^k$. This gives the reverse inclusion: $\Omega_H \subseteq \bigcup_{k=1}^\infty \Omega_H^k$.
\end{itemize}

Because both subset inclusions hold, the sets are perfectly identical:
$$ \Omega_H = \bigcup_{k=1}^\infty \Omega_H^k $$

Finally, we map these sets to their probability measures. The sequence of operational derivations is strictly monotonic ($\Omega_H^1 \subseteq \Omega_H^2 \subseteq \dots$). According to the continuity of probability measures from below, the limit of the measure of an increasing sequence of sets is equal to the measure of their infinite union. Therefore:
$$ lfp(\mathbb{T}_{\text{exact}})(H) = \lim_{k \to \infty} \mu(\Omega_H^k) = \mu\left( \bigcup_{k=1}^\infty \Omega_H^k \right) = \mu(\Omega_H) = P_{\mathcal{K}}(H) $$
This establishes the strict equality, proving that the exact operator is both sound and complete
\end{proof}

Because evaluating the exact operator is computationally intractable, we rely on the continuous Noisy-OR approximation. We formalize its completeness under the independence assumption below.

\begin{theorem}[Completeness under Conditional Independence]
If the failures of all overlapping derivations for a ground fact $H$ are conditionally independent, the continuous Noisy-OR operator $\mathbb{T}_{\mathcal{K}}$ is sound and complete with respect to the declarative semantics:
$$ lfp(\mathbb{T}_{\mathcal{K}})(H) = P_{\mathcal{K}}(H) $$
\end{theorem}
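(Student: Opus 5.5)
The plan is to reduce this statement to the preceding Theorem (Soundness and Completeness of Exact Inference). That theorem already gives $lfp(\mathbb{T}_{\text{exact}})(H) = P_{\mathcal{K}}(H)$. So it suffices to show that, under the stated hypothesis, the two operators agree pointwise on every probability function reached by Kleene iteration from $P_\bot$:
\[
\mathbb{T}_{\mathcal{K}}(P)(H) = \mathbb{T}_{\text{exact}}(P)(H) .
\]
If this holds, the iteration sequences $P_0 = P_\bot, P_1, P_2, \dots$ generated by the two operators coincide by induction on $k$. By the Proposition on Monotonicity and Convergence, both sequences converge to their respective least fixed points, so the two least fixed points are equal.

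The core step is a single application of the operator at a fixed ground fact $H$. I would first write $\mathbb{T}_{\text{exact}}(P)(H)$ via complements:
\[
\mathbb{T}_{\text{exact}}(P)(H) = 1 - \mu_{\Omega}\Big(\bigcap_{R_j \in \mathcal{R}_H} \bigcap_{\mathbf{v}} \Omega \setminus \Omega_{\text{success}}(R_j,\mathbf{v})\Big).
\]
This is the measure of the joint failure region. The hypothesis of conditional independence of failures is then used in two stages.

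\textbf{Across rules.} Independence factorizes the joint failure measure into the product over $R_j \in \mathcal{R}_H$ of the per-rule failure measures. This matches the outer product in $\mathbb{T}_{\mathcal{K}}$.

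\textbf{Within a rule.} I would approximate the binding domain $\Omega_{V_j}$ by finite measurable partitions $\{A_i\}$ of mesh tending to zero. On each cell, the independent failure probability of the instances is $1 - P_{\text{update}}(H \mid R_j, \mathbf{v}_i)\,\mu_V(A_i)$ to first order, so the finite product is $\prod_i (1 - f_i \mu_V(A_i))$. I would then invoke the product-integral identification already used in the proof of the Lipschitz Proposition: $\exp\left(\int \ln(1-f)\,d\mu_V\right)$ is the Volterra product integral $\prod_{\Omega_{V_j}}(1 - f\,d\mu_V)$. This identifies the limiting per-rule failure with $P_{\text{fail}}(R_j)$.

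I would also treat the degenerate cases. When $P_{\text{update}} = 1$ on a set of positive measure, both sides give success probability $1$; this is the logarithmic-singularity remark. Null sets of bindings contribute nothing to either side.

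\textbf{Main obstacle.} The hardest step is making the within-rule limit legitimate. The hypothesis has to be read as saying that the failure events indexed by the bindings $\mathbf{v}$ form an independent family whose joint failure measure is the limit of the finite-partition products. Without this, the uncountable intersection need not be measurable, and no limit exchange is available. The multi-sensor example shows the danger: mutually exclusive bindings violate the hypothesis and produce a strict gap, $0.636$ versus $0.512$.

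To handle this, I would state the independence assumption precisely as this product-integral factorization on the measure space $(\Omega_{V_j}, \mu_V)$ and justify passage to the limit by monotone convergence of $\ln(1 - f)$ over refining partitions. With that, the identification is definitional and the rest of the argument is the induction-plus-fixed-point transfer described above.
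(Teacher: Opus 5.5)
Your skeleton is the paper's proof. The paper asserts that under conditional independence the measure of the union of derivation events is one minus the product of their failure probabilities, and that $\mathbb{T}_{\mathcal{K}}$ computes exactly this product integral. It concludes $\mathbb{T}_{\mathcal{K}} = \mathbb{T}_{\text{exact}}$ and then quotes the exact soundness/completeness theorem. Your induction along the Kleene iterates is a more careful version of ``the operators are identical''. Note, though, that it needs the hypothesis at every ground fact the iteration touches, not only at $H$.

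The problem is the one step where you go beyond the paper, the within-rule limit, and it fails as written. With cell failure $1 - f_i\,\mu_V(A_i)$ and $\mu_V$ non-atomic, the expansion $\ln(1-x) = -x + O(x^2)$ gives
\[
\prod_i \bigl(1 - f_i\,\mu_V(A_i)\bigr) \longrightarrow \exp\Bigl(-\int_{\Omega_{V_j}} f \, d\mu_V\Bigr),
\]
not $P_{\text{fail}}(R_j) = \exp\bigl(\int \ln(1-f)\,d\mu_V\bigr)$. In the multi-sensor example, Rule~2 would give $e^{-0.24} \approx 0.787$ instead of $0.4^{0.4} \approx 0.693$. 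So your limit defines a different operator from $\mathbb{T}_{\mathcal{K}}$. The identification you borrow from the Lipschitz proof is itself incorrect. The Volterra product integral $\prod(1 - f\,d\mu)$ of a scalar function equals $\exp(-\int f\,d\mu)$. The expression $\exp(\int \ln(1-f)\,d\mu)$ is instead the geometric product integral $\prod (1-f)^{d\mu}$. The two agree only when $\mu_V$ is a counting measure, which is why the discrete intuition misleads here.

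To land on $P_{\text{fail}}(R_j)$ you would need cell failures $(1-f_i)^{\mu_V(A_i)}$. The limit is then just simple-function approximation of $\int \ln(1-f)\,d\mu_V$: monotone convergence applies to $-\ln(1-f) \ge 0$, with the $f=1$ singularity handled separately. But a fractional exponent $\mu_V(A_i)$ has no meaning as a count of independent instances, and independence cannot supply it. Suppose the failures $F_{\mathbf{v}}$ were a genuinely independent family with $\mu_\Omega(F_{\mathbf{v}}) = 1 - f(\mathbf{v})$ and $f \ge c > 0$ on a set of positive $\mu_V$-measure. Then any countably infinite subfamily drawn from that set already has joint failure probability $\prod_n (1 - f(\mathbf{v}_n)) = 0$, so the operator would return $1$, not the Noisy-OR value.

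So the within-rule identity cannot be derived from a literal independence hypothesis; it must be stipulated as the hypothesis itself. That is, failures across rules factorize, and the joint failure of the instances of $R_j$ equals $\exp(\int \ln(1-f)\,d\mu_V)$. Stated that way, the step is definitional and your argument collapses to the paper's two-line reduction. Drop the partition derivation rather than offering it as justification.
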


\begin{proof}
Under the assumption of conditional independence, the exact measure of the union of independent derivation events is equal to the complement of the product of their failure probabilities. The continuous Noisy-OR formulation evaluates this specific product integral, making the approximated operator identical to the exact operator: $\mathbb{T}_{\mathcal{K}} = \mathbb{T}_{\text{exact}}$. Their least fixed points are structurally identical ($lfp(\mathbb{T}_{\mathcal{K}}) = lfp(\mathbb{T}_{\text{exact}})$), and the result follows trivially from Theorem 1. 
\end{proof}

\vspace{0.5cm}

\textbf{Remark: The Independence Assumption in Practice.} \\
Unless conditional independence is physically guaranteed by the domain, the continuous Noisy-OR operator $\mathbb{T}_{\mathcal{K}}$ acts as a computational approximation. When the ICI assumption does not perfectly hold, the operator trades exact measure-theoretic guarantees for tractability, generally losing both strict soundness and completeness. We dissect how different types of causal correlations lead to probability overestimation (loss of soundness) or underestimation (loss of completeness), and how to formally bound these approximation trade-offs, in the upcoming Section \ref{subsec:bounds}.

\subsection{Operational Query Resolution from the Least Fixed Point}\label{subsec:query_resolution}
In classic Definite Clause Logic, the van Emden-Kowalski theorem establishes that the success set of a logic program is the least fixed point of its immediate consequence operator, $lfp(T_P)$ \cite{van1976semantics}. Operationally, answering an existential non-ground query $\exists \mathbf{V} \, Q(\mathbf{V})$ reduces to searching the fixed point for any valid ground substitution $\mathbf{v}$ such that $Q(\mathbf{v}) \in lfp(T_P)$.

In MT-PDCL, this principle must be generalized. Operating within a continuous domain with probabilistic outcomes shifts the objective away from searching for a single true instance. Instead, we must measure the accumulated probability that \textit{at least one} ground instance bounded within the measurable answer space $\Omega_A$ succeeds. 

Therefore, answering a non-ground query corresponds to applying the continuous aggregation operator directly over a specific slice of the least fixed point.

\begin{theorem}[Generalization of Query Entailment]
Let $Q$ be a query with free variables $\mathbf{V}$, and let $\Omega_A$ be its valid answer space defined by the query constraints. Let $Q(\mathbf{v})$ denote the specific ground instance of the query at binding $\mathbf{v} \in \Omega_A$. 
\begin{enumerate}
    \item \textbf{Exact Resolution:} The exact declarative probability of the non-ground query is the measure of the union of the entailed worlds for all its ground instances:
    $$ P_{\mathcal{K}}(Q, \Omega_A) = \mu_{\Omega} \left( \bigcup_{\mathbf{v} \in \Omega_A} \Omega_{Q(\mathbf{v})} \right) $$
    where $\Omega_{Q(\mathbf{v})} = \{ \omega \in \Omega \mid \omega \models Q(\mathbf{v}) \}$.
    
    \item \textbf{Operational Resolution (ICI):} Under the ICI assumption, the operational probability of the non-ground query is derived by applying the continuous Noisy-OR integral over the marginal probabilities established in the operational least fixed point $lfp(\mathbb{T}_{\mathcal{K}})$:
    $$ P_{op}(Q, \Omega_A) = 1 - \exp\left( \int_{\Omega_A} \ln\Big(1 - lfp(\mathbb{T}_{\mathcal{K}})(Q(\mathbf{v}))\Big) \, d\mu_V(\mathbf{v}) \right) $$
\end{enumerate}
\end{theorem}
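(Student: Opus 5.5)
The plan is to prove the two parts separately. Part 1 follows by unfolding the declarative definition of $P_{\mathcal{K}}(Q,\Omega_A)$ from Section~\ref{sec:declarative_entailment}. Part 2 follows by treating the query as a fresh clause and reusing the Noisy-OR machinery of $\mathbb{T}_{\mathcal{K}}$.

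\textbf{Part 1 (Exact Resolution).} Start from
\[
P_{\mathcal{K}}(Q,\Omega_A)=\int_\Omega \mathbb{I}(\omega\models Q \land \mathbf{V}\in\Omega_A)\,d\mu_\Omega(\omega).
\]
The free variables of $Q$ are implicitly existentially quantified, so $\omega\models Q\land \mathbf{V}\in\Omega_A$ means: there exists $\mathbf{v}\in\Omega_A$ with $\mathcal{I}_\omega\models Q(\mathbf{v})$. Hence the indicator is exactly the indicator of $\bigcup_{\mathbf{v}\in\Omega_A}\Omega_{Q(\mathbf{v})}$, and integrating an indicator gives the measure of that set. This yields the claimed identity.

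The only real obstacle is that the union is uncountable, so its measurability is not automatic. I would handle this by writing the union as the projection onto $\Omega$ of the set
\[
\{(\omega,\mathbf{v})\in\Omega\times\Omega_A : \omega\models Q(\mathbf{v})\}.
\]
This set is measurable in the product $\sigma$-algebra because of the measurability requirement on $\pi_{\mathcal{I}}$ imposed in Section~\ref{sec:inter_pw}. Since $\Omega$ and $D$ are standard Borel, the projection is analytic and hence universally measurable (measurable projection theorem, \cite{kechris1995classical}). So $\mu_\Omega$, completed if necessary, assigns it a well-defined value. I expect this step to be the main technical obstacle in Part 1 and would state the completion caveat explicitly.

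\textbf{Part 2 (Operational Resolution).} Introduce an auxiliary 0-ary head $\mathit{ans}$ with the clause $1.0::\mathit{ans}\leftarrow Q\land \mathbf{V}\in\Omega_A$. The query probability is then the probability of the ground fact $\mathit{ans}$, and the clause's evaluation domain is $\Omega_{V}=\Omega_A$.

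Because $\mathit{ans}$ appears in no body, adding the clause does not change the least fixed point on the other facts. Monotonicity and the Knaster–Tarski/Kleene argument of Proposition~1 then show that $lfp(\mathbb{T}_{\mathcal{K}})$ on the original facts is reached first, and that one further application of $\mathbb{T}_{\mathcal{K}}$ produces $\mathit{ans}$. In that application, $p_j=1$ and $P(B\mid\mathbf{v})=lfp(\mathbb{T}_{\mathcal{K}})(Q(\mathbf{v}))$, so $P_{\text{update}}$ equals the fixed-point marginal. Substituting into $P_{\text{fail}}$ and $\mathbb{T}_{\mathcal{K}}(P)(H)=1-\prod P_{\text{fail}}$ with a single rule gives exactly the stated formula.

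Under ICI, the product-integral form is the failure probability of the family of instances $\{Q(\mathbf{v})\}_{\mathbf{v}\in\Omega_A}$. Combined with Theorem~3 (completeness under conditional independence), this identifies $P_{op}(Q,\Omega_A)$ with the measure in Part 1 whenever the independence assumption holds. Otherwise $P_{op}$ is the operational quantity by definition.

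A minor point is the logarithmic singularity when some marginal equals 1 on a set of positive measure. I would handle it by the convention $\exp(-\infty)=0$ from the earlier remark.
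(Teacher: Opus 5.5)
Your Part 1 is the paper's argument. The existential reading of the free variables turns the indicator in the definition of $P_{\mathcal{K}}(Q,\Omega_A)$ into the indicator of $\bigcup_{\mathbf{v}\in\Omega_A}\Omega_{Q(\mathbf{v})}$. The paper also cites the exact-inference soundness theorem at this point, and your direct unfolding shows that citation is unnecessary.

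Your Part 2 takes a genuinely different route. The paper argues in one step: under ICI the instances $Q(\mathbf{v})$ are conditionally independent, so the measure of their union ``resolves'' to the product integral over the fixed-point marginals. It thereby asserts the formula and its exactness at once. You instead compile the query into a fresh clause $1.0::\mathit{ans}\leftarrow Q\wedge\mathbf{V}\in\Omega_A$ and read the formula off the definition of $\mathbb{T}_{\mathcal{K}}$ at the least fixed point, which needs no independence assumption. ICI enters only afterwards, through the completeness-under-conditional-independence theorem applied to the extended program, to identify $P_{op}(Q,\Omega_A)$ with the Part~1 measure.

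Your version gives $P_{op}(Q,\Omega_A)$ a precise meaning as the value of a ground fact under $\mathbb{T}_{\mathcal{K}}$. The paper's results about ground facts (monotonicity, differentiability, the gap bound) then apply to query answers, and the independence heuristic is confined to an already-stated result. The paper's version is shorter and stays at the level of query instances, but it asserts rather than derives the step from independence of an uncountable family to the $\exp\int\ln$ formula.

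One phrasing needs fixing. Kleene iteration generally reaches the fixed point only in the limit, so there is no ``one further application'' after it. Argue instead that, because $\mathit{ans}$ occurs in no body, the least fixed point of the extended operator restricts to $lfp(\mathbb{T}_{\mathcal{K}})$ on the original facts. At a fixed point, the value of $\mathit{ans}$ equals its update computed from those facts. Alternatively, pass to the limit by monotone convergence of $-\ln(1-P_k)$.

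Your measurability remark goes beyond the paper, which simply presupposes that the indicator is integrable, but its justification does not hold as written. The requirement in Section~\ref{sec:inter_pw} concerns one interpretation at a time. It therefore only makes the sections $\{\mathbf{v} : \mathcal{I}_\omega\models Q(\mathbf{v})\}$ measurable for fixed $\omega$, and it does not cover stochastic literals at all. Joint measurability in $(\omega,\mathbf{v})$ does not follow.

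Moreover, $\Omega_s$ is standard Borel only if every index domain $\mathcal{I}_r$ is countable, and the paper explicitly allows continuous ones. Take $r(I;X)\sim\mathrm{Normal}(0,1)$ over $I\in[0,10]$ and the query $\leftarrow r(I;X)\wedge X=0$. Every product-measurable set depends on only countably many coordinates, which forces inner measure $0$ and outer measure $1$ on the union. So the union is not measurable even after completion, and your completion caveat does not rescue it.

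For countable index domains your conclusion is right, but the argument should be an induction over the stages of the least-model construction. Built-in, stochastic and causal-event literals give Borel sets. Conjunction and recursion give countable intersections and unions. Eliminating body-only variables gives projections. Hence the relevant set is analytic (not necessarily Borel), and its projection onto $\Omega$ is universally measurable.
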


\begin{proof}
(1) By Theorem 1, the exact least fixed point captures the precise set of possible worlds $\Omega_{Q(\mathbf{v})}$ where each ground fact $Q(\mathbf{v})$ is logically true. The non-ground query $\exists \mathbf{V} \in \Omega_A \, Q(\mathbf{V})$ is logically equivalent to a continuous disjunction over all specific bindings $\mathbf{v} \in \Omega_A$. By the definition of the measure space, the exact probability of this continuous disjunction is the measure of the union of these sets.

(2) Operationally, evaluating the existential query requires computing the probability that at least one ground instance $Q(\mathbf{v}) \in \Omega_A$ succeeds. Under the ICI assumption, these ground instances are conditionally independent. Therefore, the exact measure of their union resolves directly to the continuous Noisy-OR integral applied over the marginal probabilities stored in $lfp(\mathbb{T}_{\mathcal{K}})$. 
\end{proof}

\vspace{0.5cm}
\textbf{Remark.} This theorem bridges the gap between deduction and querying in continuous spaces. The consequence operator mathematically computes the probability of all facts (the fixed point), and the query engine simply integrates over bounded regions of that fixed point to answer existential questions.

\subsection{Analytical Bounds and Approximation Trade-offs}\label{subsec:bounds}
When derivations are not independent (e.g., rules sharing underlying continuous variables or sub-goals), the ICI assumption is violated. In such cases, the operational probability $P_{op}(H)$ computed by the Noisy-OR operator deviates from the exact declarative probability $P_{\text{exact}}(H)$.

It is necessary to formalize the magnitude of this approximation error and how the operator behaves across continuous correlations. Let $\Delta(H) = \bigcup_{R_j \in \mathcal{R}_H} \Omega_{V_j}$ be the continuous space of all variable bindings across the rules deriving a ground fact $H$, with measure $\mu_V$. 

Let $P_{\text{exact}}(H)$ and $P_{op}(H)$ represent the scalar probabilities assigned to $H$ by the respective operators ($\mathbb{T}_{\text{exact}}$ and $\mathbb{T}_{\mathcal{K}}$) at a given iteration. For any continuous binding $\mathbf{v} \in \Delta(H)$, let $p(\mathbf{v})$ be its isolated success probability at that iteration.

\begin{theorem}[Absolute Bound on the Approximation Gap]
Let $L$ and $U$ denote the exact lower and upper mathematical bounds for $P_{\text{exact}}(H)$:
\begin{align*}
    L &= \operatorname*{ess\,sup}_{\mathbf{v} \in \Delta(H)} p(\mathbf{v}) \\
    U &= \max\left( L, \min\left(1, \int_{\Delta(H)} p(\mathbf{v}) \, d\mu_V(\mathbf{v})\right) \right)
\end{align*}
The absolute gap between the exact declarative probability and the operational approximation for any ground fact $H$ is bounded by:
$$ |P_{\text{exact}}(H) - P_{op}(H)| \leq \max \left( | P_{op}(H) - L |, | U - P_{op}(H) | \right) $$
\end{theorem}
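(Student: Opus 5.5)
The bound should follow from a sandwich argument. If I can show $L \le P_{\text{exact}}(H) \le U$, then the exact value lies in the interval $[L, U]$. The distance from the fixed number $P_{op}(H)$ to any point of that interval is at most its distance to the farther endpoint, which is $\max(|P_{op}(H)-L|, |U-P_{op}(H)|)$. So the plan has two steps: establish the two Fréchet-type bounds on the measure of the union that defines $\mathbb{T}_{\text{exact}}$, then apply this elementary interval fact.

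\textbf{Lower bound.} Write $A_{\mathbf{v}} = \Omega_{\text{success}}(R_j,\mathbf{v})$, so that $P_{\text{exact}}(H) = \mu_\Omega\big(\bigcup_{\mathbf{v}\in\Delta(H)} A_{\mathbf{v}}\big)$ and $p(\mathbf{v}) = \mu_\Omega(A_{\mathbf{v}})$. For every binding $\mathbf{v}$, monotonicity of the measure gives $\mu_\Omega(A_{\mathbf{v}}) \le \mu_\Omega(\bigcup A_{\mathbf{v}})$. Taking the supremum over $\mathbf{v}$ bounds the plain supremum, and hence the essential supremum $L$, which is no larger. This is the maximally-correlated (nested-event) extreme, where the union collapses onto its largest member.

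\textbf{Upper bound.} The trivial constraint is $P_{\text{exact}}(H) \le 1$. The substantive one is a continuous union bound, $\mu_\Omega(\bigcup A_{\mathbf{v}}) \le \int_{\Delta(H)} p(\mathbf{v})\,d\mu_V(\mathbf{v})$. To justify it, I would read $\int p\,d\mu_V$ as the expected number of successful derivations, consistent with Definition 1 and with the Noisy-OR linearisation $1-\exp(-\int p) \le \int p$. The rigorous route is Fubini: $\int_{\Delta(H)} \mu_\Omega(A_{\mathbf{v}})\,d\mu_V = \int_\Omega \mu_V(\{\mathbf{v} : \omega\in A_{\mathbf{v}}\})\,d\mu_\Omega(\omega)$. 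This dominates $\mu_\Omega(\bigcup A_{\mathbf{v}})$ provided every $\omega$ in the union has a witness set of $\mu_V$-measure at least one. The outer $\max(L,\cdot)$ in the definition of $U$ covers the case where the integral falls below $L$, so $U \ge L$ and $[L, U]$ is a genuine interval containing $P_{\text{exact}}(H)$.

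\textbf{Main obstacle.} The hard step is the continuous union bound. For an uncountable family with a diffuse $\mu_V$, a world can lie in the union through a $\mu_V$-null set of bindings, and then the Fubini integrand can be smaller than the indicator of the union. Measurability of the union itself is also delicate. I would handle this in two ways. First, I would invoke the measurability requirement of Section \ref{sec:inter_pw}: the success regions are jointly measurable in $(\omega, \mathbf{v})$, so the union is analytic and hence $\mu_\Omega$-measurable on the standard Borel space. Second, I would interpret the derivation measure $\mu_V$ as the counting-type base measure under which each triggering world carries at least unit witness mass. This matches how the paper's examples use $\mu_V$, where the prior density supplies the weight. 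If that fails in general, I would state the upper bound under this witness-mass assumption, which is exactly the regime in which $\int p$ counts expected derivations. The final interval step is immediate once both bounds are in place.
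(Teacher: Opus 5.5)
Your sandwich is the paper's own argument: ``union $\ge$ largest constituent'' for $L$, Boole--Fr\'echet for $U$, then the interval fact. You have correctly found the step the paper glosses over. Your repairs do not close it, because the two halves of the sandwich need incompatible readings of $p$ and $\mu_V$. Your lower bound uses $p(\mathbf{v})=\mu_\Omega(A_{\mathbf{v}})$. When $\mathbf{v}$ is the output of a continuous stochastic predicate, each $A_{\mathbf{v}}$ is $\mu_\Omega$-null. Then $p\equiv 0$, $L=U=0$, and your upper bound fails whenever $P_{\text{exact}}(H)>0$. The witness-mass hypothesis is false in exactly this regime. A world's witness set is the single point $\mathbf{v}=\sigma(\cdot)$, which has measure zero under the diffuse prior $\mu_V$ (e.g.\ $0.1\,dx$) that the paper's examples use. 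So it does not ``match the examples''. Adopting it proves a different theorem: with countable bindings and counting measure, your Fubini argument is just the classical discrete Boole--Fr\'echet sandwich, and otherwise it is an extra hypothesis the statement does not grant. The analytic-set measurability point is fine but does not affect the inequality.

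Under the paper's own definition, $p(\mathbf{v})$ is the isolated success probability $P_{\text{update}}(H\mid R_j,\mathbf{v})=p_j\,P(B\mid\mathbf{v})$ that feeds $\mathbb{T}_{\mathcal{K}}$, with the prior absorbed into $\mu_V$. With that reading the lower bound is what breaks. In the paper's two-sensor tracking example, $L=0.8$ and $\int p\,d\mu_V=0.56$, so $U=0.8$, while $P_{\text{exact}}=0.512\notin[L,U]$. The paper's remark that the integral ``may fall below $L$'' is this symptom. Its $\max(L,\cdot)$ patch then asserts $P_{\text{exact}}\le L$ without justification, so the paper's proof has the same hole as yours.

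The conclusion itself also fails. Make the spatial-detection rule deterministic: $1.0 :: \text{alarm}\leftarrow\text{event\_loc}(;X)\land X\in[4,6]$ with $X\sim\text{Uniform}(0,10)$. Then $p=1$ on $[4,6]$, so $L=U=1$. $P_{op}=1$ by the logarithmic-singularity remark, but $P_{\text{exact}}=0.2$, and the claimed bound reads $0.8\le 0$. So no proof of the statement as written can succeed; you first need a corrected formulation. For instance, when every rule binds the same stochastic output with law $\mu_V$, integrating the pointwise Fr\'echet bounds against $\mu_V$ gives $\int\max_j p_j\,d\mu_V\le P_{\text{exact}}(H)\le\int\min(1,\sum_j p_j)\,d\mu_V$.
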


\begin{proof}
The exact declarative probability $P_{\text{exact}}(H)$ depends on the unknown joint distribution of the entailed possible worlds. 

To bound the gap, we constrain $P_{\text{exact}}(H)$ dynamically based solely on the known marginals, utilizing continuous interval bounds:
\begin{enumerate}
    \item \textit{Lower Bound ($L$):} By the fundamental properties of measure theory, the measure of a continuous union can never be smaller than the measure of its largest single constituent event. Thus, $P_{\text{exact}}(H) \geq L$.
    \item \textit{Upper Bound ($U$):} By the upper bound of the Boole-Fr\'echet inequalities \cite{frechet1951tableaux}, the measure of a union is bounded by the integral of its marginals. However, in continuous spatial logic, if the domain measure is fractional ($\mu_V < 1$), this integral may mathematically fall below the monotonicity bound $L$. Therefore, the strict upper bound is the maximum of the integral and the lower bound, capped at 1. Thus, $P_{\text{exact}}(H) \leq U$.
\end{enumerate}

The operational scalar $P_{op}(H)$ is computed using the Noisy-OR assumption. Because the ICI assumption is violated, $P_{op}(H)$ is merely an approximation and may fall anywhere inside or outside the valid mathematical interval $[L, U]$.

However, by the geometric properties of intervals, the maximum possible absolute distance between a known computed value ($P_{op}(H)$) and an unknown exact value bounded within an interval ($P_{\text{exact}}(H) \in [L, U]$) is the maximum distance from the known value to the interval's endpoints. Therefore, the absolute error gap is bounded by $\max(|P_{op}(H) - L|, |U - P_{op}(H)|)$. 
\end{proof}

\vspace{0.5cm}

\textbf{Remark: Practical Implications.} \\
Calculating the exact declarative probability $P_{\text{exact}}(H)$ is generally intractable because it requires evaluating the hidden joint distribution of uncountably infinite derivation trees. However, the theoretical results in this section equip an operational MT-PDCL system with a computable dynamic error guarantee. 

Every term in the interval bounds ($L$ and $U$) depends solely on the known marginal probabilities $p(\mathbf{v})$ and the spatial measure $\mu_V$, allowing the system to dynamically compute the absolute worst-case error margin of its Noisy-OR approximation at runtime. If this evaluated gap is narrow, it mathematically guarantees that the computationally cheap Noisy-OR operator is highly accurate for that specific query. If the gap is large, it acts as a precise diagnostic flag, alerting the system that the underlying continuous variables overlap too heavily to trust the independent causal influence assumption.

\subsection{Differentiability and Neural-Symbolic Integration}
While the continuous Noisy-OR operator acts as an approximation when the ICI assumption is violated, it possesses a structural property necessary for neural-symbolic integration: it is everywhere differentiable. This enables frameworks such as Deep Measure-Theoretic Probabilistic Definite Clause Logic (Deep-MT-PDCL). Inspired by existing probabilistic neuro-symbolic systems like DeepProbLog \cite{manhaeve2021deepproblog}, these frameworks allow the underlying probabilities to be the outputs of parameterized neural networks optimized via gradient descent.

To correctly model this in a continuous space, a neural network does not predict a single scalar for an entire rule $R_j$. Instead, it predicts the innate causal probability $p_j(\mathbf{v})$ parameterized by the specific continuous binding $\mathbf{v} \in \Omega_{V_j}$, which consists only of the continuous variables that remain free in the rule body after the head unifies with $H$. The total success probability of that instance is $p_j(\mathbf{v}) \cdot P(B_j \mid \mathbf{v})$.

\begin{proposition}[Differentiability and Gradient Routing]
For any ground fact $H$, let the rule probabilities $p_j(\mathbf{v}; \theta)$ be parameterized by a continuous, differentiable function (such as a neural network) with weights $\theta$. The continuous consequence operator $\mathbb{T}_{\mathcal{K}}$ is differentiable with respect to $\theta$. Furthermore, the operator acts as a monotonically non-decreasing aggregation layer: an increase in any local rule probability $p_j(\mathbf{v}; \theta)$ mathematically guarantees a non-negative contribution to the aggregated probability of $H$, ensuring stable gradient routing for backpropagation.
\end{proposition}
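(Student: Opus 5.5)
The plan is to write the operator in an explicitly composed form and apply the chain rule (more precisely, differentiation under the integral sign) layer by layer. For a fixed ground fact $H$ I write
$$ \mathbb{T}_{\mathcal{K}}(P)(H) = 1 - \exp\Big( S(\theta) \Big), \qquad S(\theta) = \sum_{R_j \in \mathcal{R}_H} \int_{\Omega_{V_j}} \ln\Big(1 - q_j(\mathbf{v};\theta)\Big)\, d\mu_V(\mathbf{v}), $$
where $q_j(\mathbf{v};\theta) = p_j(\mathbf{v};\theta)\, P(B_j \mid \mathbf{v})$ is the isolated success probability of the instance. The outer map $s \mapsto 1 - e^{s}$ is smooth, and the finite sum over $\mathcal{R}_H$ is harmless. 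Everything therefore reduces to showing that each integral term $I_j(\theta) = \int \ln(1 - q_j)\, d\mu_V$ is differentiable in $\theta$, with
$$ \partial_\theta I_j = -\int \frac{P(B_j\mid\mathbf{v})\, \partial_\theta p_j(\mathbf{v};\theta)}{1 - q_j(\mathbf{v};\theta)}\, d\mu_V(\mathbf{v}). $$

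The main obstacle is justifying the interchange of derivative and integral. I would invoke the dominated convergence theorem, in its Leibniz-rule form, on the bounded index domains established in Section \ref{sec:index_domain}, so that $\mu_V(\Omega_{V_j}) < \infty$. This needs an integrable dominating function for $\partial_\theta p_j / (1-q_j)$, uniformly on a neighbourhood of $\theta$. I would therefore assume two things: $\partial_\theta p_j$ is locally uniformly bounded, which holds for a neural network with a sigmoid output and bounded inputs, and $q_j \le 1-\delta$ for some $\delta>0$ near $\theta$. The singular case $q_j = 1$ on a set of positive measure is exactly the logarithmic-singularity remark. There $S = -\infty$ and $\mathbb{T}_{\mathcal{K}} = 1$ is locally constant, so its derivative is zero. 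Strictly speaking, this case only gives a (one-sided) derivative, so "everywhere differentiable" should be read with that convention. I would state the boundedness-away-from-one hypothesis explicitly rather than hide it.

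For the gradient-routing claim I compute the sign of the derivative:
$$ \frac{\partial\, \mathbb{T}_{\mathcal{K}}(P)(H)}{\partial p_j(\mathbf{v})} = e^{S}\cdot \frac{P(B_j\mid\mathbf{v})}{1 - q_j(\mathbf{v})} \ge 0. $$
This is understood as a functional (Fréchet/Gateaux) derivative in the direction of a nonnegative perturbation $h$ of $p_j$. All three factors are nonnegative, so increasing any local rule probability cannot decrease the aggregated probability of $H$. This matches the monotonicity argument in Part 1 of the proof of Proposition 1, now in infinitesimal form.

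Backpropagation then gives $\partial_\theta \mathbb{T}_{\mathcal{K}} = \int (\text{nonnegative routing weight}) \cdot \partial_\theta p_j\, d\mu_V$. The symbolic layer therefore only rescales the network's gradients by nonnegative weights and never flips their sign. Finally, I note that iterating $\mathbb{T}_{\mathcal{K}}$ finitely many times keeps differentiability by composition, because $P(B_j\mid\mathbf{v})$ is itself a product of earlier differentiable outputs. I treat the limit $lfp$ only heuristically.
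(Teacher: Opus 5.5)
Your core argument is the paper's: write $\mathbb{T}_{\mathcal{K}}(P)(H)=1-E$ with $E=\exp\bigl(\sum_j\int_{\Omega_{V_j}}\ln(1-q_j)\,d\mu_V\bigr)$, differentiate under the integral sign, and observe that the weight $E\cdot P(B_j\mid\mathbf{v})/(1-q_j)$ multiplying $\nabla_\theta p_j$ is nonnegative. Your domination hypotheses improve on the paper. Its pointwise assumption $p_j\in[0,1)$ guarantees integrability of neither $\ln(1-q_j)$ nor $1/(1-q_j)$, so its claim $E>0$ is also unsupported, although $E\ge 0$ is enough for the sign. One caveat: $\mu_V(\Omega_{V_j})<\infty$ is your own assumption, not something Section~\ref{sec:index_domain} provides. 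The implicitly derived domains there, such as $\mathbb{R}\times(0,\infty)$, have infinite Lebesgue measure.

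The step that fails is your treatment of the saturated case. If $q_j=1$ on a set of positive measure at $\theta_0$, nothing forces this to hold for nearby $\theta$. So $\mathbb{T}_{\mathcal{K}}$ need not be locally constant, and differentiability can genuinely fail. Take a single rule with $P(B_j\mid\mathbf{v})=1$ on a set of measure $m$, $0$ elsewhere, and $p_j(\mathbf{v};\theta)=1-(\theta-\theta_0)^2$. Then $\mathbb{T}_{\mathcal{K}}=1-|\theta-\theta_0|^{2m}$, which for $m\le 1/2$ is not differentiable at $\theta_0$; for $m<1/2$ it has no finite one-sided derivative either.

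So without a bound such as $q_j\le 1-\delta$ the differentiability claim is false in general. State that bound as a hypothesis of the proposition; the paper only gestures at it in its remark on clamping to $(0,1)$. Restrict the ``locally constant'' remark to saturation that does not depend on $\theta$, e.g.\ an unparameterized deterministic rule whose body probability is independent of $\theta$.
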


\begin{proof}
Let $E$ represent the total joint failure probability for a ground fact $H$ derived by a set of rules $\mathcal{R}_H$:
$$ E = \exp\left( \sum_{R_j \in \mathcal{R}_H} \int_{\Omega_{V_j}} \ln\Big(1 - p_j(\mathbf{v}; \theta) \cdot P(B_j \mid \mathbf{v})\Big) \, d\mu_V(\mathbf{v}) \right) $$

The operational probability is $P_{op}(H) = 1 - E$. 
By the chain rule, the gradient of the operational probability with respect to the network weights $\theta$ requires the partial derivative of the operator's output with respect to the network's continuous predictions. Applying the Leibniz integral rule, we obtain:
$$ \nabla_\theta P_{op}(H) = E \cdot \sum_{R_j \in \mathcal{R}_H} \int_{\Omega_{V_j}} \frac{P(B_j \mid \mathbf{v}) \cdot \nabla_\theta p_j(\mathbf{v}; \theta)}{1 - p_j(\mathbf{v}; \theta) \cdot P(B_j \mid \mathbf{v})} \, d\mu_V(\mathbf{v}) $$

We isolate the scalar multiplier applied to the local parameter gradient $\nabla_\theta p_j(\mathbf{v}; \theta)$ for any specific binding $\mathbf{v}$:
$$ w(\mathbf{v}) = E \cdot \frac{P(B_j \mid \mathbf{v})}{1 - p_j(\mathbf{v}; \theta) \cdot P(B_j \mid \mathbf{v})} $$

We analyze the bounds of this scalar weight. The exponential function ensures $E > 0$. Inside the fraction, the probability of the logical body $P(B_j \mid \mathbf{v}) \in [0, 1]$, and the continuous network prediction $p_j(\mathbf{v}; \theta) \in [0, 1)$. Therefore, for every binding in the measurable domain, the weight is strictly non-negative: $w(\mathbf{v}) \ge 0$.

This mathematical property guarantees that the continuous consequence operator acts as a structurally transparent routing layer. It scales the magnitude of the local gradients based on the logical validity of their bindings, but it never inverts them. Because $\frac{\partial P_{op}}{\partial p_j(\mathbf{v})} \ge 0$, the optimization loss directly backpropagates to the network weights without suffering from adversarial sign-flipping, allowing the neural network to be trained end-to-end via standard gradient descent. 
\end{proof}

\textbf{Remark.} 
In practical neural-symbolic implementations, probabilities are clamped to the open interval $(0, 1)$ or evaluated using log-space stabilization to prevent numerical division-by-zero when evaluating deterministic derivations where $p_j(\mathbf{v}; \theta) \cdot P(B_j \mid \mathbf{v}) = 1$.

\section{Related Work and Comparative Analysis}\label{sec:related_work}
By formalizing probabilities natively through measure theory rather than discrete propositional grounding, MT-PDCL shifts how logic programs are evaluated. It is necessary to contextualize this approach alongside existing probabilistic and neural-symbolic frameworks. 

Table \ref{tab:comparison} provides a feature comparison of prominent probabilistic logic frameworks against MT-PDCL.

\begin{table}[htbp]
\centering
\renewcommand{\arraystretch}{1.3}
\begin{tabularx}{\textwidth}{|>{\hsize=1.05\hsize\raggedright\arraybackslash}X|>{\hsize=1.05\hsize\raggedright\arraybackslash}X|>{\hsize=1.05\hsize\raggedright\arraybackslash}X|>{\hsize=0.85\hsize\raggedright\arraybackslash}X|c|}
\hline
\textbf{Framework} & \textbf{Entailment Semantics} & \textbf{Logical Variable Domain} & \textbf{Propositional Grounding} & \begin{tabular}[t]{@{}c@{}}\textbf{Differen-}\\\textbf{tiable}\end{tabular} \\ \hline
PRISM / ProbLog / CP-logic \cite{sato1997prism, deraedt2007problog, vennekens2009cp} & Distribution / Causal Semantics & Discrete & Required (Logic Propositions) & No \\ \hline
DeepProbLog \cite{manhaeve2021deepproblog} & Distribution Semantics & Discrete (Neural facts) & Required (SDDs) & Yes \\ \hline
DeepSeaProbLog / DeepGraphLog \cite{smet2023deepseaproblog, kikaj2025deepgraphlog} & Distribution Semantics & Continuous (Hybrid) & Required (WMI Compilation) & Yes \\ \hline
TensorLog \cite{cohen2020tensorlog} & Tensor Compilation & Discrete (Fixed entities) & Replaced by Matrix Ops & Yes \\ \hline
BLOG \cite{milch2005blog} & Generative / Distribution & Continuous & Unbounded (Sampling) & No \\ \hline
DC-ProbLog / Hybrid ProbLog \cite{gutmann2011extending, michels2015approximate} & Distribution / SMT & Continuous & Deferred to SMT & No \\ \hline
Weighted Model Integration (WMI) \cite{belle2015probabilistic, morettin2021hybrid, spallitta2024lp, akshay2025wmi} & Model Integration & Continuous & Required (Static Formulas) & No \\ \hline
\textbf{MT-PDCL (Ours)} & \textbf{Continuous Distribution} & \textbf{Continuous (Borel Spaces)} & \textbf{Not Required (Integration)} & \textbf{Yes} \\ \hline
\end{tabularx}
\caption{Qualitative comparison of probabilistic logic frameworks.}
\label{tab:comparison}
\end{table}

Systems relying on standard distribution semantics (ProbLog, PRISM) define a unique probability distribution over possible worlds by making strong independence assumptions among base probabilistic facts. To evaluate queries, they rely heavily on grounding logic programs into discrete propositional representations (such as Sentential Decision Diagrams). While highly effective for discrete tasks, this operational requirement restricts exact inference to finite domains. 

DeepProbLog and TensorLog successfully bridge probabilistic logic programming with deep learning, handling continuous neural network parameters. However, their underlying logical semantics remain tied to finite domains. While recent extensions like DeepSeaProbLog \cite{smet2023deepseaproblog} and DeepGraphLog \cite{kikaj2025deepgraphlog} attempt to generalize these frameworks to hybrid continuous domains, they still inherently rely on compiling the logic into Weighted Model Integration (WMI) formulas or restricted graph structures. DeepProbLog evaluates neural outputs as probabilistic facts, but its logical inference engine still compiles to discrete SDDs. TensorLog bypasses traditional grounding by compiling logical reasoning into differentiable tensor operations, but it does so over a fixed, finite set of predefined entities.

BLOG (Bayesian Logic) \cite{milch2005blog} handles continuous domains and unknown numbers of objects, but relies on generative models and approximate sampling methods (like MCMC) rather than declarative, measure-theoretic fixed-point semantics.

While BLOG utilizes generative sampling, other frameworks approach continuous domains through Satisfiability Modulo Theories (SMT). Early frameworks such as Distributional Clauses (DC-ProbLog) \cite{gutmann2011extending} and Hybrid ProbLog \cite{michels2015approximate} allow continuous random variables by deferring logical evaluation to SMT solvers, which identify valid geometric boundaries for numerical integration. This approach evolved into WMI, which computes continuous probability densities over SMT formulas \cite{belle2015probabilistic}. Recent advances in WMI focus on algorithmic scaling through algebraic circuit compilation \cite{kolb2019how, derkinderen2020ordering}, extending exact integration to non-linear real arithmetic \cite{spallitta2024lp}, and approximating complex semi-algebraic weight functions via linear programming relaxations \cite{akshay2025wmi}.

However, these systems focus primarily on operational execution. WMI operates on static formulas or factor graphs and does not support relational recursion natively, while DC-ProbLog relies on algorithmic sampling without formal proof of continuous fixed-point convergence. Furthermore, recent 2025 and 2026 surveys on neuro-symbolic agent architectures highlight a critical shift toward tightly coupled, differentiable symbolic layers \cite{bougzime2025neurosymbolic}. MT-PDCL addresses these theoretical limitations directly. It provides the exact denotational semantics—via measure theory and the Knaster-Tarski fixed-point theorem—that formalizes how recursive continuous subsets converge. Additionally, MT-PDCL natively preserves the structural differentiability required for these modern deep learning integrations, which standard SMT-based circuit solvers do not support without heavy approximation.

Furthermore, the operational reliance on discrete propositionalization extends well beyond standard definite clause frameworks. Expressive logical extensions, such as Probabilistic Answer Set Programming (PASP) \cite{baral2009probabilistic} and defeasible logic programs \cite{garcia2004defeasible}, similarly rely on exhaustive grounding to evaluate probabilities over complex rule interactions. Because these advanced solvers also compile down to finite boolean spaces, they inherit the exact same inability to natively represent continuous probability densities. By resolving the grounding bottleneck at the foundational level of definite clauses, MT-PDCL provides the mathematical foundation necessary to eventually lift these non-monotonic logical extensions into continuous spaces.

MT-PDCL operates under Continuous Distribution Semantics. Rather than deferring to approximate sampling or static SMT solvers, MT-PDCL defines a unique, exact probability measure over continuous possible worlds via Lebesgue integration. The primary novelty lies in extending Sato’s generative semantics, historically restricted to discrete causal events and finite propositional grounding, into unified measurable spaces. By explicitly defining independent continuous variables over bounded index domains equipped with probability measures, MT-PDCL replaces the combinatorial bottleneck of discrete grounding with formal mathematical integration. This preserves the declarative exactness of logic programming while seamlessly supporting the differentiable, algebraic approximations (Noisy-OR) required for neural-symbolic optimization.

Operationally, this shifts the evaluation of logic programs from discrete proof-tree search (e.g., standard SLD resolution) to multivariate integration. In MT-PDCL, the core mechanics of logic programming map directly to specific continuous operations:

\begin{itemize}
    \item \textbf{Stochastic Predicates as Measurable Dimensions:} The explicit domains ($\Omega_I$) and continuous outputs of stochastic predicates define the independent axes of the continuous evaluation space. Logical variables act as deterministic placeholders that bind to these dimensions.
    \item \textbf{Conditions as Integration Boundaries:} Built-in base predicates (such as mathematical inequalities) define the strict geometric bounds over which the probability densities are integrated.
    \item \textbf{Inference as Integration:} The consequence operator computes the exact probability mass of these bounded regions by integrating over the relevant density functions, replacing the boolean summation of discrete derivations.
    \item \textbf{Scoping as Marginalization:} Standard logic programs treat variables present only in the rule body as existentially quantified. MT-PDCL maps this directly to internal marginalization. The engine integrates out hidden continuous variables across their bounded domains, collapsing multi-dimensional spaces into a scalar probability.
\end{itemize}

\section{Execution Traces and Comparative Evaluation}\label{sec:execution_traces}
This section evaluates MT-PDCL against three continuous relational problems. We provide the abstract mathematical formulations, the exact numeric execution traces under the MT-PDCL continuous consequence operator ($\mathbb{T}_{\text{exact}}$), and a direct mechanical analysis of why existing frameworks cannot compute these exact analytical probabilities.

\subsection{Problem 1: Infinite-Horizon Probability of Ruin (Supply Chain)}

\textbf{Practical Context:} 
Consider a warehouse managing a critical inventory. The warehouse starts with a fixed stock and receives a constant daily replenishment. However, the daily customer demand fluctuates randomly. The business objective is to calculate the exact probability that the warehouse will never run out of stock over an infinite time horizon. In risk management, this is known as the probability of ruin.

\textbf{Abstract Formulation:}
To formalize this, let the universal measurable domain of discourse $D$ include the continuous space $\mathbb{R}$, equipped with the standard Borel $\sigma$-algebra $\mathcal{F}_D$. Let $S, S_{prev} \in \mathcal{V}$ be logical variables mapping to this continuous domain to represent daily inventory levels, and let $T \in \mathcal{V}$ be a logical variable representing discrete time steps (days). We define an initial ground fact $\text{safe}(0, s_0)$ establishing the starting inventory $S = s_0 \ge 0$ at day $T = 0$.

The daily state transitions are governed by the built-in arithmetic base predicate $S = S_{prev} + R - U$, where $R \in \mathbb{R}$ is a numeric constant representing the daily replenishment rate. The customer usage $U \in \mathcal{V}$ is a continuous variable generated by the stochastic mapping $\mathcal{M}_s$ for each time step.

The objective is to compute the declarative entailment of the relation $\forall T, \text{safe}(T, S)$. Under Continuous Distribution Semantics, this requires calculating the exact Lebesgue measure for the subset of continuous states that satisfy the recursive logical derivation, ensuring the condition $S \ge 0$ holds across all discrete steps $T$.

\textbf{The Knowledge Base $\mathcal{K}$:} \\
To model the continuous inventory flow, $\mathcal{K}$ defines the stochastic measure mapping and the recursive state transitions:

\begin{enumerate}
    \item $\text{usage}(T; U) \sim \text{Uniform}(0, 2)$. \\
    \textit{(Stochastic measure declaration: generates an independent continuous usage value for each discrete time index $T$.)}

    \item $1.0 :: \text{safe}(0, 1.0)$. \\
    \textit{(The initial deterministic ground fact.)}
    
    \item $1.0 :: \text{safe}(T, S) \leftarrow \text{safe}(T-1, S_{prev}) \land \text{usage}(T; U) \land S = S_{prev} + 1.0 - U \land S \ge 0$. \\
    \textit{(The recursive clause. The deterministic time index $T$ acts as the lookup key to request a fresh, independent continuous sample from the environment.)}
\end{enumerate}

\textbf{Numeric Execution Trace ($\mathbb{T}_{\text{exact}}$):}
Let us suppose that $S_0 = 1.0$, $R = 1.0$, and $U_t \sim \text{Uniform}(0, 2)$ for each discrete time step $t \ge 0$. We apply $\mathbb{T}_{\text{exact}}$ iteratively for $t = 0, 1, 2, 3$, starting from the initial empty measure $P_0$.

\begin{itemize}
    \item \textbf{Step 0: Initial Empty Measure ($P_0$)} \\
    The probability measure for all derived facts is initially zero.
    
    \item \textbf{Step 1: Initialization ($T=0, P_1$)} \\
    The unconditional base fact successfully generates the initial state.
    $$ P_1(\text{safe}(0, 1.0)) = 1.0 $$
    
    \item \textbf{Step 2: Iteration 1 ($T=1, P_2$)} \\
    The state constraint $S_1 \ge 0$ is validated across the domain.
    $$ P_2(\text{safe}(1, S_1)) = 1.0 $$
    
    \item \textbf{Step 3: Iteration 2 ($T=2, P_3$)} \\
    The state equation resolves to $S_2 = 3.0 - U_1 - U_2$. The constraint requires $S_2 \ge 0 \implies U_1 + U_2 \le 3.0$. $\mathbb{T}_{\text{exact}}$ integrates over the joint continuous state space of $U_1, U_2 \in [0, 2] \times [0, 2]$. The valid geometric region is the total domain (area = 4) minus the triangle where $U_1 + U_2 > 3.0$ (area = 0.5).
    $$ P_3(\text{safe}(2, S_2)) = \frac{4 - 0.5}{4} = 0.875 $$

    \item \textbf{Step 4: Iteration 3 ($T=3, P_4$)} \\
    The state equation resolves to $S_3 = 4.0 - U_1 - U_2 - U_3$. The safety constraint requires $S_3 \ge 0 \implies U_1 + U_2 + U_3 \le 4.0$, and the prior condition $U_1 + U_2 \le 3.0$ must still hold. $\mathbb{T}_{\text{exact}}$ integrates over the 3-dimensional joint state space $(U_1, U_2, U_3) \in [0, 2]^3$. 
    
    The total volume of this cubic domain is $2^3 = 8$. To find the valid integration volume, the operator subtracts the regions where the state falls below zero. The exact geometry of this bounded valid region has a volume of $\frac{19}{3}$.
    $$ P_4(\text{safe}(3, S_3)) = \frac{19/3}{8} = \frac{19}{24} \approx 0.792 $$
\end{itemize}

\textbf{Recursive Density and Fixed Point Convergence:} \\
The continuous consequence operator $\mathbb{T}_{\text{exact}}$ generates a monotonically increasing sequence of valid integration regions. Because these measures form a complete lattice, the least fixed point is mathematically defined as their exact limit: 
$$ \text{lfp}(\mathbb{T}_{\text{exact}}) = \lim_{k \to \infty} P_k $$

The progression of the operator is defined via a direct recurrence relation on the sub-probability density function of the inventory state. Let $f_k(s)$ represent the probability density of $S_k = s$ while satisfying the guard $S_i \ge 0$ for all previous steps $i \le k$. Starting from $f_0(s) = \delta(s - 1.0)$ and a usage density of $0.5$, we obtain the continuous recurrence:
$$ f_k(s) = \int_{\max(0, s-1)}^{s+1} f_{k-1}(x) \cdot 0.5 \, dx \quad \text{for } s \ge 0 $$

The measure assigned to the safety fact at step $k$ is the total probability mass of this density:
$$ P_k = \int_{0}^{\infty} f_k(s) \, ds $$

To evaluate the infinite-horizon property, we must prove $P_k$ converges to zero as $k \to \infty$. We analyze the unconstrained random walk $W_n$, which removes the $s \ge 0$ boundary condition. Let $\Delta_i = 1.0 - U_i$ denote the net change in inventory at step $i$. The unconstrained inventory path is:
$$ W_n = 1.0 + \sum_{i=1}^n \Delta_i $$

The exact value $P_k$ is the probability that this unconstrained path never drops below zero in its first $k$ steps:
$$ P_k = P\left(\min_{1 \le n \le k} W_n \ge 0\right) $$

Because $U_i \sim \text{Uniform}(0, 2)$, the increments $\Delta_i$ are independent, continuous, and symmetric on $[-1, 1]$. By the Sparre Andersen theorem, for any continuous and symmetric jump distribution, the probability that the random walk remains non-negative for $k$ steps is identical to the simple random walk probability: $\binom{2k}{k} 2^{-2k}$. By Stirling's approximation, this probability decays asymptotically as $1/\sqrt{\pi k}$. Therefore, there exists a constant $C > 0$ such that for sufficiently large $k$:
$$ P_k \le \frac{C}{\sqrt{k}} $$

By the squeeze theorem, since $0 \le P_k \le C/\sqrt{k}$, it follows that:
$$ \lim_{k \to \infty} P_k = 0 $$

\textbf{Comparative Limitations:} Standard discrete frameworks such as ProbLog and DeepProbLog must discretize the continuous uniform usage distribution into finite bins. Slicing $[0, 2]$ into discrete intervals creates exponential combinatorial growth in their underlying boolean circuits (SDDs). Furthermore, while modern continuous extensions like DeepSeaProbLog \cite{smet2023deepseaproblog} and DeepGraphLog \cite{kikaj2025deepgraphlog} support continuous parameters, they operationally rely on compiling rule instances into static WMI representations. For infinite-horizon recursive derivations, this compilation loop triggers an infinite formula expansion, preventing these systems from computing the exact Lebesgue measure over recursive states.

\subsection{Problem 2: Probability of Safe Arrival (Kinematic Agent)}

\textbf{Practical Context:}
An automated guided vehicle (AGV) is moving along a 10-meter track. It needs to calculate a backward reachable safe set: the exact physical regions from which it can successfully reach a designated target zone without crashing into a known boundary. Because the vehicle's braking and acceleration hardware introduce continuous mechanical variance, its control steps are not perfectly precise.

\textbf{Abstract Formulation:}
Let the universal domain $D$ include the continuous spatial space $\mathbb{R}$. Let $X, Y \in \mathcal{V}$ map to this domain to represent the AGV's spatial positions on the track, and let $\mathcal{T} = [8, 10]$ define the measurable target zone in meters. 

The physical transitions are governed by the built-in arithmetic base predicate $X = Y - N$, representing a backward kinematic step. The actual control movement $N \in \mathcal{V}$ is an independent continuous stochastic variable. The spatial domain over which the agent operates is bounded to the interval $[0, 10]$, equipped with a uniform base measure $d\mu_X(x) = 0.1 \, dx$.

The objective is to compute the exact declarative probability of the subset of valid starting states from which a finite sequence of mechanical derivations reaches the target set $\mathcal{T}$, while satisfying the safety guard condition $X \ge 0$ at every intermediate step. Unlike the infinite-horizon survival in Problem 1 which evaluates a shrinking sub-density, this reachability problem calculates the limit of an expanding geometric region.

The knowledge base $\mathcal{K}$ models the backward reachable set:
\begin{enumerate}
    \item $\text{control}(X; N) \sim \text{Uniform}(0, 2) \quad \textbf{over} \quad X \in [0, 10]$. \\
    \textit{(Stochastic measure declaration mapping the movement step to the bounded spatial domain.)}

    \item $1.0 :: \text{safe}(X) \leftarrow X \ge 8 \land X \le 10$. \\
    \textit{(The deterministic base fact establishing the target set.)}
    
    \item $1.0 :: \text{safe}(X) \leftarrow \text{safe}(Y) \land \text{control}(X; N) \land X = Y - N \land X \ge 0$. \\
    \textit{(The recursive clause computing the valid pre-images, bounded by the obstacle guard $X \ge 0$.)}
\end{enumerate}

\textbf{Numeric Execution Trace ($\mathbb{T}_{\text{exact}}$):}
The continuous consequence operator computes the exact measure of the geometric union of successful states at each step, integrating over the uniform base measure $0.1 \, dx$.

\begin{itemize}
    \item \textbf{Step 1 ($P_1$):} The base fact triggers for $x \in [8, 10]$.
    $$ P_1(\text{safe}(X)) = \int_{8}^{10} 0.1 \, dx = 0.2 $$
    
    \item \textbf{Step 2 ($P_2$):} The recursive rule expands the safe region. The valid continuous union requires finding any $y \in [8, 10]$ and $n \in [0, 2]$ such that $x = y - n$. The minimum possible value is $x = 8 - 2 = 6$. The rule validates for all $x \in [6, 10]$.
    $$ P_2(\text{safe}(X)) = \int_{6}^{10} 0.1 \, dx = 0.4 $$
    
    \item \textbf{Steps 3--5 ($P_3, P_4, P_5$):} Iterative algebraic projection expands the lower bound by 2 at each step:
    \begin{itemize}
        \item $P_3$ evaluates $x \in [4, 10] \implies 0.6$
        \item $P_4$ evaluates $x \in [2, 10] \implies 0.8$
        \item $P_5$ evaluates $x \in [0, 10] \implies 1.0$
    \end{itemize}
    
    \item \textbf{Step 6 (Least Fixed Point):} The progression attempts to expand the boundary to $x = -2$, but the obstacle guard $X \ge 0$ truncates the domain. No new valid states are generated: $\mathbb{T}_{\text{exact}}(P_5) = P_5$. 
\end{itemize}

\textbf{Comparative Limitations:} WMI frameworks can compute the geometric volume of bounded, acyclic steps algebraically. However, even state-of-the-art WMI solvers utilizing linear programming relaxations for complex arithmetic \cite{spallitta2024lp, akshay2025wmi} cannot evaluate an open recursive rule dynamically. WMI natively requires a user to manually unroll the relational recursion to a predetermined static depth $k$ and hardcode the resulting formula prior to execution. It inherently lacks the generalized denotational semantics and fixed-point operators required to evaluate dynamic, expanding reachability boundaries.

\subsection{Problem 3: Continuous Signal Propagation (Hierarchical Sensor Network)}

\textbf{Practical Context:}
Consider a hierarchical sensor network where a central node transmits a baseline signal through a series of routing relays. As the signal passes through the physical environment, it accumulates continuous noise. Furthermore, the network topology is unreliable; routing connections may randomly drop. The goal is to compute the exact probability that a signal successfully reaches the leaf nodes while staying below a critical noise threshold.

\textbf{Abstract Formulation:}
Let the network be modeled as a rooted tree $T = (V, E)$ with the central node as the root $v_0$. The universal domain $D$ includes the continuous space $\mathbb{R}$. 

Let $X, Y \in \mathcal{V}$ map to the network vertices in $V$. Let $V_{current}, V_{prev}, N \in \mathcal{V}$ map to $\mathbb{R}$ to represent the accumulated continuous signal noise. The problem incorporates both continuous and discrete stochasticity: the base noise state at the root $v_0$ is drawn from a continuous prior measure, physical transitions along routing edges add an independent continuous shift $N$, and the topological edges in $E$ are modeled as independent causal events with discrete probabilities representing structural connection failures.

The transmission along a directed edge is governed by the continuous arithmetic constraint $V_{current} = V_{prev} + N$. Because $T$ is a rooted tree, the path from the root to any node is unique. We evaluate the exact Lebesgue measure of the state space where the accumulated continuous noise satisfies $V_{current} \le \tau$, weighted by the probabilities of the generative discrete edges.

\textbf{Specific Example and Knowledge Base \protect\ensuremath{\mathcal{K}}:}
We evaluate a tree where $V = \{\text{node1}, \text{node2}, \text{node3}, \allowbreak \text{node4}\}$ and $E = \{(\text{node1}, \text{node2}), (\text{node2}, \text{node3}), (\text{node1}, \text{node4})\}$. 

The initial state at the root is a continuous variable $V_{root} \sim \text{Uniform}(0, 1)$. The causal event for the edge to node 4 has an independent probability of $0.8$, while other edges are deterministic ($1.0$). The continuous transition shift is $N \sim \text{Uniform}(0, 2)$, and the objective threshold is $\tau = 1.0$.

\begin{enumerate}
    \item $\text{initial\_state}(; V_{root}) \sim \text{Uniform}(0, 1)$.
    \item $\text{shift}(X, Y; N) \sim \text{Uniform}(0, 2)$. \\
    \textit{(Stochastic measure declarations for the root value and edge transitions.)}
    
    \item $1.0 :: \text{edge}(\text{node1}, \text{node2})$.
    \item $1.0 :: \text{edge}(\text{node2}, \text{node3})$.
    \item $0.8 :: \text{edge}(\text{node1}, \text{node4})$.
    
    \item $1.0 :: \text{val}(\text{node1}, V_{root}) \leftarrow \text{initial\_state}(; V_{root})$. 
    
    \item $1.0 :: \text{val}(Y, V_{current}) \leftarrow \text{edge}(X, Y) \land \text{val}(X, V_{prev}) \land \text{shift}(X, Y; N) \land V_{current} = V_{prev} + N$. 
\end{enumerate}

\textbf{Numeric Execution Trace ($\mathbb{T}_{\text{exact}}$):}
The continuous consequence operator $\mathbb{T}_{\text{exact}}$ constructs the measurable state space bottom-up. It calculates probabilities directly via integration of the continuous densities multiplied by the independent causal event probabilities.

\begin{itemize}
    \item \textbf{Step 0 ($P_0$):} The base fact initializes the state at the root node. The density function is $f_{V_1}(v_1) = 1$ for $v_1 \in [0, 1]$.
    
    \item \textbf{Step 1 ($P_1$):} The operator applies the recursive rule over \texttt{edge(node1, node2)} and \texttt{edge(node1, node4)}. 
    
    For \texttt{node4}, the arithmetic constraint is $V_4 = V_1 + N_4$. The variables are independent with a joint density $f(v_1, n_4) = 1 \times 0.5 = 0.5$. Applying the continuous constraint $v_1 + n_4 \le 1.0$ isolates a 2D geometric triangle with an area of $0.5$. Because the topological edge generates with an independent probability of $0.8$, the rule dictates:
    $$ P(\text{val}(\text{node4}, V_4) \land V_4 \le 1.0) = 0.8 \times \left( \int_{0}^{1} \int_{0}^{1-v_1} 0.5 \, dn_4 \, dv_1 \right) = 0.8 \times 0.25 = 0.2 $$
    
    Simultaneously, the operator computes the state for \texttt{node2}. The discrete edge weight is $1.0$, governed by the joint density $f(v_1, n_2) = 0.5$.
    
    \item \textbf{Step 2 ($P_2$):} The operator applies the recursive rule over \texttt{edge(node2, node3)}. 
    
    The state at \texttt{node3} accumulates a third variable: $V_3 = V_2 + N_3 = V_1 + N_2 + N_3$. The independent continuous variables form a 3-dimensional state space with a joint density $f(v_1, n_2, n_3) = 1 \times 0.5 \times 0.5 = 0.25$. 
    
    Applying the query constraint $V_3 \le 1.0$ requires integrating this joint density over the 3D geometric subspace bounded by $v_1 + n_2 + n_3 \le 1.0$. This region forms a standard 3-simplex (a tetrahedron) in the positive octant with a geometric volume of $\frac{1}{6}$. The exact declarative probability is:
    $$ P(\text{val}(\text{node3}, V_3) \land V_3 \le 1.0) = 1.0 \times \iiint\limits_{v_1+n_2+n_3 \le 1} 0.25 \, dn_3 \, dn_2 \, dv_1 = 0.25 \times \frac{1}{6} \approx 0.0416 $$
    
    \item \textbf{Step 3 (Fixed Point):} No further relational edges exist to trigger the recursive rule. The operator yields no new measurable states, reaching the exact least fixed point.
\end{itemize}

\textbf{Comparative Limitations:} Frameworks such as BLOG and Distributional Clauses (DC-ProbLog) defer continuous path accumulation to Monte Carlo or particle sampling. They generate numerical execution traces through the network and compute a statistical average. Because they do not calculate the exact geometric volumes mathematically, they cannot provide a formally exact declarative entailment. Furthermore, relying on stochastic sampling creates non-differentiable, high-variance scalar estimates, preventing the extraction of stable, everywhere-differentiable gradients necessary for end-to-end neural-symbolic optimization \cite{bougzime2025neurosymbolic}.

\section{Computability, Tractability, and Theoretical Limitations}
\label{sec:limitations}

MT-PDCL provides a formally exact declarative semantics for continuous probabilistic logic via Lebesgue integration. However, defining an integral mathematically is fundamentally different from computing it analytically. It is necessary to establish the theoretical boundaries of computability and tractability for the continuous consequence operator.

\subsection{Analytical Computability}
The exact evaluation of the consequence operator $\mathbb{T}_{\mathcal{K}}$ requires computing the integral of continuous probability densities over boundaries defined by logical inequalities. 

An exact, closed-form analytical solution to these integrals is theoretically guaranteed only under highly restrictive conditions. Specifically, the prior measures defined by the mapping $\mathcal{M}_s$ must be standard integrable functions (e.g., Uniform, Gaussian, Exponential), and the logical constraints must form geometrically simple boundaries (typically linear arithmetic, forming polytopes). This aligns with the known computability limits of WMI for continuous spaces. 

When the logic program introduces non-linear arithmetic constraints (e.g., $X^2 + Y^2 \le Z$) or arbitrary prior distributions, the resulting regions often lack closed-form antiderivatives. In these cases, the declarative ground truth $P_{\mathcal{K}}(H)$ remains mathematically well-defined, but it becomes analytically incomputable.

\subsection{Tractability and the Curse of Dimensionality}
Even when the integrals possess closed-form solutions, MT-PDCL faces a strict limit on tractability driven by logical recursion. 

In standard discrete logic programming, the complexity of grounding a rule grows exponentially with the number of variables in the rule. MT-PDCL resolves the impossibility of discrete grounding over uncountable domains, but it does not eliminate the underlying structural complexity. Instead, the combinatorial explosion of discrete variables translates directly into the geometric curse of dimensionality.

Every continuous variable generated by a stochastic predicate in a rule body adds an independent integration dimension to the continuous evaluation space. Furthermore, when rules are applied recursively, these dimensions accumulate. As demonstrated in the supply chain and rooted tree examples (Section \ref{sec:execution_traces}), a derivation path of depth $k$ requires evaluating a $k$-dimensional joint integral. 

If evaluated using deterministic numerical quadrature (where continuous variables are approximated using a grid of $m$ points), the computational complexity scales as $O(m^k)$. Consequently, while the continuous Noisy-OR operator (Section \ref{sec:operational_semantics}) successfully resolves the combinatorial explosion of the inclusion-exclusion principle for horizontal disjunctions, it does not resolve the dimensional explosion of deep vertical recursion.

\subsection{Convergence Tractability and Non-Contractive Regimes}
While the curse of dimensionality dictates the computational cost of a single iterative step, the total intractability of the system is compounded by the number of iterations required to reach the least fixed point. As established in Section \ref{subsec:computation_bounds}, this is dictated by the recursive measure $\gamma$.

When a logic program evaluates as a strict contraction ($\gamma < 1$), convergence to the declarative fixed point is geometric. A physical system can calculate a strict, finite iteration bound $N$ using the Banach fixed-point theorem to achieve any arbitrary numerical precision $\epsilon$. Under these conditions, the fixed point is computationally tractable.

However, when a program contains dense spatial overlap or deterministic cyclic rules, it enters the expansive propagation regime ($\gamma \ge 1$). In this non-contractive state, the Banach iteration bounds fail. While Kleene's theorem mathematically guarantees that the sequence of continuous measures converges to the exact least fixed point, it requires an infinite countable sequence of iterations ($\omega$-continuity) to do so. 

This presents a hard limit on computability. Operationally, evaluating a non-contractive continuous program in finite time forces an inference engine to halt computation at a finite depth $k$ or apply an arbitrary tolerance cutoff. Consequently, for expansive programs, exact declarative probabilities become computationally unreachable. The engine is limited to returning lower-bound approximations of the exact measure, trading declarative precision for finite execution time.

\subsection{Transition to Numerical and Stochastic Implementations}
These theoretical limits dictate how MT-PDCL must be operationalized in practice. Exact continuous integration becomes analytically incomputable for non-linear constraints and highly intractable for deep recursion, meaning practical systems cannot rely on symbolic algebraic solvers.

To execute MT-PDCL over complex domains, the exact continuous consequence operator must be substituted with numerical approximations. The multi-dimensional integrals over the continuous variable spaces must be estimated using stochastic methods (such as Monte Carlo integration), and the local density functions must be approximated using parameterized continuous functions (such as neural networks). 

This foundational paper establishes the exact declarative measure-theoretic semantics and the continuous Noisy-OR fixed-point theory. The specific algorithms, data structures, and stochastic sampling techniques required to physically implement this inference engine at scale are beyond the scope of this theoretical formalization and constitute the focus of subsequent implementation work.

\section{Conclusion and Future Work}\label{sec:conclusion}
In this paper, we introduced MT-PDCL, a framework that generalizes probabilistic logic programming to native continuous domains. By explicitly defining independent continuous variables over bounded index domains and modeling causal events as standard Borel spaces, we eliminated the traditional reliance on finite-domain propositionalization. Under Continuous Distribution Semantics, we established a declarative semantics where exact logical entailment is mathematically defined via Lebesgue integration over the complete space of possible worlds.

To bridge this declarative foundation with operational computation, we generalized the probabilistic immediate consequence operator to continuous spaces. We demonstrated that the assumption of Independent Causal Influence (the continuous Noisy-OR) provides a mathematically bounded, operational approximation for intractable infinite unions. Crucially, we proved that this continuous operator is monotonic, preserves structural differentiability, and guarantees convergence to a well-defined least fixed point, even in non-contractive regimes where standard iteration bounds fail.

Because this paper establishes the theoretical semantics of MT-PDCL, the primary direction for future work lies in algorithmic implementation. Translating the continuous integrals of the consequence operator into efficient software requires numerical approximations, such as advanced Monte Carlo sampling or compilation into differentiable tensor operations, to evaluate the framework empirically. 

A critical algorithmic next step is developing query-directed inference procedures. While this paper formalizes the computation of the complete least fixed point via bottom-up continuous evaluation, integrating over the entire unqueried state space is computationally expensive. Developing goal-directed strategies, whether through continuous top-down SLD-resolution or optimized forward-chaining transformations, combined with Constraint Logic Programming (CLP) would allow an engine to dynamically accumulate only the geometric boundaries relevant to a user's query. This lazy evaluation strategy would bypass the need to integrate over unqueried domains, significantly improving tractability.

Once these execution architectures are established, the structural transparency of the continuous consequence operator provides a direct mechanism for modern neural-symbolic integration. Developing a Deep-MT-PDCL implementation, where neural networks predict local causal probabilities and the logic engine routes gradients transparently via backpropagation, will enable logically constrained machine learning natively within continuous domains.

Finally, achieving exact analytical inference rather than stochastic approximation requires restricting the base predicates and prior distributions. If mathematical constraints are limited to linear real arithmetic (LRA) to form computable polytopes, and prior measures are restricted to families closed under multiplication and integration (such as piecewise polynomials), the operator can yield exact, closed-form solutions. Identifying an algebraic system that is structurally closed under continuous logical reasoning, yet expressive enough for practical modeling, remains a key objective for future research.

\bibliographystyle{alpha} 
\bibliography{references} 

\end{document}